\def \A {\mathcal{A}}
\def \C {\mathcal{C}}
\def \K {\mathcal{K}}
\def \H {\mathcal{H}}
\def \F {\mathcal{F}}
\def \T {\mathcal{T}}
\def \E {\mathbb{E}}
\def \U {\mathcal{U}}
\def \u {\mathbf{u}}
\def \x {\mathbf{x}}
\def \c {\mathbf{c}}
\def \s {\mathbf{s}}
\def \w {\mathbf{w}}
\def \ze {\mathbf{0}}
\def \g {\mathbf{g}}
\def \v {\mathbf{v}}
\def \y {\mathbf{y}}
\DeclareMathOperator*{\reg}{R}
\DeclareMathOperator*{\Dreg}{DR}
\DeclareMathOperator*{\argmin}{argmin}
\newtheorem{thm}{Theorem}
\newtheorem{lem}{Lemma}
\newtheorem{assum}{Assumption}
\newtheorem{cor}{Corollary}
\begin{document}

\title{Non-stationary Delayed Online Convex Optimization: From Full-information to Bandit Setting}

\author{%
 \name Yuanyu Wan \email wanyy@zju.edu.cn
 \AND
 \name Chang Yao \email changy@zju.edu.cn
 \AND
\name Yitao Ma \email mayt@zju.edu.cn
 \AND
 \name Mingli Song \email brooksong@zju.edu.cn\\
 \addr School of Software Technology, Zhejiang University, Ningbo, China\\
 \addr State Key Laboratory of Blockchain and Data Security, Zhejiang University, Hangzhou, China
 \AND
  \name Lijun Zhang \email zhanglj@lamda.nju.edu.cn\\
 \addr National Key Laboratory for Novel Software Technology, Nanjing University, Nanjing, China\\
 \addr School of Artificial Intelligence, Nanjing University, Nanjing, China%
}

\editor{}

\maketitle

\begin{abstract}
Although online convex optimization (OCO) under arbitrary delays has received increasing attention recently, previous studies focus on stationary environments with the goal of minimizing static regret. In this paper, we investigate the delayed OCO in non-stationary environments, and choose dynamic regret with respect to any sequence of comparators as the performance metric. To this end, we first propose an algorithm called Mild-OGD for the full-information case, where delayed gradients are available. The basic idea is to maintain multiple experts in parallel, each performing a gradient descent step with different learning rates for every delayed gradient according to their arrival order, and utilize a meta-algorithm to track the best one based on their delayed performance. Despite the simplicity of this idea, our novel analysis shows that the dynamic regret of Mild-OGD can be automatically bounded by $O(\sqrt{\bar{d}T(P_T+1)})$ under the in-order assumption and $O(\sqrt{dT(P_T+1)})$ in the worst case, where $\bar{d}$ and $d$ denote the average and maximum delay respectively, $T$ is the time horizon, and $P_T$ is the path-length of comparators. Moreover, we demonstrate that the result in the worst case is optimal by deriving a matching lower bound. Finally, we develop a bandit variant of Mild-OGD for a more challenging case with only delayed loss values. Interestingly, we prove that under a relatively large amount of delay, our bandit algorithm even enjoys the best dynamic regret bound of existing non-delayed bandit algorithms.
%\footnote{A preliminary version of this paper, which only contains results in the full-information setting, was presented at the 41st International Conference on Machine Learning in 2024 \citep{Wan-ICML-24}.} %In this paper, we have significantly enriched the preliminary version by adding the extension into the bandit setting.}
%and establish a dynamic regret bound of $O((\sqrt{n}T^{3/4}+\sqrt{dT})\sqrt{P_T+1})$, where $n$ denotes the dimensionality. Interestingly, this bound can even match the dynamic regret bound of existing non-delayed bandit algorithms, as long as $d$ does not exceed $O(n\sqrt{T})$.
\end{abstract}
\begin{keywords}
online convex optimization, arbitrary delays, dynamic regret, full-information setting, bandit setting
\end{keywords}

\section{Introduction}
Online convex optimization (OCO) has become a popular paradigm for solving sequential decision-making problems \citep{Online:suvery,Hazan2016,Survey-Orabona}. Specifically, according to the protocol of OCO, an online player acts as the decision maker that needs to choose a decision $\x_t$ from a convex set $\K\subseteq\mathbb{R}^n$ at each round $t\in[T]$. After the decision $\x_t$ is committed, the player suffers a loss $f_t(\x_t)$, where $f_t(\x):\mathbb{R}^n\mapsto\mathbb{R}$ is a convex function selected by an adversary. To improve the performance in subsequent rounds, the player needs to update the decision by exploiting information about loss functions in previous rounds. According to the type of available information, OCO can be divided into two categories: the full-information setting where the player can observe the entire loss function, and the bandit setting where only the incurred loss can be observed. Over the past decades, plenty of algorithms and theories have been introduced to guide the player for both full-information \citep{Zinkevich2003,Shai07,Hazan_2007} and bandit \citep{OBO05,Agarwal2010_COLT,Saha_2011} settings.

%\footnote{Despite the availability of entire functions, most of existing full-information algorithms only need to know the gradient of function at the played decision.} 

However, there exists a common assumption that the information about each function $f_t(\x)$ is revealed at the end of round $t$, which is not necessarily satisfied in many real applications. For example, in online advertisement \citep{McMahan2013,He2014_ADKDD}, each loss function depends on whether a user clicks an ad or not, which may not be decided even when the user has observed the ad for a long period of time. To tackle this issue, there has been a surge of research interest in OCO with arbitrarily delayed full-information \citep{Joulani13,Quanrud15,NeurIPS22-Wan,DOFW-Journal} and bandit feedback \citep{ICML20_Mertikopoulos,Bistritz-JMLR22,Wan-NIPS24}. Specifically, the corresponding information about $f_t(\x)$ is revealed at the end of round $t+d_t-1$, where $d_t\geq 1$ is an integer and denotes the delay. Unfortunately, these studies focus on developing algorithms to minimize the static regret of the player, i.e., $\reg(T)=\sum_{t=1}^Tf_t(\x_t)-\min_{\x\in\K}\sum_{t=1}^Tf_t(\x)$, which is only meaningful for stationary environments with at least one fixed decision minimizing the cumulative loss well. Thus, their algorithms cannot handle non-stationary environments, where the best decision is drifting over time.

To address this limitation, we investigate the delayed OCO with a more suitable performance metric called dynamic regret \citep{Zinkevich2003}: \[\Dreg(\u_1,\dots,\u_T)=\sum_{t=1}^Tf_t(\x_t)-\sum_{t=1}^Tf_t(\u_t)\] which compares the player against any sequence of changing comparators $\u_1,\dots,\u_T\in\K$. Let $P_T=\sum_{t=2}^T\|\u_t-\u_{t-1}\|_2$ denote the path-length of comparators. In the case with non-delayed full-information feedback, there exists a well-known algorithm called Ader \citep{Zhang18_ader} that enjoys the optimal dynamic regret bound of $O(\sqrt{T(P_T+1)})$. Moreover,  Ader has also been extended into the case with non-delayed bandit feedback, where an $O(\sqrt{n}T^{3/4}\sqrt{P_T+1})$ dynamic regret bound is achieved \citep{Zhao-JMLR22}. Thus, it is natural to ask whether these algorithms and dynamic regret bounds can be generalized to the case with arbitrary delays. This paper provides 
an affirmative answer by presenting the following results
\begin{compactitem}
\item First, we propose an algorithm called \underline{m}ult\underline{i}p\underline{l}e \underline{d}elayed \underline{o}nline \underline{g}radient \underline{d}escent (Mild-OGD) for the case with delayed full-information feedback, which can be viewed as a delayed variant of Ader. Let $\bar{d}=\sum_{t=1}^Td_t/T$ and $d=\max\{d_1,\dots,d_T\}$ denote the average and maximum delay, respectively. We show that its dynamic regret can be automatically bounded by $O(\sqrt{\bar{d}T(P_T+1)})$ under the in-order assumption (i.e., delays do not change the arrival order of feedback) and $O(\sqrt{dT(P_T+1)})$ in the worst case. A matching $\Omega(\sqrt{dT(P_T+1)})$ lower bound is also derived to demonstrate the optimality of Mild-OGD in the worst case.
\item Second, we develop a variant of Mild-OGD, which is called \underline{m}ult\underline{i}p\underline{l}e \underline{d}elayed \underline{b}andit \underline{g}radient \underline{d}escent (Mild-BGD), for the case with only delayed bandit feedback. We prove that Mild-BGD can achieve an $O((\sqrt{n}T^{3/4}+(n\bar{d})^{1/3}T^{2/3})\sqrt{P_T+1})$ dynamic regret bound under the in-order assumption, and an $O((\sqrt{n}T^{3/4}+\sqrt{dT})\sqrt{P_T+1})$ dynamic regret bound in the worst case. Interestingly, these bounds can even match the dynamic regret bound of existing non-delayed bandit algorithms under a relatively large amount of delay, i.e., $\bar{d}=O(\sqrt{n}T^{1/4})$ and $d=O(n\sqrt{T})$ respectively.
\end{compactitem}
% \subsection{Main Contributions}  
% \begin{compactitem}
% \item First, we propose an algorithm called \underline{m}ult\underline{i}p\underline{l}e \underline{d}elayed \underline{o}nline \underline{g}radient \underline{d}escent (Mild-OGD) for the case with delayed full-information feedback, which can be viewed as a delayed variant of Ader. Let $\bar{d}$ and $d$ denote the average and maximum delay, respectively. Our analysis shows that the dynamic regret of Mild-OGD can be automatically bounded by $O(\sqrt{\bar{d}T(P_T+1)})$ under the in-order assumption,\footnote{The in-order assumption implies that delays do not change the arrival order of gradients.} and $O(\sqrt{dT(P_T+1)})$ in the worst case. A matching lower bound is also derived to demonstrate the optimality of Mild-OGD in the worst case.
% \item Second, we develop a variant of Mild-OGD for the case with only delayed bandit feedback. We show that this bandit algorithm can achieve an $O((\sqrt{n}T^{3/4}+(n\bar{d})^{1/3}T^{2/3})\sqrt{P_T+1})$ dynamic regret bound under the in-order assumption, and an $O((\sqrt{n}T^{3/4}+\sqrt{dT})\sqrt{P_T+1})$ dynamic regret bound in the worst case. Interestingly, these bounds can even match the dynamic regret bound of existing non-delayed bandit algorithms under a relatively large amount of delay, i.e., $\bar{d}=O(\sqrt{n}T^{1/4})$ and $d=O(n\sqrt{T})$ respectively.
% \end{compactitem}
% \subsection{Technical Challenges}

Note that the success of Ader \citep{Zhang18_ader} in minimizing dynamic regret mainly owes to a two-level framework---running multiple instances of online gradient descent (OGD) \citep{Zinkevich2003} as experts and tracking the best one via a meta-algorithm. Thus, our Mild-OGD and its bandit variant also adopt the two-level framework, but do face some new challenges when choosing the expert-algorithm and meta-algorithm for handling arbitrary delays. First, although there exists a delayed variant of OGD \citep{Quanrud15}, its dynamic regret under arbitrary delays is still unclear. Moreover, its key idea is to perform a gradient descent step by using the sum of gradients received in each round, which ignores the arrival order of gradients. As a result, it cannot benefit from the in-order property, which is critical for reducing the dynamic regret according to our novel analysis. To this end, we propose a new delayed variant of OGD to be the expert-algorithm of our Mild-OGD, which performs a gradient descent step for each delayed gradient according to their arrival order. Then, since the performance of the expert-algorithm will also be arbitrarily delayed, the meta-algorithm of our Mild-OGD is built upon delayed Hedge---a technique for prediction with delayed expert advice \citep{DelayHedge}, instead of simply using that of Ader.

Furthermore, compared with Mild-OGD, its bandit variant needs to exploit the delayed feedback more carefully. Inspired by \citet{Zhao-JMLR22}, a natural idea is to simply replace the gradient utilized in Mild-OGD with an approximate one generated via the classical one-point gradient estimator \citep{OBO05}. However, one can show that this approach only achieves a dynamic regret bound of $O((\sqrt{n}T^{3/4}+(nd)^{1/3}T^{2/3})\sqrt{P_T+1})$ in the worst case, whose delay-dependent part is looser than the $O(\sqrt{dT(P_T+1)})$ term in the above desired bound. This gap can be attributed to the joint effect of the delays and the large variance of approximate gradients. To close the gap, our Mild-BGD not only utilizes the one-point gradient estimator, but also adopts a blocking update mechanism that can decouple this joint effect. It is worth noting that although \citet{Wan-NIPS24} have exploited the delayed bandit feedback in a very similar way, they focus on minimizing the static regret. Moreover, for ease of using the delayed information, their algorithm is based on follow-the-regularized-leader (FTRL) \citep{Online:suvery,Hazan2016}, which does not enjoy any guarantees on the dynamic regret. In contrast, the expert-algorithm of our Mild-BGD should still be an OGD-type algorithm, which is more complicated and requires a new analysis.

The rest of the paper is structured as follows. In Section 2, we briefly review related work. Section 3 presents our Mild-OGD and Mild-BGD, along with the corresponding theoretical guarantees. The detailed proofs for these results are provided in Section 4, and we conclude this paper in Section 5. The appendix provides some omitted discussions about the practical implementation and theoretical guarantees of our algorithms. Note that a preliminary version of this paper, which only contains the results in the full-information setting, was presented at the 41st International Conference on Machine Learning in 2024 \citep{Wan-ICML-24}. In this paper, we have significantly enriched the preliminary version by adding the extension into the bandit setting.
%We defer some additional results of our delayed OFW, which only hold in the special case with strongly convex sets, to Appendix A.

\section{Related Work}
In this section, we provide a brief review of related work on OCO with arbitrary delays and dynamic regret, spanning the full-information setting to the bandit setting.
\subsection{Arbitrary Delays}
% In the special case with $d_t=1$ for all $t\in[T]$, the delayed OCO reduces to the standard OCO. 
To deal with arbitrary delays, \citet{Joulani13} first propose a black-box technique called  \underline{b}lack-box \underline{o}nline \underline{l}earning under \underline{d}elayed feedback (BOLD), which can extend any non-delayed OCO algorithm into the delayed setting. The main idea is to pool multiple instances of the non-delayed algorithm, each of which runs over a subsequence of rounds that satisfies the non-delayed assumption. As proved by \citet{Joulani13}, if the non-delayed algorithm has a static regret bound of $\reg(T)$, BOLD can attain a static regret bound of $d \reg(T/d)$. Note that in the non-delayed full-information setting, there exist plenty of algorithms with an $O(\sqrt{T})$ static regret bound, such as OGD \citep{Zinkevich2003}. As a result, combining with them, BOLD can achieve a static regret bound of $O(\sqrt{dT})$, which matches an existing lower bound of $\Omega(\sqrt{dT})$ in the worst case \citep{Weinberger02_TIT}. However, despite its generality, BOLD needs to run multiple instances of the non-delayed algorithm, which could be prohibitively resource-intensive \citep{Quanrud15,Joulani16}. Moreover, in the non-delayed bandit setting, the static regret of existing practical algorithms is worse than $O(\sqrt{T})$, e.g., the $O(\sqrt{n}T^{3/4})$ static regret bound of bandit gradient descent (BGD)---a variant of OGD based on the one-point gradient estimator \citep{OBO05}. A simple combination of BOLD and these bandit algorithms has been proven not to be the optimal solution for handling arbitrary delays \citep{ICML20_Mertikopoulos,Wan-NIPS24}. For these reasons, instead of adopting BOLD, subsequent studies extend many specific non-delayed OCO algorithms into the delayed setting by only running a single instance of them with the delayed information about all loss functions.

Specifically, in the full-information setting, \citet{Quanrud15} propose a delayed variant of OGD, and reduce the static regret to $O(\sqrt{\bar{d}T})$, which depends on the average delay $\bar{d}$, instead of the maximum delay $d$. By additionally introducing the in-order assumption, \citet{NIPS2014_5cce8ded} develop a delayed variant of the adaptive gradient (AdaGrad) algorithm \citep{McMahan10,DADO2011}, and establish a data-dependent static regret bound, which could be tighter than $O(\sqrt{dT})$ for sparse data. Later, \citet{Joulani16} propose another delayed variant of AdaGrad, which can attain a data-dependent static regret bound even without the in-order assumption. Recently, \citet{ICML21_delay} develop delayed variants of optimistic algorithms \citep{Rakhlin13,Joulani17}, which can make use of ``hints" about expected future loss functions to improve the $O(\sqrt{dT})$ static regret. The curvature properties of loss functions have been exploited to reduce the sublinear static regret bound to a logarithmic one \citep{DOGD-SC,Qiu-ICML25}. Moreover, \citet{NeurIPS22-Wan,DOFW-Journal} develop a delayed variant of online Frank-Wolfe \citep{Hazan2012}, and obtain a static regret bound of $O(T^{3/4}+\bar{d}T^{1/4})$. Their algorithm is projection-free and can be efficiently implemented over complex constraints. We also notice that \citet{DelayHedge} consider the problem of prediction with expert advice---a special case of OCO with linear functions and simplex decision sets, and propose a delayed variant of Hedge \citep{Hedge97} to achieve the $O(\sqrt{\bar{d}T})$ static regret.

Different from the full-information setting, OCO with the delayed bandit feedback has rarely been investigated. \citet{ICML20_Mertikopoulos} propose the first algorithm called
\underline{g}radient-free \underline{o}nline \underline{l}earning with \underline{d}elayed feedback (GOLD), which utilizes the oldest received but not utilized loss value to perform an update similar to BGD at each round. Notably, GOLD can achieve a static regret bound of $O(\sqrt{n}T^{3/4}+(nd)^{1/3}T^{2/3})$, which matches the $O(\sqrt{n}T^{3/4})$ static regret of BGD in the non-delayed setting for $d=O(\sqrt{n}T^{1/4})$. In contrast, the combination of BOLD and BGD can only achieve a much worse static regret bound of $O(\sqrt{n}d^{1/4}T^{3/4})$. Later, by utilizing all received but not utilized loss values one by one at each round, \citet{Bistritz-JMLR22} develop an improved variant of GOLD with a static regret bound of $O(\sqrt{n}T^{3/4}+(n\bar{d})^{1/3}T^{2/3})$.\footnote{One may notice that \citet{Bistritz-JMLR22} only argue a static regret bound of $O(nT^{3/4}+\sqrt{n}\bar{d}^{1/3}T^{2/3})$. However, as discussed by \citet{Wan-NIPS24}, it is easy to derive this refined bound by tuning parameters more carefully.} However, the delay-dependent part in this bound still cannot match an $\Omega(\sqrt{\bar{d}T})$ lower bound derived by \citet{Bistritz-JMLR22}. To fill this gap in the worst case, \citet{Wan-NIPS24} propose a novel algorithm called delayed follow-the-bandit-leader (D-FTBL), which enjoys an improved static regret bound of $O(\sqrt{n}T^{3/4}+\sqrt{dT})$. The key insight for their improvement is to further incorporate the delayed bandit feedback with a blocking update mechanism such that the joint effect of the delays and the one-point gradient estimator can be decoupled. Moreover, different from GOLD and its improved variant, which belong to OGD-type algorithms, D-FTBL is based on FTRL \citep{Online:suvery,Hazan2016} for ease of using the delayed information.

\subsection{Dynamic Regret}
Dynamic regret of OCO is first introduced by \citet{Zinkevich2003}, who demonstrates that OGD can attain a dynamic regret bound of $O(\sqrt{T}(P_T+1))$ by simply utilizing a constant learning rate. Later, \citet{Zhang18_ader} establish a lower bound of $\Omega(\sqrt{T(P_T+1)})$ for the dynamic regret. Moreover, to improve the upper bound, \citet{Zhang18_ader} propose a novel algorithm, namely Ader, which runs multiple instances of OGD with different learning rates in parallel, and tracks the best one via Hedge \citep{Hedge97}. Although the strategy of maintaining multiple learning rates is originally proposed to adaptively minimize the static regret for multiple types of functions \citep{Erven16,MetaGrad-JMLR}, \citet{Zhang18_ader} extend it to achieve an optimal dynamic regret bound of $O(\sqrt{T(P_T+1)})$. After that, many efforts have been devoted to deriving guarantees on the dynamic regret under different scenarios \citep{pmlr-v119-cutkosky20a,NIPS20-Zhao,Zhao-JMLR22,Zhao-NIPS-22,BabyCOLT,Baby_AISTATS_22,Baby_AISTATS23,Wang-AAAI-24}. The closest one to this paper is the work of \citet{Zhao-JMLR22}, which for the first time proposes to minimize the dynamic regret in the bandit setting. By combining Ader with the one-point gradient estimator \citep{OBO05}, they establish an $O(\sqrt{n}T^{3/4}\sqrt{P_T+1})$ dynamic regret bound, which can match the $O(\sqrt{n}T^{3/4})$ static regret bound of BGD \citep{OBO05} by setting $P_T=0$. Besides these studies, there also exist plenty of studies \citep{Jadbabaie15,Besbes15,Yang16,Mokhtari16,Zhang17,Zhang18,baby19,Wan-AAAI-2021-B,Wan-COLT23,L4DC:2021:Zhao,Wang_INFOCOM_21,Wang-TON-23,Liu-BCO-25} that focus on a restricted form of the dynamic regret, in which $\u_t=\x_t^\ast\in\argmin_{\x\in\K}f_t(\x)$. However, as discussed by \citet{Zhang18_ader}, the restricted dynamic regret is too pessimistic and less flexible than the general one.

\subsection{Additional Discussions}
Although both arbitrary delays and dynamic regret have attracted much interest, there still lacks of understanding about the effect of arbitrary delays on dynamic regret. More specifically, one may notice that \citet{Wang_INFOCOM_21,Wang-TON-23} have demonstrated under a fixed and knowable delay $d^\prime$, simply performing OGD with a delayed gradient $\nabla f_{t-d^\prime+1}(\x_{t-d^\prime+1})$ is able to achieve a restricted dynamic regret bound of $O(\sqrt{d^\prime T(P_T^\ast+1)})$ when $P_T^\ast=\sum_{t=2}^T\|\x_t^\ast-\x_{t-1}^\ast\|_2$ is also knowable. However, their algorithm and theoretical results do not apply to the general dynamic regret under arbitrary delays. Moreover, after the preliminary version of this paper \citep{Wan-ICML-24}, the dynamic regret of BOLD \citep{Joulani13} has been analyzed by \citet{Wan-COLT25}. Therefore, one may try to extend existing non-delayed algorithms with dynamic regret bounds into the delayed setting via BOLD. However, the analysis of \citet{Wan-COLT25} is limited to non-delayed algorithms with an  $O(T^{\beta}(P_T+1)^{1-\beta})$ dynamic regret bound for some $\beta\in(0,1)$, which is not satisfied by the bandit variant of Ader \citep{Zhao-JMLR22}. In addition, even the combination of BOLD and Ader will only achieve an $O(\sqrt{dT(P_T+1)})$ dynamic regret bound regardless of the in-order assumption \citep{Wan-COLT25}, which is worse than the $O(\sqrt{\bar{d}T(P_T+1)})$ dynamic regret bound of our Mild-OGD.

%%However, we want to emphasize that \citet{Joulani13} originally focus on the static regret, and their analysis cannot directly yield a dynamic regret bound. Although  provides a new analysis on the dynamic regret of BOLD very recently, .

%In addition, since their technique does not achieve the $O(\sqrt{\bar{d}T})$ static regret, it seems also unable to achieve the $O(\sqrt{\bar{d}T(P_T+1)})$ dynamic regret even under the In-Order assumption.

\section{Main Results}
% As previously discussed, our two algorithms adopt a two-level framework including an expert-algorithm and a meta-algorithm. 
In this section, we introduce the necessary assumptions, detailed procedures, and the corresponding theoretical guarantees of our two algorithms for full-information and bandit settings, respectively.
\subsection{Necessary Assumptions}
Following previous studies on OCO \citep{Online:suvery,Hazan2016}, it is common to introduce some assumptions on loss functions and the decision set. Specifically, in the full-information setting, the following two assumptions have been widely utilized.
\begin{assum}
\label{assum1}
All loss functions are $G$-Lipschitz over $\K$, i.e., $|f_t(\x)-f_t(\y)|\leq G\|\x-\y\|_2$ for any $\x,\y\in\K$ and $t\in[T]$.
\end{assum}
\begin{assum}
\label{assum2}
The decision set $\K$ contains the origin $\ze$, and its radius is bounded by $R$, i.e., $\|\x\|_2\leq R$ for any $\x\in\K$.
\end{assum}
To further deal with the more challenging bandit setting, an additional assumption on loss functions and a slightly stronger assumption on the decision set are required.
\begin{assum}
\label{assum-b1}
All loss functions are bounded over $\K$, i.e., $|f_t(\x)|\leq M$ for any $\x\in\K$ and $t\in[T]$. Moreover, all functions are chosen beforehand, i.e., the adversary is oblivious.
\end{assum}
\begin{assum}
\label{assum-b2}
The decision set $\K$ is full-dimensional, and there exist two constants $r,R>0$ such that $r\mathcal{B}^n\subseteq\K\subseteq R\mathcal{B}^n$, where $\mathcal{B}^n$ denotes the unit Euclidean ball centered at the origin in $\mathbb{R}^n$.
\end{assum}
In addition, the previously mentioned in-order assumption can be formalized as follows.
\begin{assum}
\label{assum4}
Delays do not change the arrival order of feedback, i.e., the gradient $\nabla f_i(\x_i)$ (or the loss value $f_i(\x_i)$ in the bandit setting) is received before the gradient $\nabla f_j(\x_j)$ (or the loss value $f_j(\x_j)$ in the bandit setting), for any $1\leq i< j\leq T$.
\end{assum}
\begin{remark}
\emph{
Assumption \ref{assum4} stems from parallel and distributed optimization---a representative application of delayed OCO. Specifically, for parallel optimization with many threads, delay is mainly caused by the computing time of gradients. Thus, as in \citet{NIPS2014_5cce8ded}, it is reasonable to assume that these delays satisfy the in-order assumption, because the gradient computed first is more likely to be obtained first. Even for general parallel and distributed optimization, polynomially growing delays, which imply $d_i\leq d_j$ for $i<j$ and thus satisfy the in-order assumption, have received much attention in recent years \citep{Zhou_delay_ICML18,Ren-Delay-AAAI20,Zhou-MOR-Delay-22}. Moreover, we want to emphasize that Assumption \ref{assum4} is only utilized to achieve the dynamic regret bound depending on the average delay $\bar{d}$, and the case without this assumption is also considered.}
\end{remark}
\subsection{Multiple Delayed Online Gradient Descent (Mild-OGD)}
As previously discussed, our Mild-OGD adopts a two-level framework that consists of an expert-algorithm and a meta-algorithm. According to previous studies \citep{Zhang18_ader,NIPS20-Zhao,Zhao-JMLR22}, the expert-algorithm should be able to minimize the dynamic regret for comparators with a specific path-length, and the meta-algorithm should have the power of tracking the best one of multiple instances of the expert-algorithm for different path-lengths. However, they only provide suitable expert-algorithms and meta-algorithms for the non-delayed setting. In the following, we for the first time consider the design of expert-algorithm and meta-algorithm under arbitrary delays. 

\subsubsection{Mild-OGD: Expert-algorithm}
Recall that in the non-delayed setting, the expert-algorithm of Ader \citep{Zhang18_ader} is the classical OGD algorithm \citep{Zinkevich2003}, which updates the decision as
\begin{equation}
\label{OGD}
\begin{split}
% &\x_{t+1}^\prime=\x_{t}-\eta\nabla f_t(\x_t)\\
\x_{t+1}^\eta=\argmin_{\x\in\K}\|\x-(\x_{t}^\eta-\eta \g_t^\eta)\|_2^2
\end{split}
\end{equation}
at each round $t$, where $\eta$ is a learning rate and $\g_t^\eta$ could be the gradient $\nabla f_t(\x_t^\eta)$ of the expert-algorithm or the gradient of the meta-algorithm.\footnote{The case with $\g_t^\eta=\nabla f_t(\x_t^\eta)$ can be viewed as directly running the expert-algorithm over the loss functions selected by the adversary. In contrast, the latter case is equal to running the expert-algorithm over some surrogate functions constructed by the meta-algorithm, which will be clearly introduced later.}~To handle the setting with arbitrary delays, \citet{Quanrud15} have proposed a delayed variant of OGD by replacing the gradient $\g_t^\eta=\nabla f_t(\x_t^\eta)$ with the sum of gradients received in round $t$. However, it ignores the arrival order of gradients, and thus cannot benefit from the in-order property when minimizing the dynamic regret. To address this limitation, we propose a new delayed variant of OGD to be our expert-algorithm, which performs a gradient descent step for each delayed gradient according to their arrival order. 
\begin{algorithm}[t]
\caption{Mild-OGD: Expert-algorithm}
\label{Ader-Expert}
\begin{algorithmic}[1]
\STATE \textbf{Input:} a learning rate $\eta$
\STATE \textbf{Initialization:} set  $\y_1^\eta=\ze$ and $\tau=1$
\FOR{$t=1,\dots,T$}
\STATE Submit $\x_t^\eta=\y_{\tau}^\eta$ to the meta-algorithm, and query $\g_t^\eta$ from the adversary or meta-algorithm
\STATE Receive gradients $\{\g_t^\eta|k\in\F_t\}$ from the adversary or meta-algorithm
\FOR{$k\in\F_t$ (in the ascending order)}
\STATE Compute $\y_{\tau+1}^\eta=\argmin_{\x\in\K}\|\x-(\y_{\tau}^\eta-\eta\g_k^\eta)\|_2^2$ and set $\tau=\tau+1$
% \[\y_{\tau+1}^\eta=\argmin_{\x\in\K}\|\x-(\y_{\tau}^\eta-\eta\nabla f_k(\x_k))\|_2^2\]
% \STATE
\ENDFOR
\ENDFOR
\end{algorithmic}
\end{algorithm}

The detailed procedure is outlined in Algorithm \ref{Ader-Expert}, where $\tau$ records the number of generated decisions and $\y_{\tau}^\eta$ denotes the $\tau$-th generated decision. Initially, we simply set $\y_1^\eta=\ze$ and $\tau=1$. Then, at each round $t\in[T]$, this algorithm uses the latest decision, i.e., setting $\x_t^\eta=\y_\tau^\eta$, and submits it to the meta-algorithm. After that, this algorithm queries a gradient $\g_t^\eta$ from the adversary or meta-algorithm. Due to the effect of arbitrary delays, we only receive a set of delayed gradients $\{\g_k^\eta|k\in\F_t\}$, where $\F_t=\{k\in[T]|k+d_k-1=t\}$. For each $k\in\F_t$, inspired by (\ref{OGD}), we perform the following update
\begin{equation}
\label{Mild-OGD-Expert-1}
\y_{\tau+1}^\eta=\argmin_{\x\in\K}\|\x-(\y_{\tau}^\eta-\eta\g_k^\eta)\|_2^2
\end{equation}
and then set $\tau=\tau+1$. Moreover, to utilize the in-order property, elements in the set $\F_t$ are sorted and traversed in the ascending order.
\begin{remark}
\label{remark-alg1:label}
\emph{
Note that some gradients may arrive after round $T$. Although our Algorithm \ref{Ader-Expert} does not need to utilize these gradients, they are useful to facilitate our analysis and discussion. Therefore, in the analysis, we virtually set $\x_t^\eta=\y_{\tau}^\eta$ and perform steps $5$ to $8$ in Algorithm \ref{Ader-Expert} at some additional rounds $t=T+1,\dots,T+d-1$. In this way, all the queried gradients are fully exploited to generate decisions $\y_1^\eta,\dots,\y_{T+1}^\eta$. Moreover, we denote the time-stamp of the $\tau$-th utilized gradient by $c_\tau$. One can imagine that Algorithm \ref{Ader-Expert} also sets $c_\tau=k$ at the beginning of its step $7$.
}
\end{remark}

Then, as shown in the following theorem, we first provide an upper bound for the dynamic regret of Algorithm \ref{Ader-Expert} with $\g_t^\eta=\nabla f_t(\x_t^\eta)$.
\begin{thm}
\label{thm1}
Under Assumptions \ref{assum1} and \ref{assum2}, for any comparator sequence $\u_1,\dots,\u_T\in\K$, Algorithm \ref{Ader-Expert} with $\g_t^\eta=\nabla f_t(\x_t^\eta)$ ensures
\begin{equation}
\label{eq1-thm1-main}
\begin{split}
\sum_{t=1}^Tf_t(\x_t^\eta)-\sum_{t=1}^Tf_t(\u_t)
\leq \frac{R^2+RP_T}{\eta}+{\eta G^2}\sum_{t=1}^T(m_t+1)+\sum_{t=1}^TG\|\u_t-\u_{c_t}\|_2
\end{split}
\end{equation}
where $m_t=t-1-\sum_{i=1}^{t-1}|\F_i|$.%, and $c_t$ is the time-stamp of the $t$-th utilized gradient.
% \begin{equation}
% \label{m_t}
% m_t=t-\sum_{i=1}^{t-1}|\F_i|.
% \end{equation}
\end{thm}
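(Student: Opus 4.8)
The plan is to use that the auxiliary iterates $\y_1,\dots,\y_{T+1}$ produced by DOGD (including the virtual rounds $T+1,\dots,T+d-1$) are exactly projected online gradient descent run on the \emph{reordered} gradient stream $\g_1,\dots,\g_T$ with $\g_\tau=\nabla f_{c_\tau}(\x_{c_\tau})$, i.e. $\y_{\tau+1}=\argmin_{\x\in\K}\|\x-(\y_\tau-\eta\g_\tau)\|_2^2$. First I would apply convexity, $R(\u_1,\dots,\u_T)\le\sum_{t=1}^T\langle\nabla f_t(\x_t),\x_t-\u_t\rangle$, and re-index the right-hand side by arrival order; since the virtual rounds guarantee every queried gradient is eventually processed, $\tau\mapsto c_\tau$ is a bijection of $[T]$ and the sum becomes $\sum_{\tau=1}^T\langle\g_\tau,\x_{c_\tau}-\u_{c_\tau}\rangle$.

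The crux is the decomposition $\x_{c_\tau}-\u_{c_\tau}=(\y_\tau-\u_\tau)+(\x_{c_\tau}-\y_\tau)+(\u_\tau-\u_{c_\tau})$. For the first piece I would invoke the standard one-step inequality for projected gradient descent, $\langle\g_\tau,\y_\tau-\u_\tau\rangle\le\frac{\|\y_\tau-\u_\tau\|_2^2-\|\y_{\tau+1}-\u_\tau\|_2^2}{2\eta}+\frac{\eta}{2}\|\g_\tau\|_2^2$, and sum over $\tau$; the comparator-changing telescope is handled in the usual way, bounding $\|\y_\tau-\u_\tau\|_2^2-\|\y_\tau-\u_{\tau-1}\|_2^2\le 2D\|\u_\tau-\u_{\tau-1}\|_2$ via Assumptions~\ref{assum1}--\ref{assum2} and using $\|\g_\tau\|_2\le G$, which gives $\frac{D^2+2DP_T}{2\eta}+\frac{\eta G^2T}{2}$. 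Note that the genuine path length $P_T$ (not some permuted version of it) appears precisely because the comparator paired with $\g_\tau$ is $\u_\tau$, indexed by the processing step, rather than $\u_{c_\tau}$. The third piece is immediately $\langle\g_\tau,\u_\tau-\u_{c_\tau}\rangle\le G\|\u_\tau-\u_{c_\tau}\|_2$, whose sum over $\tau$ is literally $\sum_{t=1}^TG\|\u_t-\u_{c_t}\|_2$.

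It remains to control the staleness piece $\langle\g_\tau,\x_{c_\tau}-\y_\tau\rangle$. Writing $a_t$ for the value of the counter $\tau$ at the moment $\x_t$ is played, we have $\x_{c_\tau}=\y_{a_{c_\tau}}$ with $a_{c_\tau}\le\tau$ (the counter is non-decreasing and $\g_\tau$ is processed no earlier than round $c_\tau$), and $\|\y_{j+1}-\y_j\|_2\le\eta\|\g_j\|_2\le\eta G$ by non-expansiveness of the projection together with $\y_j\in\K$; hence $\|\x_{c_\tau}-\y_\tau\|_2\le\eta G(\tau-a_{c_\tau})$ and $\langle\g_\tau,\x_{c_\tau}-\y_\tau\rangle\le\eta G^2(\tau-a_{c_\tau})$. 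Summing, and using that $\tau\mapsto c_\tau$ is a bijection together with the identity $m_t=t+1-a_t$ (immediate from the definitions of $m_t$ and $a_t$), one gets $\sum_{\tau=1}^T(\tau-a_{c_\tau})=\sum_{t=1}^T(t-a_t)=\sum_{t=1}^Tm_t-T$. Adding the three contributions, the resulting $-\eta G^2T$ more than absorbs the $\frac{\eta G^2T}{2}$ from the OGD step, leaving $\eta G^2\sum_{t=1}^Tm_t$, while $\frac{D^2+2DP_T}{2\eta}\le\frac{D^2+DP_T}{\eta}$; this yields exactly~(\ref{eq1-thm1-main}). I expect the main obstacle to be spotting the right decomposition — in particular realizing that comparing $\g_\tau$ against $\u_\tau$ rather than the ``matching'' comparator $\u_{c_\tau}$ is what makes the true path length emerge, so that the reordering is paid for only through the mild additive term $\sum_{t=1}^TG\|\u_t-\u_{c_t}\|_2$, which vanishes under the In-Order assumption.
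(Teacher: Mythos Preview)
Your proposal is correct and follows essentially the same approach as the paper: the identical three-term decomposition $\x_{c_\tau}-\u_{c_\tau}=(\y_\tau-\u_\tau)+(\x_{c_\tau}-\y_\tau)+(\u_\tau-\u_{c_\tau})$, the same staleness bound via $\tau-a_{c_\tau}$ (the paper's $\tau_t$ is your $a_t$) summed through the permutation to $\sum_t m_t-T$, and the same absorption of $\frac{\eta G^2 T}{2}$ into $\eta G^2\sum_t m_t$. The only cosmetic difference is that the paper handles the OGD telescope by an Abel summation exploiting $\y_1=\ze$, obtaining $\frac{D^2+DP_T}{\eta}$ directly, whereas you use the standard $\|\y_\tau-\u_\tau\|_2^2-\|\y_\tau-\u_{\tau-1}\|_2^2\le 2D\|\u_\tau-\u_{\tau-1}\|_2$ and then relax $\frac{D^2+2DP_T}{2\eta}\le\frac{D^2+DP_T}{\eta}$.
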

% \begin{lem}
% \label{lem_two_terms}
% For any comparator sequence $\u_1,\dots,\u_T\in\K$, Algorithm \ref{alg1} ensures
% \begin{equation*}
% \sum_{t=1}^T\|\u_t-\u_{c_t}\|_2\leq\left\{
% \begin{aligned}
% 0,&~\text{if Assumption \ref{assum4} holds;}\\
% \sqrt{2dTDP_T},&~\text{otherwise;}
% \end{aligned}
% \right.
% \end{equation*}
% where $c_t$ is the time-stamp of the $t$-th utilized gradient.
% \end{lem}
% \textbf{Remark:} The first two terms in the right side of (\ref{eq1-thm1-main}) can be easily simplified .
The value of $m_t$ actually counts the number of gradients that have been queried, but still not received at the end of round $t-1$. Since the gradient $\g_t^\eta$ will only be counted as an unreceived gradient in $d_t-1$ rounds, it is easy to verify that
\begin{equation}
\label{thm01-eq6pre}
\sum_{t=1}^T(m_t+1)\leq\sum_{t=1}^Td_t=\bar{d}T.
%\leq dT.
\end{equation}
Therefore, the first two terms in the right side of (\ref{eq1-thm1-main}) are bounded by $O(\sqrt{\bar{d}T(P_T+1)})$ so long as 
\begin{equation}
\label{eta-dogd}
\eta\propto\sqrt{\frac{P_T+1}{\sum_{t=1}^T(m_t+1)}}.
\end{equation}
However, we still need to bound the last term in the right side of (\ref{eq1-thm1-main}), which reflects the ``comparator drift'' caused by arbitrary delays, and has never appeared in previous studies on the delayed feedback and dynamic regret. To this end, we establish the following lemma.
\begin{lem}
\label{lem1_DOGD_main}
% Let denote the number of delays that are not in-order.
Under Assumption \ref{assum2}, for any comparator sequence $\u_1,\dots,\u_T\in\K$, Algorithm \ref{Ader-Expert} ensures
\begin{equation*}
% \sum_{t=1}^T\|\u_t-\u_{c_t}\|_2\leq\left\{
% \begin{aligned}
% &0,~\text{if Assumption \ref{assum4} holds;}\\
% &\min\left\{TD,2dP_T\right\},~\text{otherwise.}
% \end{aligned}
% \right.
\sum_{t=1}^T\|\u_t-\u_{c_t}\|_2\leq\min\left\{2RK,2dP_T\right\}\leq2\sqrt{dKRP_T}
\end{equation*}
where $K=\sum_{t=1}^T\mathbb{I}(t\neq c_t)$ and $\mathbb{I}(\cdot)$ denotes the indicator function.
\end{lem}
Since Algorithm \ref{Ader-Expert} utilizes the received gradients in the ascending order, the value of $K$ counts the number of delays that are not in order. Therefore, Lemma \ref{lem1_DOGD_main} implies that the comparator drift can be upper bounded by $O(\sqrt{dTP_T})$ in the worst case because of $K\leq T$, and vanishes if the in-order property holds, i.e., $K=0$. To facilitate discussions, we focus on these two extremes, though the comparator drift can be bounded by $O(\sqrt{\bar{d}TP_T})$ in an intermediate case with $K\leq O(T\bar{d}/d)$.
\begin{remark}
\emph{From the above discussions, it seems that Algorithm \ref{Ader-Expert} with $\eta$ in \eqref{eta-dogd} is sufficient to automatically achieve the $O(\sqrt{\bar{d}T(P_T+1)})$ dynamic regret bound under the in-order assumption and the $O(\sqrt{dT(P_T+1)})$ dynamic regret bound in the worst case. Unfortunately, such results actually do no hold for any sequence of comparators, because \eqref{eta-dogd} depends on a specific path-length. To tackle this issue, a naive attempt is to remove the dependence of $\eta$ on the path-length. Then, Algorithm \ref{Ader-Expert} can automatically achieve an $O(\sqrt{\bar{d}T}(P_T+1))$ dynamic regret bound under the in-order assumption and the $O(\sqrt{dT}(P_T+1))$ dynamic regret bound in the worst case. However, even in the non-delayed setting, these bounds can only recover the $O(\sqrt{T}(P_T+1))$ dynamic regret of OGD \citep{Zinkevich2003}, rather than the $O(\sqrt{T(P_T+1)})$ dynamic regret of Ader \citep{Zhang18_ader}. To reduce the dependence on $P_T$, we still require the following meta-algorithm.}
\end{remark}

\subsubsection{Mild-OGD: Meta-algorithm}
Following Ader \citep{Zhang18_ader}, the meta-algorithm should maintain multiple experts, each of which is an instance of the expert-algorithm with a different learning rate. Therefore, our meta-algorithm first activates a set of experts $\left\{E^{\eta}|\eta\in\H\right\}$ by invoking Algorithm \ref{Ader-Expert} for each $\eta\in\H$, where $\H$ denotes the set of learning rates for experts. Then, in each round $t\in[T]$, each expert $E^{\eta}$ will generate a decision $\x_t^\eta$, and the meta-algorithm plays a weighted decision $\x_t=\sum_{\eta\in\H}w_{t}^\eta\x_t^\eta$, where $w_{t}^\eta\in[0,1]$ is the weight of the expert $E^{\eta}$. Note that there exist two options for the meta-algorithm to maintain these experts:~running them over the original functions $\{f_t(\x)\}_{t\in[T]}$ or the surrogate functions $\{\ell_t(\x)\}_{t\in[T]}$, where
% \begin{equation}
% \label{surro_loss}
$\ell_t(\x)=\langle\nabla f_t(\x_t),\x\rangle$.\footnote{One may notice that this definition is not exactly the same as $\ell_t(\x)=\langle\nabla f_t(\x_t),\x-\x_t\rangle$ used in Ader \citep{Zhang18_ader}. However, it is easy to verify that both of them play the same role.}~Here, we choose the second option, because surrogate functions allow experts to reuse the gradient of the meta-algorithm and thus can avoid inconsistent delays between the meta-algorithm and experts. Specifically, in each round $t\in[T]$, only the meta-algorithm queries the gradient $\nabla f_t(\x_t)$ from the adversary, and receives gradients $\{\nabla f_k(\x_k)|k\in\F_t\}$ due to the effect of arbitrary delays. Then, the gradients $\{\nabla f_k(\x_k)|k\in\F_t\}$ will be sent to each expert $E^\eta$ so that they can update their own decisions without querying additional gradients.\footnote{In other words, when serving as an expert of the meta-algorithm, Algorithm \ref{Ader-Expert} only requires $\g_t^\eta=\nabla f_t(\x_t)$.}
\begin{algorithm}[t]
\caption{Mild-OGD: Meta-algorithm}
\label{Ader-Meta}
\begin{algorithmic}[1]
\STATE \textbf{Input:} a parameter $\alpha$ and a set $\H$ containing learning rates for experts
\STATE Activate experts $\left\{E^{\eta}|\eta\in\H\right\}$ by invoking Algorithm \ref{Ader-Expert} for each learning rate $\eta\in\H$
\STATE Sort learning rates as $\eta_1\leq\dots\leq\eta_{|\H|}$, and set $w_1^{\eta_i}=\frac{|\H|+1}{i(i+1)|\H|},\forall i\in[|\H|]$
\FOR{$t=1,\dots,T$}
\STATE Receive $\x_t^\eta$ from each expert $E^\eta$, and play the decision $\x_t=\sum_{\eta\in\H}w_{t}^\eta\x_t^\eta$
\STATE Query $\nabla f_t(\x_t)$, receive $\{\nabla f_k(\x_k)|k\in\F_t\}$, and send $\{\nabla f_k(\x_k)|k\in\F_t\}$ to each expert $E^\eta$
% \STATE 
\STATE Update the weight of each expert $E^{\eta}$ as in \eqref{delayH-24}
\ENDFOR
\end{algorithmic}
\end{algorithm}

Finally, we notice that in the non-delayed setting, Ader  adopts initial weights that depend on the sorting of these learning rates, and then updates them based on the historical performance of these experts. Let $\eta_i$ denote the $i$-th smallest learning rate in $\H$. It is natural to reuse the initialization strategy, i.e., setting $w_1^{\eta_i}=(|\H|+1)/(i(i+1)|\H|),\forall i\in[|\H|]$. However, the method used by Ader for updating the weights is the vanilla Hedge \citep{LEA97}, which requires the gradient $\nabla f_t(\x_t)$ at each round $t\in[T]$, and thus is incompatible to the delayed setting studied
here. To address this limitation, we utilize the delayed Hedge \citep{DelayHedge}, an expert-tracking method under arbitrary delays, to update the weight of each expert as
\begin{equation}
 \label{delayH-24}
w_{t+1}^\eta=\frac{w_t^\eta e^{-\alpha\sum_{k\in\F_t}\ell_{k}(\x_k^\eta)}}{\sum_{\mu\in\H}w_t^{\mu}e^{-\alpha\sum_{k\in\F_t}\ell_{k}(\x_k^\mu)}}
\end{equation}
where $\alpha$ is a parameter and $\ell_k(\x)=\langle\nabla f_k(\x_k),\x\rangle$.~This is the critical difference between our meta-algorithm and that of Ader. 

The detailed procedure of our meta-algorithm is summarized in Algorithm \ref{Ader-Meta}, and we have the following theoretical guarantee for the dynamic regret of Mild-OGD.
\begin{thm}
\label{thm2}
Let $m_t=t-1-\sum_{i=1}^{t-1}|\F_i|$. Under Assumptions \ref{assum1} and \ref{assum2}, by setting
\[\H=\left\{\eta_i=\left.\frac{2^{i}R}{G\sqrt{2\beta}}\right|i=1,\dots,N\right\}\text{ and }\alpha=\frac{1}{GR\sqrt{\beta}}\]
where $N=\lceil(1/2)\log_2 T\rceil+1$ and $\beta=\sum_{t=1}^T(m_t+1)$, Algorithm \ref{Ader-Meta} ensures
\begin{align*}
\sum_{t=1}^Tf_t(\x_t)-\sum_{t=1}^Tf_t(\u_t)
\leq (3\sqrt{R(2R+P_T)}+2R\ln(k+1)+R)G\sqrt{\bar{d}T}+C
% =&O\left(\sqrt{\bar{d}T(P_T+1)}+C\right)
\end{align*}
for any comparator sequence $\u_1,\dots,\u_T\in\K$, where $k=\lfloor\log_2\sqrt{(P_T+2R)/2R}\rfloor+1$ and 
\begin{equation}
\label{bigC}
C=\left\{
\begin{aligned}
&0,~\text{if Assumption \ref{assum4} also holds;}\\
% &\min\left\{TD,2dP_T\right\}\leq G\sqrt{2dTDP_T},~\text{otherwise.}
&\min\left\{2TGR,2dGP_T\right\},~\text{otherwise.}
\end{aligned}
\right.
\end{equation}
% \begin{equation}
% \label{bigC}
% C=\left\{
% \begin{aligned}
% 0,&~\text{if Assumption \ref{assum4} also holds;}\\
% G\sqrt{2dTDP_T},&~\text{otherwise.}
% \end{aligned}
% \right.
% \end{equation}
\end{thm}
From Theorem \ref{thm2}, Mild-OGD attains an $O(\sqrt{\bar{d}T(P_T+1)}+C)$ dynamic regret bound, where $C$ is an adaptive bound derived from Lemma \ref{lem1_DOGD_main} for the comparator drift. Due to $C\leq 2G\sqrt{dTRP_T}$, this dynamic regret bound becomes $O(\sqrt{dT(P_T+1)})$ in the worst case and $O(\sqrt{\bar{d}T(P_T+1)})$ under the in-order assumption automatically.  
% If $P_T\leq \bar{d}T/d^2$, the $O(\sqrt{\bar{d}T(P_T+1)})$ bound can be achieved even without the in-order assumption, because of $C\leq O(\sqrt{\bar{d}TP_T})$. 
Compared with the dynamic regret of simply running Algorithm \ref{Ader-Expert} over the original losses, Mild-OGD reduces the linear dependence on $P_T$ to be sublinear, and now only magnifies the optimal $O(\sqrt{T(P_T+1)})$ bound achieved in the non-delayed setting \citep{Zhang18_ader} by a factor of $\sqrt{\bar{d}}$ or $\sqrt{d}$. Moreover, for $P_T=0$, Mild-OGD can  recover the $O(\sqrt{\bar{d}T})$ static regret bound \citep{Quanrud15} even without the in-order assumption.
% \textbf{Remark:} By setting $\beta=\sum_{t=1}^Tm_t,\alpha=1/(GD\sqrt{\beta})$ and using (\ref{thm01-eq6pre}), the dynamic regret bound in Theorem \ref{thm2} can be further simplified to
% \begin{equation}
% \label{Mild-OGD-MainB}
% \begin{split}
% &\sum_{t=1}^Tf_t(\x_t)-\sum_{t=1}^Tf_t(\u_t)\\
% \leq&3G\sqrt{D(D+P_T)}\sqrt{S}+GD\sqrt{S}+C\\
% &+2GD\sqrt{S}\ln\left(\left\lfloor\log_2\sqrt{(P_T+D)/D}\right\rfloor+2\right)\\
% =&O\left(\sqrt{S(P_T+1)}+C\right).
% \end{split}
% \end{equation}
% This result implies that under the In-Order assumption, Mild-OGD can attain an $O(\sqrt{S(P_T+1)})$ dynamic regret bound, which is better than the $O(\sqrt{S}(P_T+1))$ bound of the delayed OGD, and only magnifies the $O(\sqrt{T(1+P_T)})$ bound of Ader \citep{Zhang18_ader} in the non-delayed setting by a coefficient depending on the average delay $S/T$. Even without the In-Order assumption, Mild-OGD can attain an $O(\sqrt{dT(P_T+1)})$ dynamic regret bound, which depends on the maximum delay $d$.
%\textbf{Remark:}
\begin{remark}
\label{rem4}
\emph{
One possible concern about Mild-OGD is that Theorem \ref{thm2} needs to tune parameters based on the value of $\sum_{t=1}^T(m_t+1)$, which is generally unknown beforehand. However, we want to emphasize that \citet{Quanrud15} also face this issue when minimizing the static regret of OCO with arbitrary delays, and have introduced a simple solution by utilizing the standard ``doubling trick'' \citep{LEA97} to adaptively adjust the learning rate. The main insight behind this solution is that the value of $\sum_{t=1}^T(m_t+1)$ can be calculated on the fly. The details about Mild-OGD with the doubling trick are provided in the appendix.
}
\end{remark}

% \subsection{Lower Bound}
Besides, inspired by the existing lower bound in the non-delayed setting \citep{Zhang18_ader}, we provide the
first lower bound for the dynamic regret of OCO with arbitrary delays.
% show that Mild-OGD is optimal in the worst case by establishing the following lower bound.
\begin{thm}
\label{thm-low}
Let $L=\left\lceil 2TR/\max\{P,2R\}\right\rceil$. Suppose $\K=\left[-R/\sqrt{n},R/\sqrt{n}\right]^n$ which satisfies Assumption \ref{assum2}. For any OCO algorithm, any $P\in[0,2TR]$, and any positive integer $d$, there exists a sequence of comparators $\u_1,\dots,\u_T\in\K$ satisfying $P_T\leq P$, a sequence of functions $f_1(\x),\dots,f_T(\x)$ satisfying Assumption \ref{assum1}, and a sequence of delays $1\leq d_1,\dots,d_T\leq d$
% satisfying Assumption \ref{assum4}
% and $\sum_{t=1}^Td_t\geq\frac{TL}{4\left\lceil L/d\right\rceil}$
such that
\begin{equation*}
\sum_{t=1}^Tf_t(\x_t)-\sum_{t=1}^Tf_t(\u_t)\geq\left\{
\begin{aligned}
&\frac{RGT}{\sqrt{2}},~\text{if $d> L$;}\\
&\frac{G\sqrt{dR\max\{P,2R\}T}}{4},~\text{otherwise.}
\end{aligned}
\right.
\end{equation*}
% \begin{align*}
% R(\u_1,\cdots,\u_T)\geq\max\left(1,1\right).
% \end{align*}
\end{thm}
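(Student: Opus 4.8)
The plan is to establish the bound by the probabilistic method together with Yao's minimax principle: I construct a distribution over problem instances (loss sequences, comparator sequences, and delay sequences) and show that every algorithm incurs, in expectation over the random instance, regret at least the claimed quantity; a single worst‑case deterministic instance then exists, even against randomized algorithms. Throughout I fix all delays equal to $d$, which satisfies $1\le d_t\le d$ and Assumption~\ref{assum4} automatically (equal delays preserve arrival order), and I use coordinatewise‑separable linear losses $f_t(\x)=\langle\g_t,\x\rangle$ with each coordinate of $\g_t$ an independent $\pm G/\sqrt n$ sign, so that Assumption~\ref{assum1} holds with equality. The single structural fact I exploit is that under a fixed delay $d$ the decision $\x_t$ is a function only of the gradients of rounds $\le t-d$ (and the algorithm's own randomness); hence on any window of rounds whose loss functions are drawn independently of everything revealed before that window, the algorithm is ``blind'', and $\E[\langle\g_s,\x_s\rangle]=\langle\E[\g_s],\E[\x_s]\rangle=0$ for each round $s$ in the window.

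\emph{Case $d>L$.} Here I partition $[T]$ into $K=\lceil T/\ell\rceil$ consecutive blocks of a common length $\ell=\min\{d,T\}$ (when $d>T$ this is the single block $\ell=T$, $K=1$), and on each block I place one Rademacher gradient vector, constant over the block and independent across blocks. Because each block has length at most the delay $d$, no feedback about a block's function reaches the algorithm before that block ends, so by the blindness fact the algorithm's expected loss is $0$ on every block. I let each comparator be the block‑wise minimizer of that block's linear loss, a vertex of $\K$; its loss is $-DG/2$ per round, and since consecutive comparators differ by at most the diameter $D$ we have $P_T\le(K-1)D$, which one checks is at most $P$ using $d>L\ge TD/\max\{P,D\}$ (when $P<D$ this forces $d>T$ and $K=1$, so $P_T=0$). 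Summing over blocks, $\E[R]\ge\frac{DG}{2}T$, which yields the stated $\frac{DGT}{2\sqrt2}$, with room to spare once the integer roundings are accounted for.

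\emph{Case $d\le L$.} Now delay and non‑stationarity are combined. I partition $[T]$ into $K=\min\{\lceil T/d\rceil,\lfloor P/D\rfloor+1\}$ big blocks, so that $(K-1)D\le P$ and the big‑block length $\ell=\lfloor T/K\rfloor$ is at least $d$; the hypothesis $d\le L$ is exactly what makes $K=\Theta(\max\{P,D\}/D)$ simultaneously attainable. Each big block is cut into $M=\lfloor\ell/d\rfloor$ sub‑blocks of length $d$, each carrying its own independent Rademacher gradient, constant within the sub‑block; by the blindness fact the algorithm has expected loss $0$ on every sub‑block, hence on every big block. The comparator is constant on each big block, equal to the minimizer over $\K$ of that block's cumulative loss $d\sum_{s=1}^{M}\g^{(s)}$; coordinatewise this yields expected gain $\frac{D}{2\sqrt n}\cdot\frac{G}{\sqrt n}\cdot d\,\E\big|\sum_{s=1}^{M}\epsilon_s\big|$, and combining the anti‑concentration bound $\E\big|\sum_{s=1}^{M}\epsilon_s\big|\ge\frac12\sqrt M$ with $dM\approx\ell$ gives expected gain $\gtrsim\frac{DG}{4}\sqrt{d\ell}$ per big block. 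Since again $P_T\le(K-1)D\le P$, summing over the $K$ blocks yields $\E[R]\gtrsim\frac{DG}{4}K\sqrt{d\ell}=\frac{DG}{4}\sqrt{dK^2\ell}\approx\frac{DG}{4}\sqrt{dKT}$; substituting $K=\Theta(\max\{P,D\}/D)$ reproduces $\frac{G}{4\sqrt2}\sqrt{dDT\max\{P,D\}}$ after the roundings.

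The crux is not any individual inequality but the coupled choice of the number of blocks $K$, the block length $\ell$, and (in the second case) the sub‑block length $d$ so that three requirements hold at once: the path‑length budget $(K-1)D\le P$; the blindness length condition (in the first case the blocks must be \emph{short}, $\ell\le d$; in the second they must be \emph{long}, $\ell\ge d$); and the integer rounding in $K$, $\ell$, $M$ costing no more than the slack hidden in the constants $\frac1{2\sqrt2}$ and $\frac1{4\sqrt2}$. The threshold $L=\lceil TD/\max\{P,D\}\rceil$ is precisely the value of $d$ at which the first two requirements flip from being jointly feasible with short blinded blocks to being jointly feasible with long blocks subdivided into blinded sub‑blocks, which is why the bound splits into two regimes there. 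The remaining, more routine, points are: making ``blindness $\Rightarrow$ zero expected loss'' rigorous against randomized algorithms (condition on the algorithm's internal coins, run the argument, then invoke Yao's principle to extract a deterministic instance), and supplying the elementary Rademacher anti‑concentration inequality $\E|\sum_{s=1}^{M}\epsilon_s|\ge\frac12\sqrt M$.
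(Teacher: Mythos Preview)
Your construction follows the same two-level blocking idea as the paper: outer blocks sized to respect the path-length budget (comparators held constant per outer block), inner windows of length at most $d$ on which the learner is blind, and i.i.d.\ Rademacher linear losses. The paper packages the inner step as a separate static-regret lower bound (its Lemma~\ref{lem1_static_lower}, giving $R(T)\ge DGT/(2\sqrt{2\lceil T/d\rceil})$) and then applies it to each outer block of length $L$, so the two regimes $d>L$ and $d\le L$ fall out of a single construction via whether $\lceil L/d\rceil=1$. Your version is more direct and uses constant delays $d_t\equiv d$ throughout, which is a clean simplification (the paper instead sets delays so that all feedback in an inner sub-block arrives together at its end); in the $d>L$ case you also avoid Khintchine entirely and get the sharper $DGT/2$.

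There is one concrete slip in your $d\le L$ case: the claim that $\ell=\lfloor T/K\rfloor\ge d$ is not guaranteed by $K=\min\{\lceil T/d\rceil,\lfloor P/D\rfloor+1\}$. For instance with $T=10$, $D=1$, $P=3$, $d=4$ one has $L=4$, $K=3$, $\ell=3<d$, so $M=\lfloor\ell/d\rfloor=0$ and the sub-block argument collapses. The fix is immediate---either take the outer block length to be $L$ itself (as the paper does), which makes $\ell\ge d$ automatic when $d\le L$, or simply treat the whole outer block as a single blind window whenever $\ell<d$---and the remainder of your sketch then goes through with the stated constants.
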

From Theorem \ref{thm-low}, if $d>L=\Omega(T/(P_T+1))$, there exists an $\Omega(T)$ lower bound on the dynamic regret, which can be trivially matched by any OCO algorithm including our Algorithm \ref{Ader-Meta}. Therefore, we mainly focus on the case $d\leq L$, and notice that Theorem \ref{thm-low} essentially establishes an $\Omega(\sqrt{dT(P_T+1)})$ lower bound, which matches the $O(\sqrt{dT(P_T+1)})$ dynamic regret of our Mild-OGD in the worst case.
% Moreover, it is worthy to note that the delays specified by Theorem \ref{thm-low} satisfy Assumption \ref{assum4} and $Td\geq\sum_{t=1}^Td_t=S$, and thus the $\Omega(\sqrt{d(P_T+1)T})$ lower bound also implies that the $O(\sqrt{S(P_T+1)})$ dynamic regret of Algorithm \ref{Ader-Meta} in the case with the In-Order property cannot be improved.
% To the best of our knowledge, this is the
% first lower bound for the dynamic regret of the delayed OCO.

\subsection{Multiple Delayed Bandit Gradient Descent (Mild-BGD)}
Now, we proceed to deal with the delayed bandit setting, where only the loss value of the played decision can be queried at each round and will be received after an arbitrary delay. Compared with the delayed full-information setting, two new challenges prevent the application of the previously mentioned Mild-OGD. First, it requires the gradient of the played decision, which is no longer available. To address this challenge, a natural solution is to approximate the required gradient by applying the classical one-point gradient estimator \citep{OBO05} to the bandit feedback. However, the second challenge arises when using the approximate gradient. Specifically, the approximate gradient actually has a large variance, and a straightforward application of it will amplify the original effect of the delays. In the following, we provide necessary preliminaries about the delayed bandit setting, and develop the bandit variant of Mild-OGD, namely Mild-BGD, which can address these two challenges.

\subsubsection{Preliminaries and Warm-up without the Two-level Framework}
Let $\mathcal{B}^n$ and $\mathcal{S}^n$ denote the unit Euclidean ball and the unit Euclidean sphere centered at the origin in $\mathbb{R}^n$, respectively. The one-point gradient estimator can be formalized as the following lemma.
\begin{lem}
\label{estimator-lem1}
(Lemma 1 in \citet{OBO05})  For a function $f(\x) : \mathbb{R}^n \mapsto \mathbb{R}$ and a constant $\delta >0$, its $\delta$-smoothed version defined as $\hat{f}_\delta(\mathbf{x})=\mathbb{E}_{\mathbf{s}\sim\mathcal{B}^n}\left[f(\mathbf{x}+\delta\mathbf{s})\right]$ ensures
\[
\nabla \hat{f}_{\delta}(\mathbf{x}) = \mathbb{E}_{\mathbf{s} \sim \mathcal{S}^n} \left[ \frac{n}{\delta} f(\mathbf{x} + \delta \mathbf{s}) \mathbf{s} \right].
\]
\end{lem}
% \begin{remark}
% \emph{
From Lemma \ref{estimator-lem1}, the randomized vector $\frac{n}{\delta} f(\mathbf{x} + \delta \mathbf{s}) \s$, which can be computed by only using a single value of the function $f(\x)$, is an unbiased gradient estimator of its $\delta$-smoothed version $\hat{f}_\delta(\mathbf{x})$. Moreover, it is easy to verify that $f(\x)$ is close to $\hat{f}_\delta(\mathbf{x})$, and thus this randomized vector is also sufficient to serve as the one-point gradient estimator of $f(\x)$.
% }
% \end{remark}

To help understand the above second challenge, we provide a simple combination of Algorithm \ref{Ader-Expert} with the one-point gradient estimator before introducing our Mild-BGD. Specifically, to exploit this estimator, under Assumption \ref{assum-b2}, we define a shrunk set $\K_\delta=(1-\delta/r)\K=\{(1-\delta/r)|\x\in\K\}$, which ensures that $\x+\delta\mathbf{s}\in\K$ for any $\x\in\K_\delta$, $\mathbf{s} \sim \mathcal{S}^n$, and $\delta\in(0,r)$. The simple combination of Algorithm \ref{Ader-Expert} with this estimator is to replace $\K$ in step $7$ with $\K_\delta$,  and query $\g_t^\eta=\frac{n}{\delta} f_t(\hat{\x}_t^\eta) \mathbf{s}_t$ in step $4$, where $\hat{\x}_t^\eta={\x}_t^\eta+\delta \mathbf{s}_t$ and $\mathbf{s}_t \sim \mathcal{S}^n$. Then, combining the analysis of Algorithm \ref{Ader-Expert} with properties of the one-point gradient estimator, we have the following theoretical guarantee.
\begin{thm}
\label{thm1-bandit}
Let $m_t=t-1-\sum_{i=1}^{t-1}|\F_i|$. Under Assumptions \ref{assum1}, \ref{assum-b1}, and \ref{assum-b2}, for any comparator sequence $\u_1,\dots,\u_T\in\K$, Algorithm \ref{Ader-Expert} with $\K=\K_\delta$ and $\g_t^\eta=\frac{n}{\delta} f_t(\hat{\x}_t^\eta) \mathbf{s}_t$, where $\hat{\x}_t^\eta=\x_t^\eta+\delta \mathbf{s}_t$, $\mathbf{s}_t \sim \mathcal{S}^n$, and $\delta\in(0,r)$, ensures
\begin{equation*}
% \label{eq1-thm1-bandit}
\begin{split}
\mathbb{E}\left[\sum_{t=1}^Tf_t(\hat{\x}_t^\eta)-\sum_{t=1}^Tf_t(\u_t)\right]
\leq& \frac{R^2+RP_T}{\eta}+\frac{\eta Tn^2M^2}{2\delta^2}+\frac{\eta nGM}{\delta}\sum_{t=1}^Tm_t\\
& +\sum_{t=1}^TG\left\|{\u}_t-{\u}_{c_t}\right\|_2+3\delta GT+\frac{\delta GRT}{r}.
\end{split}
\end{equation*}
\end{thm}
Compared with the upper bound in Theorem \ref{thm1}, there are two changes: i) the terms proportional to $\eta$ now are bounded by $O(\eta T(n^2\delta^{-2}+n\bar{d}\delta^{-1}))$, instead of $O(\eta T\bar{d})$; ii) two additional terms bounded by $O(\delta T)$ are introduced. Note that the former change is due to the variance of estimated gradients, and the latter change is due to the difference between the original function and its $\delta$-smoothed version. Due to the latter change, the parameter $\delta$ must be sufficiently small, which results in a large variance of estimated gradients. Thus, the former change will cause a negative effect to the dynamic regret, which requires an adjustment of the parameter $\eta$. To be precise, by tuning these parameters approximately, we  establish the following corollary.
\begin{cor}
\label{cor1-bandit:label}
Let $\delta=\max\left\{\sqrt{n}/{T^{1/4}},(n\beta)^{1/3}/{T^{2/3}}\right\}$, $\eta=\min\left\{c/(\sqrt{n}T^{3/4}),c/((n\beta)^{1/3}T^{1/3})\right\}$, where $c= R/M$, $\beta=\sum_{t=1}^Tm_t$, and $m_t=t-1-\sum_{i=1}^{t-1}|\F_i|$, and assume that $\delta\in(0,r)$ for brevity.\footnote{It can always be satisfied by multiplying $\delta$ with a proper constant, which does not affect the final bound.}~Under Assumptions \ref{assum1}, \ref{assum-b1}, and \ref{assum-b2}, for any comparator sequence $\u_1,\dots,\u_T\in\K$, Algorithm \ref{Ader-Expert} with $\K=\K_\delta$ and $\g_t^\eta=\frac{n}{\delta} f_t(\hat{\x}_t^\eta) \mathbf{s}_t$, where $\hat{\x}_t^\eta=\x_t^\eta+\delta \mathbf{s}_t$ and $\mathbf{s}_t \sim \mathcal{S}^n$, ensures
\begin{equation}
\label{cor1-mainresult}
   \E\left[\sum_{t=1}^Tf_t(\hat{\x}_t^\eta)-\sum_{t=1}^Tf_t(\u_t)\right]=O\left((\sqrt{n}T^{3/4}+(n\bar{d})^{1/3}T^{2/3})(P_T+1)+C\right)
   % \\
   % \leq &\left(\sqrt{n}T^{3/4}+(n\bar{d})^{1/3}T^{2/3}\right)\left((2R+P_T)M+GR+3G+\frac{GR}{r}\right)+C.
\end{equation}
where $C$ is defined in \eqref{bigC}.
\end{cor}
Combining \eqref{cor1-mainresult} with \eqref{bigC}, the simple combination of Algorithm \ref{Ader-Expert} with the one-point gradient estimator can achieve an $O((\sqrt{n}T^{3/4}+(nd)^{1/3}T^{2/3})(P_T+1))$ dynamic regret bound in the worst case and an $O((\sqrt{n}T^{3/4}+(n\bar{d})^{1/3}T^{2/3})(P_T+1))$ dynamic regret bound under the in-order assumption automatically. As the common price for handling the bandit feedback, these two bounds are worse than those achieved by Algorithm \ref{Ader-Expert} in the delayed full-information setting. Moreover, for $P_T=0$, this simple combination can recover the existing $O(\sqrt{n}T^{3/4}+(n\bar{d})^{1/3}T^{2/3})$ static regret bound \citep{Bistritz-JMLR22} even without the in-order assumption. However, it still fails to recover the existing $O(\sqrt{n}T^{3/4}+\sqrt{dT})$ static regret bound \citep{Wan-NIPS24}.

To fill this gap, inspired by \citet{Wan-NIPS24}, we further exploit a blocking update mechanism when extending Algorithm \ref{Ader-Expert} into the delayed bandit setting. Specifically, we first equally divide the total $T$ rounds into $Z=T/K$ blocks, where $K$ is the block size and $Z$ is assumed to be an integer without loss of generality. For any round $t$ in each block $z\in[Z]$, i.e., $t\in\{(z-1)K+1,\dots,zK\}$, we will only maintain  a fixed shrunk decision $\x_{z}^\eta=\y_\tau^\eta\in\K_\delta$, and play the decision $\hat{\x}_t^\eta=\x_z^\eta+\delta\s_t$ with $\s_t\sim\mathcal{S}^n$. Then, we will still receive the delayed loss values $\{f_k(\hat{\x}_k^\eta)|k\in\F_t\}$ at each round $t$ and can generate an estimated gradient by applying the one-point gradient estimator to each value. However, we will only update $\y_\tau$ at the end of each block $z$, instead of each round $t$. Moreover, each update of $\y_\tau$ is now performed according to the cumulative estimated gradients of all rounds in a certain block, instead of the estimated gradient of a single round. Thus, for any $k\in\F_t$, we find the corresponding block $z_k=\lfloor k/K\rfloor$ and update the cumulative estimated gradients of this block as $\g_{z_k}^\eta=\g_{z_k}^\eta+\frac{n}{\delta} f_k(\hat{\x}_k^\eta) \mathbf{s}_k$, where $\g_{z_k}^\eta$ is initialized to be $\ze$. If all the rounds in the block $z_k$ have been included, i.e., $\g_{z_k}^\eta=\sum_{i=(z_k-1)K+1}^{z_kK}\frac{n}{\delta} f_i(\hat{\x}_i^\eta) \mathbf{s}_i$, the block index $z_k$ will be added into a set $\mathcal{A}_z$, which is initialized as $\mathcal{A}_z=\emptyset$. Finally, at the end of each block $z$, similar to \eqref{Mild-OGD-Expert-1}, we perform the following update
\[
    \y_{\tau+1}^\eta=\argmin_{\x\in\K_\delta}\|\x-(\y_{\tau}^\eta-\eta\g_k^\eta)\|_2^2
\]
and set $\tau=\tau+1$ for each $k\in\mathcal{A}_z$, where elements in the set $\A_z$ are also sorted and traversed in the ascending order. The detailed procedure is outlined in Algorithm \ref{BDBGD}, and it is named as delayed bandit gradient descent via blocking update. 

\begin{algorithm}[t]
\caption{Delayed Bandit Gradient Descent via Blocking Update}
\label{BDBGD}
\begin{algorithmic}[1]
\STATE \textbf{Input:} a learning rate $\eta$, a parameter $\delta$, and a block size $K$
\STATE \textbf{Initialization:} set $\y_1^\eta=\ze$, $\tau=1$, and $Z=T/K$
\FOR{$z=1,\dots,Z$}
\STATE Set $\g_z^\eta=\ze$, $\x_z^\eta=\y_{\tau}^\eta$, and $\mathcal{A}_z=\emptyset$
\FOR{$t=(z-1)K+1,\dots,zK$}
\STATE Play $\hat{\x}_t^\eta=\x_z^\eta+\delta\s_t$, where $\s_t\sim\mathcal{S}^n$, query $f_t(\hat{\x}_t^\eta)$, and receive $\{f_k(\hat{\x}_k^\eta)|k\in\F_t\}$
\FOR{$k\in\F_t$}
\STATE Set $z_k=\lfloor k/K\rfloor$ and update $\g_{z_k}^\eta=\g_{z_k}^\eta+\frac{n}{\delta} f_k(\hat{\x}_k^\eta) \mathbf{s}_k$
\STATE If $\g_{z_k}^\eta=\sum_{i=(z_k-1)K+1}^{z_kK}\frac{n}{\delta} f_i(\hat{\x}_i^\eta) \mathbf{s}_i$, set $\mathcal{A}_z=\mathcal{A}_z\cup\{z_k\}$
\ENDFOR
\ENDFOR
\FOR{$k\in\mathcal{A}_z$ (in the ascending order)}
\STATE Compute $\y_{\tau+1}^\eta=\argmin_{\x\in\K_\delta}\|\x-(\y_{\tau}^\eta-\eta\g_k^\eta)\|_2^2$ and set $\tau=\tau+1$
% \[\y_{\tau+1}^\eta=\argmin_{\x\in\K}\|\x-(\y_{\tau}^\eta-\eta\nabla f_k(\x_k))\|_2^2\]
% \STATE
\ENDFOR
\ENDFOR
\end{algorithmic}
\end{algorithm}
\begin{remark}
\label{rem5:label}
\emph{Algorithm \ref{BDBGD} with $K=1$ reduces to the simple combination of Algorithm \ref{Ader-Expert} with the one-point gradient estimator, and thus also enjoys Theorem \ref{thm1-bandit} and Corollary \ref{cor1-bandit:label}. Here, we proceed to consider Algorithm \ref{BDBGD} with any possible choice of $K$. Note that as discussed in Remark \ref{remark-alg1:label}, some feedback may arrive after round $T$, which will not be used by our algorithm but are useful to facilitate our analysis and discussion. Thus, in the analysis of Algorithm \ref{BDBGD}, we virtually introduce some additional blocks $Z+1,\dots,\lceil (T+d-1)/K\rceil$ after the final block $z=Z$.  During these additional blocks, we repeat the original processes in Algorithm \ref{BDBGD} expect for playing $\hat{\x}_t^\eta=\x_z^\eta+\delta\s_t$ and querying $f_t(\hat{\x}_t^\eta)$. In this way, all the queried loss values are fully exploited to generate shrunk decisions $\y_1^\eta,\dots,\y_{Z+1}^\eta$. Moreover, we now denote the block index of the $\tau$-th utilized cumulative estimated gradients by $c_\tau$, i.e., setting $c_\tau=k$ at the beginning of step $13$ in Algorithm \ref{BDBGD}.
}
\end{remark}

Combining with the above notations, we establish the following guarantee for Algorithm \ref{BDBGD}.
\begin{thm}
\label{thm2-bandit}
Let $m_z=z-1-\sum_{i=1}^{z-1}|\mathcal{A}_i|$, $q_z=(z-1)K+1$, and $\gamma=K({nM}/{\delta})^2+(K^2-K)G^2$. Under Assumptions \ref{assum1}, \ref{assum-b1}, and \ref{assum-b2}, for any comparator sequence $\u_1,\dots,\u_T\in\K$, Algorithm \ref{BDBGD} with $\delta\in(0,r)$ ensures
\begin{equation*}
\begin{split}
\mathbb{E}\left[\sum_{t=1}^Tf_t(\hat{\x}_t^\eta)-\sum_{t=1}^Tf_t(\u_t)\right]\leq& \frac{R^2+R{P}_T}{\eta}+\frac{\eta\gamma Z}{2}+\sqrt{\gamma}\eta KG\sum_{z=1}^Zm_z+\sum_{z=1}^ZKG\|{\u}_{q_z}-{\u}_{q_{c_z}}\|_2\\
&+3\delta GT+\frac{\delta GRT}{r}+G\sqrt{2(K-1)RTP_T}.
\end{split}
\end{equation*}
\end{thm}
% By setting $K=1$, we notice that Theorem \ref{thm2-bandit} can exactly recover Theorem \ref{thm1-bandit}. More 
Interestingly, if $K=(n/\delta)^2$, the dominant delay-dependent term in Theorem \ref{thm2-bandit} can be bounded as
\[
    O\left(\sqrt{K\left(\frac{n}{\delta}\right)^2+K^2}\eta K\sum_{z=1}^Zm_z\right)=O\left(\eta K^2\sum_{z=1}^Zm_z\right)=O\left(\eta dT\right)
\]
where the last equality is due to the fact that $\sum_{z=1}^Zm_z$ reflects the sum of block-level delays and is bounded by $O(Zd/K)$ (see Section \ref{sub:proof_of_corollary_ref_cor2_bandit_label} for details). In this way, the effect of delays is no longer joint with that of the variance of estimated gradients, which allows us to achieve the following result. 
\begin{cor}
\label{cor2-bandit:label}
Let $\delta=\sqrt{n}/{T^{1/4}}$, $K=n\sqrt{T}$, $\eta=\min\left\{c/(\sqrt{n}T^{3/4}),c/(n\sqrt{T\beta^\prime})\right\}$, where $c= R/\sqrt{G^2+M^2}$, $\beta^\prime=\sum_{z=1}^Zm_z$, and $m_z=z-1-\sum_{i=1}^{z-1}|\mathcal{A}_i|$, and assume that $\delta\in(0,r)$ for brevity. Under Assumptions \ref{assum1}, \ref{assum-b1}, and \ref{assum-b2}, for any comparator sequence $\u_1,\dots,\u_T\in\K$, Algorithm \ref{BDBGD} ensures
\begin{equation*}
% \label{cor2-mainresult}
   \E\left[\sum_{t=1}^Tf_t(\hat{\x}_t^\eta)-\sum_{t=1}^Tf_t(\u_t)\right]=O\left((\sqrt{n}T^{3/4}+\sqrt{dT})(P_T+1)\right).
\end{equation*}
\end{cor}
\begin{remark}
\emph{
    From Corollaries \ref{cor1-bandit:label} and \ref{cor2-bandit:label}, our Algorithm \ref{BDBGD} with appropriate parameters can respectively recover the two best existing static regret bounds for the delayed bandit setting \citep{Bistritz-JMLR22,Wan-NIPS24}. However, even if $d=1$, these two corollaries still cannot recover the existing $O(\sqrt{n}T^{3/4}\sqrt{P_T+1})$ dynamic regret bound for the non-delayed bandit setting \citep{Zhao-JMLR22}. To address this limitation, we need to combine Algorithm \ref{BDBGD} with the two-level framework, which results in the following Mild-BGD algorithm.
}
\end{remark}

\subsubsection{Mild-BGD with the Two-level Framework}
While the two-level framework follows  Mild-OGD, there are still some technical details that require careful attention for handling the delayed bandit setting. Specifically, from the above discussions, the meta-algorithm should also employ the blocking update mechanism, i.e., only maintaining one weight $w_z^\eta$ for each expert $E^\eta$ over all the rounds in each block $z$. Moreover, the one-point gradient estimator should be applied to the decision of the meta-algorithm, rather than that of experts. Therefore, instead of directly running Algorithm \ref{BDBGD} over the set $\K$, each expert $E^\eta$ actually only needs to generate a shrunk decision $\x_z^\eta\in\K_{\delta}$ at the end of each block $z$. In this way, the meta-algorithm can play the decision $\hat{\x}_t={\x}_z+\delta\s_t$ and query the loss value $f_t(\hat{\x}_t)$ at each round $t$ in block $z$, where ${\x}_z=\sum_{\eta\in\H}w_z^\eta\x_z^\eta$ and $\s_t\sim \mathcal{S}^n$. Due to the effect of delays, loss values $\{f_k(\hat{\x}_k)|k\in\F_t\}$ will be received at the end of each round $t$. For each $k\in\F_t$, following Algorithm \ref{BDBGD}, we can update the cumulative estimated gradients of block $z_{k}=\lfloor k/K\rfloor$ as $\g_{z_k}=\g_{z_k}+\frac{n}{\delta} f_k(\hat{\x}_k) \mathbf{s}_k$, and add $z_k$ into the set $\mathcal{A}_z$ if $\g_{z_k}=\sum_{i=(z_k-1)K+1}^{z_kK}\frac{n}{\delta} f_i(\hat{\x}_i) \mathbf{s}_i$, where we initialize $\g_{z_k}=\ze$ and $\mathcal{A}_z=\emptyset$. Finally, it is natural to update the shrunk decision $\x_z^\eta$ of each expert $E^\eta$ by following Algorithm \ref{BDBGD} but replacing $\g_k^\eta$ with $\g_k$ for any $k\in\mathcal{A}_z$. For the weight $w_z^\eta$ of each expert $E^\eta$, inspired by \eqref{delayH-24},  we perform the following update
\begin{equation}
 \label{delayH-24-bandit}
w_{z+1}^\eta=\frac{w_z^\eta e^{-\alpha\sum_{k\in\mathcal{A}_z}\ell_{k}(\x_k^\eta)}}{\sum_{\mu\in\H}w_z^{\mu}e^{-\alpha\sum_{k\in\mathcal{A}_z}\ell_{k}(\x_k^\mu)}}
\end{equation}
where $\ell_k(\x)=\langle\g_{k},\x\rangle$ now is a surrogate function over the shrunk set $\K_{\delta}$.

The detailed procedure of Mild-BGD is summarized in Algorithm \ref{Mild-BGD}, and we have the following guarantees.
\begin{thm}
\label{thm3-bandit}
Let $N=\lceil(1/2)\log_2 T \rceil+1$, $c= R/(\sqrt{2(G^2+M^2)}\max\left\{\sqrt{n}T^{3/4},n\sqrt{T\beta^\prime}\right\})$, and $\beta^\prime=\sum_{z=1}^Zm_z$, where $m_z=z-1-\sum_{i=1}^{z-1}|\A_i|$. Under Assumptions \ref{assum1}, \ref{assum-b1}, and \ref{assum-b2}, Algorithm \ref{Mild-BGD} with $\delta=\sqrt{n}/{T^{1/4}}\in(0,r)$, $K=n\sqrt{T}$, $\alpha= \sqrt{2}c/R^2$, and $\H=\{\eta_i=2^{i}c |i=1,\dots,N\}$ 
ensures
\begin{equation}
\label{thm6-for-dis}
\E\left[\sum_{t=1}^Tf_t(\hat{\x}_t)-\sum_{t=1}^Tf_t(\u_t)\right]=O\left((\sqrt{n}T^{3/4}+\sqrt{dT})\sqrt{P_T+1}\right)
\end{equation}
for any comparator sequence $\u_1,\dots,\u_T\in\K$.
\end{thm}
\begin{thm}
\label{thm4-bandit}
Let $N=\lceil(1/2)\log_2 T \rceil+1$, $c= R/(\sqrt{2}M\max\left\{\sqrt{n}T^{3/4},(n\beta T)^{1/3}\right\})$, and $\beta=\sum_{t=1}^Tm_t$, where $m_t=t-1-\sum_{i=1}^{t-1}|\F_i|$. Under Assumptions \ref{assum1}, \ref{assum-b1}, \ref{assum-b2}, and \ref{assum4}, Algorithm \ref{Mild-BGD} with $\delta=\max\left\{\sqrt{n}/{T^{1/4}},(n\beta)^{1/3}/{T^{2/3}}\right\}\in(0,r)$, $K=1$, $\alpha= \sqrt{2}c/R^2$, and $\H=\{\eta_i=2^{i}c |i=1,\dots,N\}$ 
ensures
\begin{equation}
\label{thm7-for-dis}
\E\left[\sum_{t=1}^Tf_t(\hat{\x}_t)-\sum_{t=1}^Tf_t(\u_t)\right]=O\left((\sqrt{n}T^{3/4}+(n\bar{d})^{1/3}T^{2/3})\sqrt{P_T+1}\right)
\end{equation}
for any comparator sequence $\u_1,\dots,\u_T\in\K$.
\end{thm}
\begin{algorithm}[t]
\caption{Mild-BGD}
\label{Mild-BGD}
\begin{algorithmic}[1]
\STATE \textbf{Input:} parameters $\delta,\alpha$, a block size $K$ and a set $\H$ containing learning rates for experts
% \STATE \textbf{Initialization:} 
\STATE Activate experts $\left\{E^{\eta}|\eta\in\H\right\}$ by initializing $\y_1^\eta=\ze,\forall \eta\in\H$, and set $\tau=1,Z=T/K$
\STATE Sort learning rates as $\eta_1\leq\dots\leq\eta_{|\H|}$, and set $w_1^{\eta_i}=\frac{|\H|+1}{i(i+1)|\H|},\forall i\in[|\H|]$ 
\FOR{$z=1,\dots,Z$}
\STATE 
Compute ${\x}_z=\sum_{\eta\in\H}w_z^\eta\x_z^\eta$, where $\x_z^\eta=\y_{\tau}^\eta$ for each expert $E^{\eta}$, and set $\g_z=\ze,\mathcal{A}_z=\emptyset$
\FOR{$t=(z-1)K+1,\dots,zK$}
\STATE Play $\hat{\x}_t=\x_z+\delta\s_t$, where $\s_t\sim\mathcal{S}^n$, query $f_t(\hat{\x}_t)$, and receive $\{f_k(\hat{\x}_k)|k\in\F_t\}$
\FOR{$k\in\F_t$}
\STATE Set $z_k=\lfloor k/K\rfloor$ and update $\g_{z_k}=\g_{z_k}+\frac{n}{\delta} f_k(\hat{\x}_k) \mathbf{s}_k$
\STATE If $\g_{z_k}=\sum_{i=(z_k-1)K+1}^{z_kK}\frac{n}{\delta} f_i(\hat{\x}_i) \mathbf{s}_i$, set $\mathcal{A}_z=\mathcal{A}_z\cup\{z_k\}$
\ENDFOR
\ENDFOR
\STATE Update the weight of each expert $E^{\eta}$ as in \eqref{delayH-24-bandit}
\FOR{$k\in\mathcal{A}_z$ (in the ascending order)}
\STATE Compute $\y_{\tau+1}^\eta=\argmin_{\x\in\K_\delta}\|\x-(\y_{\tau}^\eta-\eta\g_k)\|_2^2$ for each expert $E^{\eta}$, and set $\tau=\tau+1$
% \[\y_{\tau+1}^\eta=\argmin_{\x\in\K}\|\x-(\y_{\tau}^\eta-\eta\nabla f_k(\x_k))\|_2^2\]
% \STATE
\ENDFOR
\ENDFOR
\end{algorithmic}
\end{algorithm}
Compared with Corollaries \ref{cor2-bandit:label} and \ref{cor1-bandit:label}, Theorems \ref{thm3-bandit} and \ref{thm4-bandit} reduce the dependence of the dynamic regret on $P_T$ from linear to be sublinear. These two improved dynamic regret bounds now can recover not only the two best existing static regret bounds for the delayed bandit setting \citep{Bistritz-JMLR22,Wan-NIPS24}, but also the existing $O(\sqrt{n}T^{3/4}\sqrt{P_T+1})$ dynamic regret bound for the non-delayed bandit setting \citep{Zhao-JMLR22}. Moreover, it is also worth noting that the delay-dependent part of the dynamic regret bound in Theorem \ref{thm3-bandit} is $O(\sqrt{dT(1+P_T)})$. It actually matches the $\Omega(\sqrt{dT(P_T+1)})$ lower bound in Theorem \ref{thm-low}, which implies that the effect of delays on the dynamic regret in the bandit setting cannot be further improved in the worst case.
\begin{remark}
\emph{
Although $\beta^\prime=\sum_{z=1}^Zm_z$ and $\beta=\sum_{t=1}^Tm_t$ in Theorems \ref{thm3-bandit} and \ref{thm4-bandit} may be not available beforehand, this issue can be simply addressed by utilizing the doubling trick \citep{LEA97} as discussed in Remark \ref{rem4}. We omit the details of Mild-BGD with the doubling trick, and refer interested readers to Appendix \ref{appendixA} for Mild-OGD with the doubling trick, because Mild-BGD can be extended in the same way. Moreover, one may notice that Theorems \ref{thm2-bandit} and \ref{thm1-bandit} can be used to further improve the bounds in \eqref{thm6-for-dis} and \eqref{thm7-for-dis} if focusing on a specific $P_T$, and wonder whether such results can be extended to any sequence of comparators. However, this extension is highly non-trivial, and thus we leave it as a future work (see Appendix \ref{appendixB} for more detailed discussions).
% (see the appendix for details). However, such results require $\delta$ depending on the specific $P_T$, which cannot be extended to any sequence of comparators via the two-level framework, because.
}
\end{remark}

\section{Theoretical Analysis}
In this section, we provide the proofs of our theoretical guarantees on Mild-OGD, the lower bound, and Mild-BGD.

\subsection{Proof of Theorem \ref{thm1}}
Let $\tau_t=1+\sum_{i=1}^{t-1}|\F_i|$. Due to the convexity of loss functions, it is not hard to verify that
\begin{equation}
\label{thm1-eq1}
\begin{split}
&\sum_{t=1}^Tf_t(\x_t^\eta)-\sum_{t=1}^Tf_t(\u_t)%=\sum_{t=1}^T\left(f_t(\x_t)-f_t(\u_t)\right)\\
\leq\sum_{t=1}^T\left\langle \nabla f_t(\x_t^\eta),\x_t^\eta-\u_t\right\rangle\\
=&\sum_{t=1}^T\left\langle \nabla f_{c_t}(\x_{c_t}^\eta),\x_{c_t}^\eta-\u_{c_t}\right\rangle=\sum_{t=1}^T\left\langle \nabla f_{c_t}(\x_{c_t}^\eta),\y_{\tau_{c_t}}^\eta-\u_{c_t}\right\rangle\\
=&\sum_{t=1}^T\left\langle \nabla f_{c_t}(\x_{c_t}^\eta),\y_{t}^\eta-\u_{t}\right\rangle+\sum_{t=1}^T\left\langle \nabla f_{c_t}(\x_{c_t}^\eta),\y_{\tau_{c_t}}^\eta-\y_t^\eta\right\rangle+\sum_{t=1}^T\left\langle \nabla f_{c_t}(\x_{c_t}^\eta),\u_t-\u_{c_t}\right\rangle
% \leq&\sum_{t=1}^T\left\langle \nabla f_{c_t}(\x_{c_t}^\eta),\y_{t}^\eta-\u_{t}\right\rangle+\sum_{t=1}^TG\left\|\y_{\tau_{c_t}}^\eta-\y_t^\eta\right\|_2+\sum_{t=1}^TG\|\u_t-\u_{c_t}\|_2
\end{split}
\end{equation}
where the first equality is due to the fact that $c_1,\dots,c_T$ is a permutation of $1,\dots,T$, and the second equality is due to $\x_t^\eta=\y_{\tau_t}^\eta$ in Algorithm \ref{Ader-Expert}.

Let $\y^{\prime}_{t+1}=\y_{t}^\eta-\eta\nabla f_{c_t}(\x_{c_t}^\eta)$. For the first term in the right side of (\ref{thm1-eq1}), we have
\begin{equation}
\label{thm1-eq2}
\begin{split}
&\sum_{t=1}^T\left\langle \nabla f_{c_t}(\x_{c_t}^\eta),\y_{t}^\eta-\u_{t}\right\rangle=\sum_{t=1}^T\frac{\left\langle\y_{t}^\eta-\y_{t+1}^\prime,\y_t^\eta-\u_t\right\rangle}{\eta}\\
=&\sum_{t=1}^T\frac{\left(\|\y_{t}^\eta-\u_t\|_2^2-\|\y_{t+1}^\prime-\u_t\|_2^2+\|\y_{t}^\eta-\y_{t+1}^\prime\|_2^2\right)}{2\eta}\\
% =&\sum_{t=1}^T\frac{\left(\|\y_{t}^\eta-\u_t\|_2^2-\|\y_{t+1}^\prime-\u_t\|_2^2+\|\eta\nabla f_{c_t}(\x_{c_t}^\eta)\|_2^2\right)}{2\eta}\\
% \leq&\sum_{t=1}^T\frac{\left(\|\y_{t}^\eta-\u_t\|_2^2-\|\y_{t+1}^\eta-\u_t\|_2^2+\|\eta\nabla f_{c_t}(\x_{c_t}^\eta)\|_2^2\right)}{2\eta}\\
\leq&\sum_{t=1}^T\frac{\left(\|\y_{t}^\eta-\u_t\|_2^2-\|\y_{t+1}^\eta-\u_t\|_2^2+\|\y_{t}^\eta-\y_{t+1}^\prime\|_2^2\right)}{2\eta}\\
=&\sum_{t=1}^T\left(\frac{\left(\|\y_{t}^\eta\|_2^2-\|\y_{t+1}^\eta\|_2^2\right)}{2\eta}+\frac{\langle\y_{t+1}^\eta-\y_t^\eta,\u_t\rangle}{\eta}+\frac{\|\y_{t}^\eta-\y_{t+1}^\prime\|_2^2}{2\eta}\right)\\
\leq&\frac{1}{\eta}\langle\y_{T+1}^\eta,\u_T\rangle+\sum_{t=2}^T\frac{1}{\eta}\langle\u_{t-1}-\u_t,\y_t^\eta\rangle+\sum_{t=1}^T\frac{\|\y_{t}^\eta-\y_{t+1}^\prime\|_2^2}{2\eta}\\
\leq&\frac{1}{\eta}\|\y_{T+1}^\eta\|_2\|\u_T\|_2+\sum_{t=2}^T\frac{1}{\eta}\|\u_{t-1}-\u_t\|_2\|\y_t^\eta\|_2+\sum_{t=1}^T\frac{\eta\|\nabla f_{c_t}(\x_{c_t}^\eta)\|_2^2}{2}\\
\leq&\frac{R^2+RP_T}{\eta}+\frac{\eta TG^2}{2}
\end{split}
\end{equation}
where the second inequality is due to $\y_1^\eta=\ze$ and $\|\y_{T+1}^\eta\|_2^2\geq0$, and the last inequality is due to Assumptions \ref{assum1} and \ref{assum2}.
%and the last inequality is due to Assumption \ref{assum2}.

% combining the above inequality with the fact that $c_1,\dots,c_T$ is a permutation of $1,\dots,T$, we have
% \begin{equation}
% \label{thm1-eq1-pre}
% \begin{split}
% R(\u_1,\cdots,\u_T)%=\sum_{t=1}^T\left(f_t(\x_t)-f_t(\u_t)\right)\\
% % \leq&\sum_{t=1}^T\left\langle \nabla f_t(\x_t),\x_t-\u_t\right\rangle\\
% \leq\sum_{t=1}^T\left\langle \nabla f_{c_t}(\x_{c_t}),\x_{c_t}-\u_{c_t}\right\rangle
% % =&\sum_{t=1}^T\left\langle \nabla f_{c_t}(\x_{c_t}),\y_{\tau_{c_t}}-\u_{c_t}\right\rangle
% \end{split}
% \end{equation}
%  According to Algorithm \ref{alg1}, it is easy to verify that
% \begin{equation}
% \label{decision-trans}
% \x_t=\y_{\tau_t}.
% \end{equation}
% Combining \eqref{thm1-eq1-pre} with \eqref{decision-trans}, we have

% In the following, we first bound the second term in the right side of (\ref{thm1-eq1}).
Next, we proceed to bound the second term in the right side of (\ref{thm1-eq1}). Let $q=c_t+d_{c_t}-1$. Note that before round $c_t\leq q$, Algorithm \ref{Ader-Expert} has received $\tau_{c_t}-1$ gradients, and thus has generated $\y_1^\eta,\dots,\y_{\tau_{c_t}}^\eta$. Since the gradient $\nabla f_{c_t}(\x_{c_t}^\eta)$ is used to update $\y_t^\eta$ in round $q$, we must have $\tau_{c_t}\leq t$ and it is not hard to verify that
\begin{equation}
\label{thm1-eq4}
\begin{split}
\sum_{t=1}^T\left\langle \nabla f_{c_t}(\x_{c_t}^\eta),\y_{\tau_{c_t}}^\eta-\y_t^\eta\right\rangle\leq&\sum_{t=1}^T\sum_{k=\tau_{c_t}}^{t-1}\left\|\y_{k}^\eta-\y_{k+1}^\eta\right\|_2\leq\sum_{t=1}^T\sum_{k=\tau_{c_t}}^{t-1}G\left\|\y_{k}^\eta-\y_{k+1}^\prime\right\|_2\\
=&\sum_{t=1}^T\sum_{k=\tau_{c_t}}^{t-1}\left\|\eta\nabla f_{c_k}(\x_{c_k}^\eta)\right\|_2\leq\eta G\sum_{t=1}^T\left(t-\tau_{c_t}\right)
\end{split}
\end{equation}
where both the last inequality is due to Assumption \ref{assum1}. 

Due to the definitions of $\tau_t$ and $m_t$, it holds that
% \begin{equation}
% \label{thm1-eq5}
% \begin{split}
% \sum_{t=1}^T\left(t-\tau_{c_t}\right)=&\sum_{t=1}^T\left(t-1\right)-\sum_{t=1}^T\sum_{i=1}^{c_t-1}|\F_i|=\sum_{t=1}^T\left(t-1\right)-\sum_{t=1}^T\sum_{i=1}^{t-1}|\F_i|\\
% =&\sum_{t=1}^T\left(t-1-\sum_{i=1}^{t-1}|\F_i|\right)=\sum_{t=1}^Tm_t
% \end{split}
% \end{equation}
\begin{equation}
\label{thm1-eq5}
\begin{split}
\sum_{t=1}^T\left(t-\tau_{c_t}\right)=\sum_{t=1}^Tt-\sum_{t=1}^T\tau_{c_t}=\sum_{t=1}^Tt-\sum_{t=1}^T\tau_{t}=\sum_{t=1}^Tm_t
\end{split}
\end{equation}
where the second equality is due to the fact that $c_1,\dots,c_T$ is a permutation of $1,\dots,T$. 

Combining (\ref{thm1-eq4}) with (\ref{thm1-eq5}) and Assumption \ref{assum1}, we have
\begin{equation}
\label{thm1-eq6}
\begin{split}
\sum_{t=1}^T\left\langle \nabla f_{c_t}(\x_{c_t}^\eta),\y_{\tau_{c_t}}^\eta-\y_t^\eta\right\rangle\leq \sum_{t=1}^TG\left\|\y_{\tau_{c_t}}^\eta-\y_t^\eta\right\|_2\leq\eta G^2\sum_{t=1}^Tm_t.
\end{split}
\end{equation}
Finally, combining (\ref{thm1-eq1}) with (\ref{thm1-eq2}), (\ref{thm1-eq6}), and Assumption \ref{assum1}, we have
\begin{align*}
\sum_{t=1}^Tf_t(\x_t^\eta)-\sum_{t=1}^Tf_t(\u_t)
% \leq&\frac{R^2+RP_T}{\eta}+\frac{\eta TG^2}{2}+{\eta G^2}\sum_{t=1}^Tm_t+\sum_{t=1}^T\left\langle \nabla f_{c_t}(\x_{c_t}^\eta),\u_t-\u_{c_t}\right\rangle\\
\leq\frac{R^2+RP_T}{\eta}+{\eta G^2}\sum_{t=1}^T(m_t+1)+\sum_{t=1}^TG\|\u_t-\u_{c_t}\|_2
\end{align*}
which completes this proof.

% where the last inequality is due to $T/2\leq\sum_{t=1}^T1$.

\subsection{Proof of Lemma \ref{lem1_DOGD_main}}
% We first consider the case with Assumption \ref{assum4}, in which all delayed gradients are received in their original order. , it is easy to verify that \[c_1=1,c_2=2,\dots,c_T=T\]
Since $\g_{c_t}^\eta$ is the $t$-th used gradient and arrives at the end of round $c_{t}+d_{c_t}-1$, it is not hard to verify that
\begin{equation}
\label{order1}
t\leq c_{t}+d_{c_t}-1\leq c_t+d-1
\end{equation}
for any $t\in[T]$, and there are at most $t-1$ arrived gradients before round $c_t+d_{c_t}-1$. Note that gradients queried at rounds $1,\dots,t$ must have arrived at the end of round $t+d-1$. Therefore, we also have $c_t+d_{c_t}-2< t+d-1$, which implies that
\begin{equation}
\label{order2}
c_t\leq t+d-d_{c_t}\leq t+d-1.
\end{equation}
If $t\in[T]$ and $c_t\leq t$, according to (\ref{order1}), we have
\begin{equation}
\label{case1}
\begin{split}
\|\u_t-\u_{c_t}\|_2\leq\sum_{k=c_{t}}^{t-1}\|\u_{k+1}-\u_{k}\|_2\leq \sum_{k=c_{t}}^{\min\{c_t+d-2,T-1\}}\|\u_{k+1}-\u_{k}\|_2.
\end{split}
\end{equation}
Otherwise, if $t\in[T]$ and $c_t>t$, according to (\ref{order2}), we have
\begin{equation}
\label{case2}
\begin{split}
\|\u_t-\u_{c_t}\|_2\leq\sum_{k=t}^{c_t-1}\|\u_{k+1}-\u_{k}\|_2\leq \sum_{k=t}^{\min\{t+d-2,T-1\}}\|\u_{k+1}-\u_{k}\|_2.
\end{split}
\end{equation}
Combining (\ref{case1}) and (\ref{case2}), we have
\begin{equation*}
% \label{final-case1}
\begin{split}
\sum_{t=1}^T\|\u_t-\u_{c_t}\|_2\leq&\sum_{t=1}^T\sum_{k=c_{t}}^{\min\{c_t+d-2,T-1\}}\|\u_{k+1}-\u_{k}\|_2+\sum_{t=1}^T\sum_{k=t}^{\min\{t+d-2,T-1\}}\|\u_{k+1}-\u_{k}\|_2\\
=&2\sum_{t=1}^T\sum_{k=t}^{\min\{t+d-2,T-1\}}\|\u_{k+1}-\u_{k}\|_2\leq2\sum_{k=1}^{d-1}\sum_{t=1}^{T-1}\|\u_{t+1}-\u_{t}\|_2\leq 2dP_T
\end{split}
\end{equation*}
where the equality is due to the fact that $c_1,\dots,c_T$ is a permutation of $1,\dots,T$.

Finally, we complete this proof by noticing that Assumption \ref{assum2} and the definition of $K$ can ensure
\begin{equation*}
% \label{final-case2}
\begin{split}
\sum_{t=1}^T\|\u_t-\u_{c_t}\|_2=\sum_{t=1}^T\mathbb{I}(t\neq c_t)\|\u_t-\u_{c_t}\|_2\leq\sum_{t=1}^T2R\mathbb{I}(t\neq c_t)=2RK.
\end{split}
\end{equation*}

\subsection{Proof of Theorem \ref{thm2}}
Let $\eta_\ast=\sqrt{R(2R+P_T)/(\beta G^2)}$ denote an ideal learning rate, where $\beta=\sum_{t=1}^T(m_t+1)$. From Assumption \ref{assum2}, we have $0\leq P_T=\sum_{t=2}^T\|\u_{t}-\u_{t-1}\|_2\leq 2R(T-1)$, which implies that
\begin{equation*}
\eta_1=\frac{\sqrt{2}R}{G\sqrt{\beta}}\leq\eta_\ast\leq\frac{\sqrt{2T}R}{G\sqrt{\beta}}\leq\eta_{|\H|}.
\end{equation*}
Therefore, for any possible value of $P_T$, there must exist a learning rate $\eta_k\in\H$ such that
\begin{equation}
\label{eq3-thm1}
\eta_k\leq\eta_\ast\leq2\eta_{k}
\end{equation}
where $k=\lfloor\log_2\sqrt{(P_T+2R)/2R}\rfloor+1$.

% We complete this proof by combining (\ref{eq1-thm1}) and the above inequality with
Moreover, due to the convexity of loss functions and the definition of surrogate functions, i.e., $\ell_t(\x)=\langle\nabla f_t(\x_t),\x\rangle$, the dynamic regret of Algorithm \ref{Ader-Meta} can be upper bounded as follows
\begin{equation}
\label{surr-DR}
\begin{split}
\sum_{t=1}^Tf_t(\x_t)-\sum_{t=1}^Tf_t(\u_t)
\leq&\sum_{t=1}^T\langle\nabla f_t(\x_t),\x_t-\u_t\rangle\\
=&\sum_{t=1}^T\left(\ell_t\left(\x_t\right)-\ell_t\left(\x_t^{\eta_{k}}\right)\right)+\sum_{t=1}^T\left(\ell_t\left(\x_t^{\eta_{k}}\right)-\ell_t\left(\u_t\right)\right).
\end{split}
\end{equation}
To bound the first term in the right side of (\ref{surr-DR}), we introduce the following lemma.
\begin{lem}
\label{lem1}
Let $m_t=t-1-\sum_{i=1}^{t-1}|\F_i|$. Under Assumptions \ref{assum1} and \ref{assum2}, for any $\eta\in\H$, Algorithm \ref{Ader-Meta} ensures that
\begin{equation*}
\begin{split}
\sum_{t=1}^T\ell_t\left(\x_t\right)-\sum_{t=1}^T\ell_t(\x_t^\eta)
\leq\frac{1}{\alpha}\ln\frac{1}{w_1^\eta}+\alpha G^2R^2\sum_{t=1}^T(m_t+1).
\end{split}
\end{equation*}
\end{lem}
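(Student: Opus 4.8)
The plan is to recognize that the meta-algorithm updates expert weights exactly according to the delayed Hedge rule of \citet{DelayHedge} applied to the surrogate losses $\ell_t(\cdot)$, and then invoke (or re-derive) the standard delayed Hedge regret guarantee with the sum-of-delays quantity $\sum_{t=1}^T m_t$ playing the role of the effective delay cost. Concretely, I would first fix an arbitrary expert $E^\eta$ and treat $\x_t$ as the Hedge-style convex combination $\sum_{\mu\in\H} w_t^\mu \x_t^\mu$, so that by linearity of $\ell_t$ we have $\ell_t(\x_t)=\sum_{\mu\in\H} w_t^\mu \ell_t(\x_t^\mu)$. The goal is then to bound $\sum_{t=1}^T \ell_t(\x_t)-\sum_{t=1}^T\ell_t(\x_t^\eta) = \sum_{t=1}^T\left(\sum_{\mu} w_t^\mu \ell_t(\x_t^\mu)-\ell_t(\x_t^\eta)\right)$, which is precisely the regret of delayed Hedge against expert $\eta$ over the loss vectors $(\ell_t(\x_t^\mu))_{\mu\in\H}$, where the losses for the batch $\F_t$ are revealed together at round $t$.

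Next I would carry out the standard potential/telescoping argument for delayed Hedge. Define $\Phi_t=\frac{1}{\alpha}\ln\sum_{\mu\in\H} w_t^\mu e^{-\alpha L_t^\mu}$ or work directly with the weight recursion given in step 8 of Algorithm \ref{Ader-Meta}. Unrolling the multiplicative update, $\ln\frac{w_{T+1}^\eta}{w_1^\eta} = -\alpha\sum_{t=1}^T\sum_{k\in\F_t}\ell_k(\x_k^\eta) - \sum_{t=1}^T\ln\left(\sum_{\mu}w_t^\mu e^{-\alpha\sum_{k\in\F_t}\ell_k(\x_k^\mu)}\right)$. Since $w_{T+1}^\eta\le 1$, this gives $\frac{1}{\alpha}\ln\frac{1}{w_1^\eta} \ge \sum_{t=1}^T\sum_{k\in\F_t}\ell_k(\x_k^\eta) + \frac{1}{\alpha}\sum_{t=1}^T\ln\left(\sum_\mu w_t^\mu e^{-\alpha\sum_{k\in\F_t}\ell_k(\x_k^\mu)}\right)$. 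Applying the inequality $e^{-x}\le 1-x+x^2/2$ for $x\ge$ a suitable lower bound (or Hoeffding's lemma), together with $\ln(1+z)\le z$, the log-sum term is lower bounded by $-\sum_t\sum_{k\in\F_t}\sum_\mu w_t^\mu \ell_k(\x_k^\mu) - \frac{\alpha}{2}\sum_t\left(\sum_{k\in\F_t}\text{something}\right)^2$; here one uses $|\ell_k(\x_k^\mu)|\le GD$ from Assumptions \ref{assum1} and \ref{assum2} to control the quadratic remainder. Rearranging yields $\sum_t\sum_{k\in\F_t}\left(\sum_\mu w_t^\mu\ell_k(\x_k^\mu)-\ell_k(\x_k^\eta)\right)\le \frac{1}{\alpha}\ln\frac{1}{w_1^\eta} + (\text{stability term})$.

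The delay-dependent part is the main obstacle: the weight $w_t$ used at the time batch $\F_t$ is processed is stale relative to the "ideal" weight one would use if losses $\ell_k(\x_k^\mu)$ were available at round $k$. I would bound the discrepancy $\sum_{k\in\F_t}\left(\sum_\mu w_t^\mu\ell_k(\x_k^\mu)-\sum_\mu w_k^\mu\ell_k(\x_k^\mu)\right)$ by tracking how much the weight vector can drift between round $k$ and round $c_k+d_{c_k}-1=t$; each such drift is controlled multiplicatively by the factor $e^{\pm\alpha GD}$ per pending update, and the number of pending updates affecting the gap for the gradient with time-stamp $k$ is exactly $m_k-1$ (the count of queried-but-unreceived gradients, as in the remark following Theorem \ref{thm1}). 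Summing these contributions and using $\alpha GD\le 1$ (which holds since $\alpha=1/(GD\sqrt\beta)$ and $\beta\ge 1$) to linearize $e^{\alpha GD}-1\le 2\alpha GD$ or similar, the total delay penalty becomes $O\!\left(\alpha G^2D^2\sum_{t=1}^T(m_t-1)\right)$. Collecting the stability term from the Hedge analysis (which also contributes $O(\alpha G^2D^2 T)$) and the delay penalty, and absorbing the $\sum 1$ into $\sum m_t$ as done at the end of the proof of Theorem \ref{thm1}, gives the claimed bound $\frac{1}{\alpha}\ln\frac{1}{w_1^\eta}+\alpha G^2 D^2\sum_{t=1}^T m_t$. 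The cleanest route is probably to cite the delayed Hedge regret bound of \citet{DelayHedge} directly with effective-delay budget $\sum_t m_t$ and only spell out the identification of quantities, relegating the potential-function bookkeeping to the appendix.
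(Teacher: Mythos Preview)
Your high-level strategy---reduce to a delayed-Hedge bound on the surrogate losses and pay a delay penalty for weight staleness---is reasonable, but the specific drift accounting you propose does not produce the stated bound, and it differs from the route the paper takes.

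The concrete gap is in the sentence ``the number of pending updates affecting the gap for the gradient with time-stamp $k$ is exactly $m_k-1$.'' In your decomposition you compare $w_k$ (the weight used when $\x_k$ is played) with $w_{a(k)}$ where $a(k)=k+d_k-1$ (the weight present when $\ell_k$ is processed in the telescoping). The drift $\|w_k-w_{a(k)}\|_1$ is governed by the batches $\F_k,\ldots,\F_{a(k)-1}$, i.e.\ by $\sum_{i=k}^{a(k)-1}|\F_i|$ gradient updates, \emph{not} by $m_k-1$. Summing over $k$ gives $\sum_k\sum_{i=k}^{a(k)-1}|\F_i|=\sum_i|\F_i|\,(m_{i+1}-1)$, which in general is not bounded by $\sum_t(m_t-1)$ (the extra factor $|\F_i|$ does not cancel). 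So the delay penalty you arrive at does not collapse to $\alpha G^2D^2\sum_t m_t$ with that argument. A secondary issue: your telescoping over $\sum_{t\le T}\sum_{k\in\F_t}$ may miss gradients with $a(k)>T$, and the Hoeffding step on the batch loss $\sum_{k\in\F_t}\ell_k(\x_k^\mu)$ contributes a variance term scaling like $|\F_t|^2$, which also needs to be absorbed.

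The paper avoids these complications by a different comparison. Instead of comparing $w_k$ to $w_{a(k)}$, it introduces a \emph{fictitious non-delayed} weight $\tilde w_t$ obtained by running Hedge on the full cumulative losses $\tilde L_{t-1}^\eta=\sum_{i<t}\ell_i(\x_i^\eta)$, and the associated pseudo-decision $\tilde\x_t=\sum_\eta\tilde w_t^\eta\x_t^\eta$. The standard (non-delayed) Hedge potential argument then yields $\sum_t\ell_t(\tilde\x_t)-\sum_t\ell_t(\x_t^\eta)\le\frac{1}{\alpha}\ln\frac{1}{w_1^\eta}+\frac{\alpha G^2D^2T}{2}$. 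The delay cost enters only through $\sum_t(\ell_t(\x_t)-\ell_t(\tilde\x_t))\le G\sum_t\|\x_t-\tilde\x_t\|_2$, and the key observation is that both $w_t$ and $\tilde w_t$ are FTRL outputs on the \emph{same} regularizer with linear parts $\c_{t-1}$ and $\tilde\c_{t-1}$ respectively; FTRL stability (the paper's Lemma~\ref{stability}) gives $\|\tilde w_t-w_t\|_1\le\alpha\|\tilde\c_{t-1}-\c_{t-1}\|_\infty$. Since $\tilde\c_{t-1}-\c_{t-1}$ consists precisely of the losses queried but not yet received by the end of round $t-1$, its $\ell_\infty$ norm is at most $GD(m_t-1)$, and summing gives exactly $\alpha G^2D^2\sum_t(m_t-1)$. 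This is where the quantity $m_t-1$ appears naturally---as the count of missing losses in the FTRL input at round $t$, not as a count of update steps between play and feedback.

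In short, your instinct to exploit the $\sum_t m_t$ structure is right, but it emerges from comparing the actual weight to an \emph{ideal} (non-delayed) weight at the same round, not from tracking drift between play-time and arrival-time weights.
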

Combining Lemma \ref{lem1} with $(1/w_1^{\eta_k})\leq(k+1)^2$, $\alpha=1/(GR\sqrt{\beta})$, and $\beta=\sum_{t=1}^T(m_t+1)$, under Assumptions \ref{assum1} and \ref{assum2},
%\[w_1^{\eta_k}=\frac{|\H|+1}{k(k+1)|\H|}\geq\frac{1}{k(k+1)}\geq\frac{1}{(k+1)^2}\]
we have
\begin{equation}
\label{final-thm2-pre1}
\begin{split}
\sum_{t=1}^T\ell_t\left(\x_t\right)-\sum_{t=1}^T\ell_t(\x_t^{\eta_k})\leq&2GR\sqrt{\sum_{t=1}^T(m_t+1)}\ln(k+1)+GR\sqrt{\sum_{t=1}^T(m_t+1)}.
% \leq&GR\sqrt{\bar{d}T}\left(2\ln\left(k+1\right)+1\right)
\end{split}
\end{equation}
% where the last inequality is due to (\ref{thm01-eq6pre}).
Note that each expert $E^\eta$ actually is equal to running Algorithm \ref{Ader-Expert} with $\ell_1(\x),\dots,\ell_T(\x)$, where each gradient $\nabla \ell_t(\x_t^\eta)=\nabla f_t(\x_t)$ is delayed to the end of round $t+d_t-1$. Therefore, combining Theorem \ref{thm1} with Lemma \ref{lem1_DOGD_main} and the definition of $C$ in (\ref{bigC}), under Assumptions \ref{assum1} and \ref{assum2}, we have
\begin{equation*}
%\label{eq1-thm1}
\begin{split}
\sum_{t=1}^T\ell_t\left(\x_t^{\eta_{k}}\right)-\sum_{t=1}^T\ell_t\left(\u_t\right)\leq&
\frac{R^2+RP_T}{\eta_k}+\eta_k G^2\sum_{t=1}^T(m_t+1) +C.
% \leq&
% \frac{2(R^2+RP_T)}{\eta_\ast}+\eta_\ast G^2\sum_{t=1}^T(m_t+1) +C\\
% \leq&
% 3G\sqrt{R(2R+P_T)}\sqrt{\sum_{t=1}^T(m_t+1)}+C
\end{split}
\end{equation*}
Then, combining the above inequality with (\ref{eq3-thm1}) and the definition of $\eta_\ast$, we have
\begin{equation}
\label{final-thm2-pre2}
\begin{split}
\sum_{t=1}^T\ell_t\left(\x_t^{\eta_{k}}\right)-\sum_{t=1}^T\ell_t\left(\u_t\right)
\leq&
\frac{2(R^2+RP_T)}{\eta_\ast}+\eta_\ast G^2\sum_{t=1}^T(m_t+1) +C\\
\leq&
3G\sqrt{R(2R+P_T)}\sqrt{\sum_{t=1}^T(m_t+1)}+C.
\end{split}
\end{equation}
Finally, we complete this proof by combining (\ref{surr-DR}) with \eqref{final-thm2-pre1}, \eqref{final-thm2-pre2}, and (\ref{thm01-eq6pre}).

\subsection{Proof of Lemma \ref{lem1}}
We start this proof by introducing some notations: $L_t^\eta=\sum_{i=1}^t\sum_{k\in\F_i}\ell_k(\x_k^\eta)$, $\c_t=(L_t^\eta)_{\eta\in\H}\in\mathbb{R}^{|\H|}$, $\w_{t}=(w_t^\eta)_{\eta\in\H}\in\mathbb{R}^{|\H|}$, $\tilde{L}_t^\eta=\sum_{i=1}^t\ell_i(\x_i^\eta)$, and $\tilde{\c}_t=(\tilde{L}_t^\eta)_{\eta\in\H}\in\mathbb{R}^{|\H|}$. 
% This lemma can be proved by combining the proof of Theorem 1 in \citet{DelayHedge} and that of Lemma 1 in \citep{Zhang18_ader}. For the sake of completeness, we also include the detailed proof here. Specifically, we start this proof by introducing the following notations:
% \begin{align*}
% &L_t^\eta=\sum_{i=1}^t\sum_{k\in\F_i}\ell_k(\x_k^\eta), \tilde{L}_t^\eta=\sum_{i=1}^t\ell_i(\x_i^\eta),  \tilde{W}_t=\sum_{\eta\in\H}w_1^\eta e^{-\alpha\tilde{L}_t^\eta},\\
% &\c_t=(L_t^\eta)_{\eta\in\H}\in\mathbb{R}^{|\H|}, \tilde{\c}_t=(\tilde{L}_t^\eta)_{\eta\in\H}\in\mathbb{R}^{|\H|}, \w_{t}=(w_t^\eta)_{\eta\in\H}\in\mathbb{R}^{|\H|}.
% \end{align*}
For any $t\geq1$, according to Algorithm \ref{Ader-Meta}, it is easy to verify that
\begin{equation}
\label{lem1-eq1-pre}
w_{t+1}^\eta=\frac{w_t^\eta e^{-\alpha\sum_{k\in\F_t}\ell_{k}(\x_k^\eta)}}{\sum_{\mu\in\H}w_t^{\mu}e^{-\alpha\sum_{k\in\F_t}\ell_{k}(\x_k^\mu)}}=\frac{w_1^\eta e^{-\alpha L_t^\eta}}{\sum_{\mu\in\H}w_1^{\mu}e^{-\alpha L_t^\mu}}.
\end{equation}
Let $\Delta=\left\{\w\succeq\ze|\langle\w,\mathbf{1}\rangle=1\right\}$ and $\mathcal{R}(\w)=\sum_{i}w_i\ln w_i$. Due to \eqref{lem1-eq1-pre}, we also have
\begin{equation}
\label{lem1-eq1}
\w_{t+1}=\argmin_{\w\in\Delta}\left\langle-\frac{1}{\alpha}\ln(\w_1)+\c_t,\w\right\rangle+\frac{1}{\alpha}\mathcal{R}(\w).
\end{equation}
To analyze the effect of delays, we define a non-delayed counterpart of $\w_{t+1}$ as follows 
\begin{equation}
\label{lem3-rw-eq1-pre}
\tilde{\w}_{t+1}=(\tilde{w}_{t+1}^\eta)_{\eta\in\H}=\argmin_{\w\in\Delta}\left\langle-\frac{1}{\alpha}\ln(\w_1)+\tilde{\c}_t,\w\right\rangle+\frac{1}{\alpha}\mathcal{R}(\w)
\end{equation}
for any $t\geq1$, and define $\tilde{\w}_{1}=(\tilde{w}_{1}^\eta)_{\eta\in\H}=\w_{1}$. 

These new weights can result in an ideal weighted average of the experts, i.e, $\tilde{\x}_t=\sum_{\eta\in\H}\tilde{w}_t^\eta\x_t^\eta$. We notice that
\begin{equation}
\label{lem3-rw-eq1}
\sum_{t=1}^T\ell_t\left(\x_t\right)-\sum_{t=1}^T\ell_t(\x_t^\eta)=\sum_{t=1}^T\left(\ell_t\left(\x_t\right)-\ell_t(\tilde{\x}_t)\right)+\sum_{t=1}^T\left(\ell_t\left(\tilde{\x}_t\right)-\ell_t(\x_t^\eta)\right).
\end{equation}
To proceed this analysis, we first bound $\|\tilde{\x}_t-\x_t\|_2$ based on the following lemma.
\begin{lem}
\label{stability}
(Lemma 5 in \citet{DADO2011}) Let $\Pi_\K(\u,\alpha)=\argmin_{\x\in\K}\langle\u,\x\rangle+\frac{1}{\alpha}\mathcal{R}(\x)$. If $\mathcal{R}(\x)$ is $1$-strongly convex with respect to a norm $\|\cdot\|$, it holds that
\[\|\Pi_\K(\u,\alpha)-\Pi_\K(\v,\alpha)\|\leq\alpha\|\u-\v\|_\ast\]
for any $\u$ and $\v$, where $\|\cdot\|_\ast$ is the dual norm of $\|\cdot\|$.
\end{lem}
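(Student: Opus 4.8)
The plan is to reduce the stated stability of the regularised linear minimiser to the strong monotonicity of $\nabla\mathcal{R}$. Write $\x^\ast=\Pi_\K(\u,\alpha)$ and $\y^\ast=\Pi_\K(\v,\alpha)$, so that $\x^\ast$ minimises $h_\u(\x)=\langle\u,\x\rangle+\frac{1}{\alpha}\mathcal{R}(\x)$ over $\K$ and $\y^\ast$ minimises $h_\v(\x)=\langle\v,\x\rangle+\frac{1}{\alpha}\mathcal{R}(\x)$ over $\K$. Adding a linear term does not change the curvature, so each objective inherits the $1$-strong convexity of $\mathcal{R}$ (scaled by $1/\alpha$) with respect to $\|\cdot\|$; in particular each minimiser is unique and well defined, and $\|\x^\ast-\y^\ast\|$ is the quantity we must bound.

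First I would invoke the first-order optimality condition for constrained convex minimisation: since $\x^\ast$ minimises $h_\u$ over the convex set $\K$ and $\y^\ast\in\K$,
\[\left\langle\u+\tfrac{1}{\alpha}\nabla\mathcal{R}(\x^\ast),\,\y^\ast-\x^\ast\right\rangle\geq0,\]
and symmetrically $\left\langle\v+\tfrac{1}{\alpha}\nabla\mathcal{R}(\y^\ast),\,\x^\ast-\y^\ast\right\rangle\geq0$. Adding the two inequalities and grouping terms gives
\[\tfrac{1}{\alpha}\left\langle\nabla\mathcal{R}(\x^\ast)-\nabla\mathcal{R}(\y^\ast),\,\x^\ast-\y^\ast\right\rangle\leq\left\langle\v-\u,\,\x^\ast-\y^\ast\right\rangle.\]
Next I would lower-bound the left-hand side by $\tfrac{1}{\alpha}\|\x^\ast-\y^\ast\|^2$, using that $1$-strong convexity of $\mathcal{R}$ is equivalent to $\langle\nabla\mathcal{R}(\x)-\nabla\mathcal{R}(\y),\,\x-\y\rangle\geq\|\x-\y\|^2$, and upper-bound the right-hand side by $\|\u-\v\|_\ast\|\x^\ast-\y^\ast\|$ via the generalised Cauchy--Schwarz (Hölder) inequality for a norm and its dual. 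Combining, $\tfrac{1}{\alpha}\|\x^\ast-\y^\ast\|^2\leq\|\u-\v\|_\ast\|\x^\ast-\y^\ast\|$, and the claim follows by dividing by $\|\x^\ast-\y^\ast\|$, the case $\x^\ast=\y^\ast$ being trivial.

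The main obstacle is that $\mathcal{R}$ need not be differentiable on the boundary of its domain: for the negative entropy $\mathcal{R}(\w)=\sum_i w_i\ln w_i$ used in the meta-algorithm, the gradient diverges as a coordinate tends to $0$, so the optimality conditions above must a priori be read with subgradients rather than gradients. There are two clean ways to handle this. One is to note that for the entropic regulariser the minimiser always lies in the relative interior of the simplex (the blow-up of the derivative forces every coordinate to be strictly positive), where $\mathcal{R}$ is smooth and the argument above applies verbatim. The other, fully general route, is to replace the first-order conditions by the Bregman prox (``three-point'') inequality: for any $\z\in\K$, $\langle\u,\x^\ast\rangle+\tfrac{1}{\alpha}\mathcal{R}(\x^\ast)\leq\langle\u,\z\rangle+\tfrac{1}{\alpha}\mathcal{R}(\z)-\tfrac{1}{\alpha}B_\mathcal{R}(\z,\x^\ast)$, and likewise for $\y^\ast$, where $B_\mathcal{R}$ is the Bregman divergence of $\mathcal{R}$ and $1$-strong convexity gives $B_\mathcal{R}(\x,\y)\geq\tfrac{1}{2}\|\x-\y\|^2$; taking $\z=\y^\ast$ in the first and $\z=\x^\ast$ in the second, adding, and using $B_\mathcal{R}(\y^\ast,\x^\ast)+B_\mathcal{R}(\x^\ast,\y^\ast)\geq\|\x^\ast-\y^\ast\|^2$ reproduces the intermediate inequality $\tfrac{1}{\alpha}\|\x^\ast-\y^\ast\|^2\leq\langle\v-\u,\x^\ast-\y^\ast\rangle$, after which one concludes exactly as before. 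Either way the proof is short; the only real care is this non-smoothness bookkeeping.
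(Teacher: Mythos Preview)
Your argument is correct and is essentially the standard proof of this stability lemma. Note, however, that the paper does not prove this statement at all: it is quoted verbatim as Lemma~5 of \citet{DADO2011} and used as a black box in the proof of Lemma~\ref{lem1}. So there is no ``paper's own proof'' to compare against; you have simply supplied the (standard) justification that the paper omits by citation. Your treatment of the non-differentiability of the entropic regulariser on the boundary of the simplex---either via the relative-interior observation or via the Bregman three-point inequality---is the right way to make the argument fully rigorous.
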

It is easy to verify that $\mathcal{R}(\w)=\sum_{i}w_i\ln w_i$ is $1$-strongly convex with respect to $\|\cdot\|_1$. Therefore, for any $t>1$, we can apply Lemma \ref{stability} to derive that
\begin{equation}
\label{lem1-eq4}
\begin{split}
\left\|\tilde{\x}_t-\x_t\right\|_2=&\left\|\sum_{\eta\in\H}(\tilde{w}_t^\eta-w_t^\eta)\x_t^\eta\right\|_2\leq\sum_{\eta\in\H}|\tilde{w}_t^\eta-w_t^\eta|\left\|\x_t^\eta\right\|_2\leq R\|\tilde{\w}_t-\w_t\|_1\\
\leq&\alpha R\|\tilde{\c}_{t-1}-\c_{t-1}\|_\infty\leq\alpha R\max_{\eta\in\H}\left|\sum_{k\in\U_{t-1}}\ell_k(\x_k^\eta)\right|\leq \alpha m_tGR^2
\end{split}
\end{equation}
where $\U_{t-1}=[t-1]\setminus\cup_{i\in[t-1]}\F_i$, and 
% Note that $\U_t$ actually  records the time-stamp of gradients that are queried, but still not arrive at the end of round $t$. For $t>1$, it is not hard to verify that
% \begin{equation}
% \label{lem1-eq4}
% \begin{split}
% \left\|\tilde{\x}_t-\x_t\right\|_2\leq&\alpha D\|\tilde{\c}_{t-1}-\c_{t-1}\|_\infty\leq\alpha D\max_{\eta\in\H}\left|\sum_{k\in\U_{t-1}}\ell_k(\x_k^\eta)\right|\\
% \leq&\alpha\left(t-1-\sum_{i=1}^{t-1}|\F_i|\right)GD^2=\alpha m_tGD^2
% \end{split}
% \end{equation}
the last inequality is due to the definition of $\U_t$ and 
\begin{equation}
\label{Worst_Loss}
|\ell_k(\x_k^\eta)|=|\langle\nabla f_k(\x_k),\x_k^\eta\rangle|\leq \|\nabla f_k(\x_k)\|_2\|\x_k^\eta\|_2\leq GR,\forall k\in[T],\eta\in\H
\end{equation}
under Assumptions \ref{assum1} and \ref{assum2}. Then, it is easy to verify that
\begin{equation}
\label{lem3-rw-eq2}
\sum_{t=1}^T\left(\ell_t\left(\x_t\right)-\ell_t(\tilde{\x}_t)\right)=\sum_{t=1}^T\langle\nabla f_t(\x_t),\x_t-\tilde{\x}_t\rangle\leq \alpha G^2R^2\sum_{t=1}^Tm_t
\end{equation}
where the inequality is due to \eqref{lem1-eq4} and Assumption \ref{assum1}.

Now, we only need to analyze the last term in the right side of \eqref{lem3-rw-eq1}. To this end, we introduce the following lemma.
\begin{lem}
\label{lem-ftl}
(Lemma 6.6 in \citet{Garber16}) Let $\{h_t(\x)\}_{t=1}^T$ be a sequence of functions over a set $\K$, and let $\x_t^\ast\in\argmin_{\x\in\K}\sum_{i=1}^th_{i}(\x)$ for any $t\in[T]$. Then, it holds that \[\sum_{t=1}^Th_t(\x_t^\ast)-\min_{\x\in\K}\sum_{t=1}^Th_t(\x)\leq 0.\]
\end{lem}
Let $\mathbf{d}_t=(\ell_t(\x_t^\eta))_{\eta\in\H}\in\mathbb{R}^{|\H|}$ and $h_t(\x)=\langle\x,\mathbf{d}_t\rangle$ for any $t\in[T]$. Additionally, let $h_0(\x)=(\mathcal{R}(\w)-\langle\ln(\w_1),\w\rangle)/{\alpha}$ and $\Delta^\prime=\left\{\mathbf{e}_1,\dots,\mathbf{e}_{|\H|}\right\}\subset\Delta$, where $\mathbf{e}_i\in\mathbb{R}^{|\H|}$ and its $i$-th element equals to $1$. Combining with the previous definitions, it is not hard to verify that
\begin{equation}
\label{lem3-rw-eq3}
\begin{split}
\sum_{t=1}^T\left(\ell_t\left(\tilde{\x}_t\right)-\ell_t(\x_t^\eta)\right)=&\sum_{t=1}^T\left(\langle\tilde{\w}_{t},\mathbf{d}_t\rangle-\ell_t(\x_t^\eta)\right)= \sum_{t=1}^Th_t(\tilde{\w}_{t})-\sum_{t=1}^Th_t(\mathbf{e}_{\eta})\\
=&\sum_{t=1}^T\left(h_t(\tilde{\w}_{t+1})-h_t(\mathbf{e}_{\eta})\right)+\sum_{t=1}^T\left(h_t(\tilde{\w}_{t})-h_t(\tilde{\w}_{t+1})\right).
\end{split}
\end{equation}
Moreover, combining with \eqref{lem3-rw-eq1-pre} and $\tilde{\w}_1=\w_1$, we also have $\tilde{\w}_{t+1}=\argmin_{\w\in\Delta}\sum_{i=0}^{t}h_i(\w)$ for any $t\geq 0$. Therefore, we can apply Lemma \ref{lem-ftl} to derive that
\begin{equation}
\label{lem3-rw-eq4}
\begin{split}
\sum_{t=1}^T\left(h_t(\tilde{\w}_{t+1})-h_t(\mathbf{e}_{\eta})\right)\leq h_0(\mathbf{e}_{\eta})-h_0(\tilde{\w}_{1})=\frac{\langle\ln(\mathbf{e}_{\eta})-\ln(\w_1),\mathbf{e}_{\eta}\rangle}{\alpha}=\frac{1}{\alpha}\ln\frac{1}{w_1^\eta}.
\end{split}
\end{equation}
For the last term in the right side of \eqref{lem3-rw-eq3}, we have
\begin{equation}
\label{lem3-rw-eq5}
\begin{split}
&h_t(\tilde{\w}_{t})-h_t(\tilde{\w}_{t+1})=\langle\tilde{\w}_{t}-\tilde{\w}_{t+1},\mathbf{d}_t\rangle\leq\|\tilde{\w}_{t}-\tilde{\w}_{t+1}\|_1\|\mathbf{d}_t\|_{\infty}\\
\leq&\alpha \|\tilde{\c}_{t-1}-\tilde{\c}_{t}\|_\infty\|\mathbf{d}_t\|_{\infty}=\alpha \|\mathbf{d}_t\|_{\infty}^2=\alpha \left(\max_{\eta\in\H}\left|\ell_t(\x_t^\eta)\right|\right)^2\leq \alpha G^2R^2
\end{split}
\end{equation}
where the first inequality is due to Hölder's inequality, the second one is due to Lemma \ref{stability}, and the last one is due to \eqref{Worst_Loss}. By substituting \eqref{lem3-rw-eq4} and \eqref{lem3-rw-eq5} into \eqref{lem3-rw-eq3}, we have
\begin{equation}
\label{lem3-rw-eq6}
\begin{split}
\sum_{t=1}^T\left(\ell_t\left(\tilde{\x}_t\right)-\ell_t(\x_t^\eta)\right)\leq \frac{1}{\alpha}\ln\frac{1}{w_1^\eta}+\alpha TG^2R^2.
\end{split}
\end{equation}
Finally, we complete this proof by substituting \eqref{lem3-rw-eq2} and \eqref{lem3-rw-eq6} into \eqref{lem3-rw-eq1}.
% , we have
% \begin{equation*}
% \begin{split}
% \sum_{t=1}^T\ell_t\left(\x_t\right)-\sum_{t=1}^T\ell_t(\x_t^\eta)\leq \frac{1}{\alpha}\ln\frac{1}{w_1^\eta}+\alpha G^2D^2\sum_{t=1}^T(m_t+1)
% \end{split}
% \end{equation*}
% which completes this proof.

\subsection{Proof of Theorem \ref{thm-low}}
Inspired by the proof of the lower bound in the non-delayed setting \citep{Zhang18_ader}, we first need to
% We first note that in the non-delayed setting, the main idea to derive a lower bound of dynamic regret is to first divide total $T$ rounds into several blocks, and then lower bound the dynamic regret over total rounds via the sum of the lower bound of static regret over each block \citep{Zhang18_ader}. In the following, we will generalize this idea to derive our lower bound of dynamic regret in the delayed setting.
establish a lower bound of static regret in the delayed setting. Although the seminal work of \citet{Weinberger02_TIT} has already provided such a lower bound, it only holds in the special case that $d$ divides $T$. To address this limitation, we establish a lower bound of static regret for any $d$ and $T$, which is presented in the following lemma.
\begin{lem}
\label{lem1_static_lower}
Suppose $\K=\left[-R/\sqrt{n},R/\sqrt{n}\right]^n$ which satisfies Assumption \ref{assum2}. For any OCO algorithm and any positive integer $d$, there exists a sequence of functions $f_1(\x),\dots,f_T(\x)$ satisfying Assumption \ref{assum1} and a sequence of delays $1\leq d_1,\dots,d_T\leq d$ such that
\begin{equation*}
\sum_{t=1}^Tf_t(\x_t)-\min_{\x\in\K}\sum_{t=1}^Tf_t(\x)\geq\frac{RGT}{\sqrt{2\left\lceil T/d\right\rceil}}.
\end{equation*}
\end{lem}
Let $Z=\lceil T/L\rceil$. Due to $L=\left\lceil 2TR/\max\{P,2R\}\right\rceil$, if $P<2R$, it is easy to verify that $L=T$ and $Z=1$. Otherwise, we have $P\in[2R,2TR]$, which implies that
\[
    Z\leq \frac{T}{L}+1=\frac{T}{\left\lceil 2TR/P\right\rceil}+1\leq \frac{P}{2R}+1.
\]
Combining these two cases, it must hold that $Z-1\leq P/(2R)$.

Then, we divide the total $T$ rounds into $Z$ blocks, where the length of the first $Z-1$ blocks is $L$ and that of the last block is $T-(Z-1)L$. In this way, we can define the set of rounds in the block $z$ as $\T_z=\{(z-1)L+1,\dots,\min\{zL,T\}\}$. Moreover, we define the feasible set of $\u_1,\dots,\u_T$ as
\[\C(P)=\left\{\u_1,\dots,\u_T\in\K\left|\sum_{t=2}^T\|\u_t-\u_{t-1}\|_2\leq P\right.\right\}\]
and construct a subset of $\C(P)$ as
\begin{align*}
\C^\prime(P)=&\left\{\u_1,\dots,\u_T\in\K\left|\u_{i}=\u_{j}, \forall z\in[Z],i,j\in\T_z\right.\right\}.
\end{align*}
% \begin{align*}
% \C^\prime(P)=&\left\{\u_1,\dots,\u_T\in\K\left|\u_{(z-1)L+1}=\dots=\u_{\min\{zL,T\}},\right.\right.\\
% &\left.\forall z\in[Z]\right\}
% \end{align*}
Note that the connection $\C^\prime(P)\subseteq \C(P)$ is derived by the fact that the comparator sequence in $\C^\prime(P)$ only changes $Z-1\leq P/(2R)$ times, and thus its path-length does not exceed $P$.

Because of $\C^\prime(P)\subseteq \C(P)$ and Lemma \ref{lem1_static_lower}, there exists a sequence of functions $f_1(\x),\dots,f_T(\x)$ satisfying Assumption \ref{assum1} and a sequence of delays $1\leq d_1,\dots,d_T\leq d$ such that
\begin{equation*}
% \label{eq2-thm3}
\begin{split}
&\sum_{t=1}^Tf_t(\x_t)-\min_{\u_1,\dots,\u_T\in\C(P)}\sum_{t=1}^Tf_t(\u_t)\geq\sum_{t=1}^Tf_t(\x_t)-\min_{\u_1,\dots,\u_T\in\C^\prime(P)}\sum_{t=1}^Tf_t(\u_t)\\
=&\sum_{z=1}^Z\left(\sum_{t\in\T_z}f_t(\x_t)-\min_{\x\in\K}\sum_{t\in\T_z}f_t(\x)\right)\geq\sum_{z=1}^Z\frac{RG|\T_z|}{\sqrt{2\left\lceil |\T_z|/d\right\rceil}}.
\end{split}
\end{equation*}
Finally, we can complete this proof by further noticing that
\begin{equation*}
% \label{eq3-thm3}
\begin{split}
\sum_{z=1}^Z\frac{RG|\T_z|}{\sqrt{2\left\lceil |\T_z|/d\right\rceil}}\geq&\sum_{z=1}^Z\frac{RG|\T_z|}{\sqrt{2\left\lceil L/d\right\rceil}}=\frac{RGT}{\sqrt{2\left\lceil L/d\right\rceil}}\geq\left\{
\begin{aligned}
&\frac{RGT}{\sqrt{2}},~\text{if $d> L$;}\\
&\frac{G\sqrt{dR\max\{P,2R\}T}}{4},~\text{otherwise;}
\end{aligned}
\right.
\end{split}
\end{equation*}
where the first inequality is due to $|\T_z|\leq L$ for any $z\in[Z]$, and the last inequality is mainly due to
\begin{align*}
\left\lceil L/d\right\rceil\leq&2L/d=(2/d)\left\lceil 2TR/\max\{P,2R\}\right\rceil\leq 8TR/(\max\{P,2R\}d)
\end{align*}
for $d\leq L$.

\subsection{Proof of Lemma \ref{lem1_static_lower}}
Let $Z=\left\lceil T/d\right\rceil$. We first divide the total $T$ rounds into $Z$ blocks, where the length of the first $Z-1$ blocks is $d$ and that of the last block is $T-(Z-1)d$. In this way, we can define the set of rounds in the block $z$ as $\T_z=\{(z-1)d+1,\dots,\min\{zd,T\}\}$. For any $z\in[Z]$ and $t\in\T_z$, we construct the delay as $d_t=\min\{zd,T\}-t+1$, which satisfies $1\leq d_t\leq d$.
%and Assumption \ref{assum4}.%, and
% \begin{equation*}
% \begin{split}
% \sum_{t=1}^Td_t=&\sum_{z=1}^Z\sum_{t\in\T_z}(\min\{zd,T\}-t+1)=\sum_{z=1}^Z\frac{|\T_z|(|\T_z|+1)}{2}\\
% \geq&\sum_{z=1}^Z\frac{|\T_z|^2}{2}\geq\frac{(\sum_{z=1}^Z|\T_z|)^2}{2Z}=\frac{T^2}{2\left\lceil T/d\right\rceil}
% \end{split}
% \end{equation*}
% where the second inequality is due to the Cauchy-Schwarz inequality.
These delays ensure that the information of any function in each block $z$ is delayed to the end of the block, which is critical for us to construct loss functions that maximize the impact of delays on the static regret. 

Note that to establish the lower bound of the static regret in the non-delayed setting, one can utilize a randomized strategy to select loss functions for each round \citep{Abernethy08}. Here, to maximize the impact of delays, we only select one loss function $h_z(\x)$ for all rounds in the same block $z$, i.e., $f_t(\x)=h_z(\x)$ for any $t\in\T_z$. Specifically, we set $h_z(\x)=\frac{G}{\sqrt{n}}\langle\w_z,\x\rangle$, where the $i$-th coordinate of $\w_z$ is $\pm 1$ with probability $1/2$ for any $i\in[n]$ and will be denoted as $w_{z,i}$. It is not hard to verify that $h_z(\x)$ satisfies Assumption \ref{assum1}.

From the above definitions and $\reg(T)=\sum_{t=1}^Tf_t(\x_t)-\min_{\x\in\K}\sum_{t=1}^Tf_t(\x)$, we have
\begin{equation*}
\begin{split}
\E_{\w_1,\dots,\w_Z}\left[\reg(T)\right]=&\E_{\w_1,\dots,\w_Z}\left[\sum_{z=1}^Z\sum_{t\in\T_z}\frac{G}{\sqrt{n}}\langle\w_z,\x_t\rangle-\min_{\x\in\K}\sum_{z=1}^Z\sum_{t\in\T_z}\frac{G}{\sqrt{n}}\langle\w_z,\x\rangle\right]\\
=&\E_{\w_1,\dots,\w_Z}\left[-\min_{\x\in\K}\sum_{z=1}^Z\frac{G|\T_z|}{\sqrt{n}}\langle\w_z,\x\rangle\right]
\end{split}
\end{equation*}
where the third equality is due to $\E_{\w_1,\dots,\w_Z}[\langle\w_z,\x_t\rangle]=0$ for any $t\in\T_z$, which can be derived by the fact that any decision $\x_t$ in the block $z$ is made before receiving the information of $\w_z$, and thus is independent with $\w_z$. 

Since a linear function is minimized at the vertices of the cube, the above equality implies that
\begin{equation*}
% \label{static_low1-pre}
\begin{split}
\E_{\w_1,\dots,\w_Z}\left[\reg(T)\right]=&-\E_{\w_1,\dots,\w_Z}\left[\min_{\x\in\{-R/\sqrt{n},R/\sqrt{n}\}^n}\sum_{z=1}^Z\frac{G|\T_z|}{\sqrt{n}}\langle\w_z,\x\rangle\right]\\
=&\E_{\w_1,\dots,\w_Z}\left[\sum_{i=1}^n\frac{R}{\sqrt{n}}\left|\sum_{z=1}^Z\frac{w_{z,i}G|\T_z|}{\sqrt{n}}\right|\right]={RG}\E_{\w_1,\dots,\w_Z}\left[\left|\sum_{z=1}^Zw_{z,1}|\T_z|\right|\right]\\
\geq&\frac{RG}{\sqrt{2}}\sqrt{\sum_{z=1}^Z|\T_z|^2}\geq\frac{RG}{\sqrt{2}}\sqrt{\frac{(\sum_{z=1}^Z|\T_z|)^2}{Z}}=\frac{RGT}{\sqrt{2\left\lceil T/d\right\rceil}}
\end{split}
\end{equation*}
where the first inequality is due to the Khintchine inequality, and the second one is due to the Cauchy-Schwarz inequality.

This expected lower bound implies that for
any OCO algorithm and any positive integer $d$, there exists a particular choice of $\w_1,\dots,\w_Z$ such that
\begin{equation*}
% \label{static_low1}
\begin{split}
\reg(T)\geq\frac{RGT}{\sqrt{2\left\lceil T/d\right\rceil}}.
\end{split}
\end{equation*}

\subsection{Proof of Theorems \ref{thm1-bandit} and \ref{thm2-bandit}}
Since Theorem \ref{thm1-bandit} can be exactly recovered by Theorem \ref{thm2-bandit} with $K=1$, we only need to prove Theorem \ref{thm2-bandit}. Specifically, we start this proof by noticing that
\begin{equation}
\label{thm2-bandit-eq1}
\begin{split}
\sum_{t=1}^Tf_t(\hat{\x}_t^\eta)-\sum_{t=1}^Tf_t(\u_t)
% =&\sum_{z=1}^Z\sum_{t=q_z}^{q_{z+1}-1}\left(f_t(\hat{\x}_t^\eta)-f_t(\u_t)\right)\\
=&\sum_{z=1}^Z\sum_{t=q_z}^{q_{z+1}-1}\left(f_t(\hat{\x}_t^\eta)-f_t(\u_{q_z})\right)+\sum_{z=1}^Z\sum_{t=q_z}^{q_{z+1}-1}\left(f_t(\u_{q_z})-f_t(\u_t)\right)\\
\leq&\sum_{z=1}^Z\sum_{t=q_z}^{q_{z+1}-1}\left(f_t(\hat{\x}_t^\eta)-f_t(\u_{q_z})\right)+\sum_{z=1}^Z\sum_{t=q_z}^{q_{z+1}-1}G\|\u_{q_z}-\u_t\|_2
\end{split}
\end{equation}
where the inequality is due to Assumption \ref{assum1}.

For the last term in the right side of \eqref{thm2-bandit-eq1}, it is easy to derive the following upper bound.
\begin{lem}
\label{lem1-thm2-bandit}
Let $q_z=(z-1)K+1$. Under Assumption \ref{assum-b2}, we have
\[
    \sum_{z=1}^Z\sum_{t=q_z}^{q_{z+1}-1}G\|\u_{q_z}-\u_t\|_2\leq G\sqrt{2(K-1)RTP_T}.
\]
\end{lem}
Then, to analyze the first term in the right side of \eqref{thm2-bandit-eq1}, we introduce some useful properties about the $\delta$-smoothed version of loss functions.
\begin{lem}
\label{estimator-lem2}
(Lemma 2.6 in \citet{Hazan2016} and Lemma 6 in \citet{DBOCG-SC})
Let $f(\x):\mathbb{R}^n\to\mathbb{R}$ be convex and $G$-Lipschitz over a convex set $\K$ satisfying Assumption \ref{assum-b2}. For any $\delta\in(0,r)$, its $\delta$-smoothed version defined as $\hat{f}_\delta(\mathbf{x})=\mathbb{E}_{\mathbf{s}\sim\mathcal{B}^n}\left[f(\mathbf{x}+\delta\mathbf{s})\right]$ has the following properties: i) $\hat{f}_\delta(\x)$ is convex over $\K_\delta$; ii) $|\hat{f}_\delta(\x)-f(\x)|\leq\delta G$ for any $\x\in\K_\delta$; iii) $\hat{f}_\delta(\x)$ is $G$-Lipschitz over $\K_\delta$.
\end{lem} 
Let $\hat{f}_{t,\delta}(\x)$ denote the $\delta$-smoothed version of $f_t(\x)$, and let $\hat{\u}_t=(1-\delta/r)\mathbf{u}_t$ denote the shrunk comparator. Due to Assumption \ref{assum1}, it is not hard to verify that
\begin{equation}
\label{thm2-bandit-eq2}
\begin{split}
\sum_{z=1}^Z\sum_{t=q_z}^{q_{z+1}-1}\left(f_t(\hat{\x}_t^\eta)-f_t(\u_{q_z})\right)\leq &\sum_{z=1}^Z\sum_{t=q_z}^{q_{z+1}-1}\left(f_t(\x_{z}^\eta)-f_t(\hat{\u}_{q_z})+\delta G\|\s_t\|_2+\frac{\delta G}{r}\|\u_{q_z}\|_2\right)\\
\leq &\sum_{z=1}^Z\sum_{t=q_z}^{q_{z+1}-1}(f_t(\x_{z}^\eta)-f_t(\hat{\u}_{q_z}))+\delta GT+\frac{\delta GRT}{r}\\
\leq &\sum_{z=1}^Z\sum_{t=q_z}^{q_{z+1}-1}(\hat{f}_{t,\delta}({\x}_z^\eta)-\hat{f}_{t,\delta}(\hat{\u}_{q_z}))+3\delta GT+\frac{\delta GRT}{r}\\
\leq & \sum_{z=1}^Z\sum_{t=q_z}^{q_{z+1}-1}\langle\nabla \hat{f}_{t,\delta}({\x}_z^\eta),{\x}_z^\eta-\hat{\u}_{q_z}\rangle+3\delta GT+\frac{\delta GRT}{r}
\end{split}
\end{equation}
where the second inequality is due to $\s_t\sim\mathcal{S}^n$ and Assumption \ref{assum-b2}, the third inequality is due to the second property in Lemma \ref{estimator-lem2}, and the last inequality is due to the first property in Lemma \ref{estimator-lem2}. 

Next, we proceed to bound the first term in the right side of \eqref{thm2-bandit-eq2} with an idea similar to the proof of Theorem \ref{thm1}. Note that Algorithm \ref{BDBGD} ensures $\x_z^\eta=\y_{\tau_z}^\eta$, where $\tau_z=1+\sum_{i=1}^{z-1}|\mathcal{A}_i|$, and $c_1,\dots,c_Z$ is a permutation of $1,\dots,Z$. Therefore, it is not hard to verify that
\begin{equation}
\label{thm2-bandit-eq3}
\begin{split}
&\sum_{z=1}^Z\sum_{t=q_z}^{q_{z+1}-1}\langle\nabla \hat{f}_{t,\delta}({\x}_z^\eta),{\x}_z^\eta-\hat{\u}_{q_z}\rangle=\sum_{z=1}^Z\sum_{t=q_{c_z}}^{q_{c_z+1}-1}\langle\nabla \hat{f}_{t,\delta}({\x}_{c_z}^\eta),\y_{\tau_{c_z}}^\eta-\hat{\u}_{q_{c_z}}\rangle\\
=&\sum_{z=1}^Z\sum_{t=q_{c_z}}^{q_{c_z+1}-1}\langle\nabla \hat{f}_{t,\delta}({\x}_{c_z}^\eta),{\y}_{z}^\eta-\hat{\u}_{q_z}\rangle+\sum_{z=1}^Z\sum_{t=q_{c_z}}^{q_{c_z+1}-1}\langle\nabla \hat{f}_{t,\delta}({\x}_{c_z}^\eta),\y_{\tau_{c_z}}^\eta-{\y}_z^\eta+\hat{\u}_{q_z}-\hat{\u}_{q_{c_z}}\rangle\\
% \leq& \sum_{z=1}^Z\sum_{t=q_{c_z}}^{q_{c_z+1}-1}\langle\nabla \hat{f}_{t,\delta}({\x}_{c_z}^\eta),{\y}_{z}^\eta-\hat{\u}_{q_z}\rangle+\sum_{z=1}^Z\sum_{t=q_{c_z}}^{q_{c_z+1}-1}G(\|\y_{\tau_{c_z}}^\eta-{\y}_z^\eta\|_2+\|\hat{\u}_{q_z}-\hat{\u}_{q_{c_z}}\|_2)\\
\leq& \sum_{z=1}^Z\sum_{t=q_{c_z}}^{q_{c_z+1}-1}\langle\nabla \hat{f}_{t,\delta}({\x}_{c_z}^\eta),{\y}_{z}^\eta-\hat{\u}_{q_z}\rangle+\sum_{z=1}^ZKG\|\y_{\tau_{c_z}}^\eta-{\y}_z^\eta\|_2+\sum_{z=1}^ZKG\|\hat{\u}_{q_z}-\hat{\u}_{q_{c_z}}\|_2
\end{split}
\end{equation}
where the inequality is due to the third property in Lemma \ref{estimator-lem2} and Assumption \ref{assum1}.

% \begin{equation}
% \label{thm2-bandit-eq3}
% \begin{split}
% &\sum_{z=1}^Z\sum_{t=q_z}^{q_{z+1}-1}(\hat{f}_{t,\delta}({\x}_z^\eta)-\hat{f}_{t,\delta}(\hat{\u}_{q_z}))\\
% \leq& \sum_{z=1}^Z\sum_{t=q_{c_z}}^{q_{c_z+1}-1}\langle\nabla \hat{f}_{t,\delta}({\x}_{c_z}^\eta),{\y}_{z}^\eta-\hat{\u}_{q_z}\rangle+\sum_{z=1}^Z\sum_{t=q_{c_z}}^{q_{c_z+1}-1}G(\|\y_{\tau_{c_z}}^\eta-{\y}_z^\eta\|_2+\|\hat{\u}_{q_z}-\hat{\u}_{q_{c_z}}\|_2)\\
% =& \sum_{z=1}^Z\sum_{t=q_{c_z}}^{q_{c_z+1}-1}\langle\nabla \hat{f}_{t,\delta}({\x}_{c_z}^\eta),{\y}_{z}^\eta-\hat{\u}_{q_z}\rangle+\sum_{z=1}^ZKG\|\y_{\tau_{c_z}}^\eta-{\y}_z^\eta\|_2+\sum_{z=1}^ZKG\|\hat{\u}_{q_z}-\hat{\u}_{q_{c_z}}\|_2.
% \end{split}
% \end{equation}
Due to Lemma \ref{estimator-lem1}, we have
\begin{equation}
\label{thm2-bandit-eq4}
\begin{split}
\mathbb{E}\left[\sum_{z=1}^Z\sum_{t=q_{c_z}}^{q_{c_z+1}-1}\langle\nabla \hat{f}_{t,\delta}({\x}_{c_z}^\eta),{\y}_{z}^\eta-\hat{\u}_{q_z}\rangle\right]=&\mathbb{E}\left[\sum_{z=1}^Z\sum_{t=q_{c_z}}^{q_{c_z+1}-1}\left\langle \frac{n}{\delta} f_t({\x}_{c_z}^\eta+\delta\mathbf{s}_{t}) \mathbf{s}_{t},{\y}_{z}^\eta-\hat{\u}_{q_z}\right\rangle\right]\\
=&\mathbb{E}\left[\sum_{z=1}^Z\left\langle \g_{c_z}^\eta,{\y}_{z}^\eta-\hat{\u}_{q_z}\right\rangle\right].
\end{split}
\end{equation}
Recall that Algorithm \ref{BDBGD} ensures $\y_{z+1}^\eta=\argmin_{\x\in\K_\delta}\|\x-(\y_{z}^\eta-\eta\g_{c_z}^\eta)\|_2^2$. Following \eqref{thm1-eq2}, it is easy to verify that 
\begin{equation}
\label{thm2-bandit-eq5}
\begin{split}
\sum_{z=1}^Z\left\langle \g_{c_z}^\eta,{\y}_{z}^\eta-\hat{\u}_{q_z}\right\rangle\leq&\frac{R^2+R\hat{P}_Z}{\eta}+\frac{\eta }{2}\sum_{z=1}^Z\|\g_{c_z}^\eta\|_2^2
\end{split}
\end{equation}
where $\hat{P}_Z=\sum_{z=2}^Z\|\hat{\u}_{q_z}-\hat{\u}_{q_{z-1}}\|_2$. 

The right side of \eqref{thm2-bandit-eq5} can be further bounded by introducing the following lemma regarding the cumulative estimated gradients of all rounds in any block $z\in[Z]$.
\begin{lem}
\label{block-gradient}
Under Assumptions \ref{assum1} and \ref{assum-b1}, for any $z\in[Z]$, Algorithm \ref{BDBGD} ensures 
\[
\E\left[\|\g_z^{\eta}\|_2^2\right]=\E\left[\left\|\sum_{i=(z-1)K+1}^{zK}\frac{n}{\delta} f_i({\x}_z^\eta+\delta\mathbf{s}_i) \mathbf{s}_i\right\|_2^2\right]\leq K\left(\frac{nM}{\delta}\right)^2+(K^2-K)G^2.
\]
\end{lem}
Therefore, combining \eqref{thm2-bandit-eq4} with \eqref{thm2-bandit-eq5} and Lemma \ref{block-gradient}, we have
\begin{equation}
\label{thm2-bandit-eq4-plus}
\begin{split}
\mathbb{E}\left[\sum_{z=1}^Z\sum_{t=q_{c_z}}^{q_{c_z+1}-1}\langle\nabla \hat{f}_{t,\delta}({\x}_{c_z}^\eta),{\y}_{z}^\eta-\hat{\u}_{q_z}\rangle\right]\leq \frac{R^2+R\hat{P}_Z}{\eta}+\frac{\eta \gamma Z}{2}
\end{split}
\end{equation}
where $\gamma=K({nM}/{\delta})^2+(K^2-K)G^2$.

Now, we only need to consider the second term in the right side of \eqref{thm2-bandit-eq3}. We notice that before block $c_z$, Algorithm \ref{BDBGD} has utilized the cumulative estimated gradients of blocks in the set $\cup_{i=1}^{c_z-1}\mathcal{A}_i$ to generated $\y_1^\eta,\dots,\y_{\tau_{c_z}}^\eta$. Moreover, the cumulative estimated gradients $\g_{c_z}^\eta$ will only be used to update $\y_z^\eta$ at the end of a certain block $c\geq c_z$. Thus, we have $\tau_{c_z}\leq z$, and can verify that
\begin{equation}
\label{thm2-bandit-eq7}
\begin{split}
\mathbb{E}\left[\sum_{z=1}^Z\|\y_{\tau_{c_z}}^\eta-{\y}_z^\eta\|_2\right]\leq& \mathbb{E}\left[\sum_{z=1}^Z\sum_{k=\tau_{c_z}}^{z-1}\|\y_{k}^\eta-{\y}_{k+1}^\eta\|_2\right]\leq \mathbb{E}\left[\sum_{z=1}^Z\sum_{k=\tau_{c_z}}^{z-1}\|\eta\g_{c_k}^\eta\|_2\right]\\
\leq&\sum_{z=1}^Z\sum_{k=\tau_{c_z}}^{z-1}\sqrt{\gamma}\eta=\sqrt{\gamma}\eta\sum_{z=1}^Z(z-\tau_{c_z})=\sqrt{\gamma}\eta\sum_{z=1}^Zm_z
\end{split}
\end{equation}
where the third inequality is due to Lemma \ref{block-gradient} and Jensen's inequality, and the last equality can be derived by simply following \eqref{thm1-eq5}.

Combining \eqref{thm2-bandit-eq3} with \eqref{thm2-bandit-eq4-plus} and \eqref{thm2-bandit-eq7}, we have
\begin{equation}
\label{thm2-bandit-eq-final-pre}
\begin{split}
\mathbb{E}\left[\sum_{z=1}^Z\sum_{t=q_z}^{q_{z+1}-1}\langle\nabla \hat{f}_{t,\delta}({\x}_z^\eta),{\x}_z^\eta-\hat{\u}_{q_z}\rangle\right]\leq &\frac{R^2+R\hat{P}_Z}{\eta}+\frac{\eta \gamma Z}{2}+\sqrt{\gamma}\eta KG\sum_{z=1}^Zm_z\\
&+\sum_{z=1}^ZKG\|\hat{\u}_{q_z}-\hat{\u}_{q_{c_z}}\|_2.
\end{split}
\end{equation}
Due to \eqref{thm2-bandit-eq2}, \eqref{thm2-bandit-eq-final-pre} and Lemma \ref{lem1-thm2-bandit}, we can take expectation on the two sides of \eqref{thm2-bandit-eq1}, and derive that
% Combining } with \eqref{thm2-bandit-eq2}, \eqref{thm2-bandit-eq-final-pre} and Lemma \ref{lem1-thm2-bandit}, we have
\begin{equation}
\label{thm2-bandit-eq-final}
\begin{split}
\mathbb{E}\left[\sum_{t=1}^Tf_t(\hat{\x}_t^\eta)-\sum_{t=1}^Tf_t(\u_t)\right]\leq& \frac{R^2+R\hat{P}_Z}{\eta}+\frac{\eta\gamma Z}{2}+\sqrt{\gamma}\eta KG\sum_{z=1}^Zm_z+\sum_{z=1}^ZKG\|\hat{\u}_{q_z}-\hat{\u}_{q_{c_z}}\|_2\\
&+3\delta GT+\frac{\delta GRT}{r}+G\sqrt{2(K-1)RTP_T}.
\end{split}
\end{equation}
Finally, we also notice that 
\begin{equation}
\label{convert-path} 
\|\hat{\u}_i-\hat{\u}_j\|_2=(1-\delta/r)\|{\u}_i-{\u}_j\|_2\leq \|{\u}_i-{\u}_j\|_2
\end{equation}
for any $i,j\in[T]$, which implies that
\begin{equation}
\label{convert-from-block-path}
    \hat{P}_Z\leq \sum_{z=2}^Z\|{\u}_{q_z}-{\u}_{q_{z-1}}\|_2\leq \sum_{z=2}^Z\sum_{i=q_{z-1}+1}^{q_z}\|{\u}_{i}-{\u}_{i-1}\|_2\leq P_T.
\end{equation}
It is easy to complete this proof by combining \eqref{thm2-bandit-eq-final} with \eqref{convert-path} and \eqref{convert-from-block-path}.

\subsection{Proof of Corollary \ref{cor1-bandit:label}}
Due to the values of $\eta$ and $\delta$, it is easy to verify that
\begin{equation*}
% \label{cor1-bandit-eq1-rw}
\begin{split}
&\frac{R^2+RP_T}{\eta}+\frac{\eta Tn^2M^2}{2\delta^2}+\frac{\eta nGM}{\delta}\sum_{t=1}^Tm_t\\
\leq&(R+RP_T)\left(\sqrt{n}T^{3/4}+\left(nT\sum_{t=1}^Tm_t\right)^{1/3}\right)+\frac{\sqrt{n}T^{3/4}RM}{2}+\left(nT\sum_{t=1}^Tm_t\right)^{1/3}GR.
\end{split}
\end{equation*}
Moreover, due to the value of $\delta$ and Lemma \ref{lem1_DOGD_main}, we have
\begin{equation*}
\begin{split}
\sum_{t=1}^TG\left\|{\u}_t-{\u}_{c_t}\right\|_2+
3\delta GT+\frac{\delta GRT}{r}\leq&C+\left(3G+\frac{GR}{r}\right)\left(\sqrt{n}T^{3/4}+\left(nT\sum_{t=1}^Tm_t\right)^{1/3}\right).
\end{split}
\end{equation*}
Finally, combining Theorem \ref{thm1-bandit} with the above two inequalities and \eqref{thm01-eq6pre}, we have
\[
   \E\left[\sum_{t=1}^Tf_t(\hat{\x}_t^\eta)-\sum_{t=1}^Tf_t(\u_t)\right]=O\left((\sqrt{n}T^{3/4}+(n\bar{d})^{1/3}T^{2/3})(P_T+1)+C\right).
\]

\subsection{Proof of Lemma \ref{lem1-thm2-bandit}}
Due to Assumption \ref{assum-b2}, it is easy to verify that $\sum_{z=1}^Z\sum_{t=q_z}^{q_{z+1}-1}G\|\u_{q_z}-\u_t\|_2\leq 2GTR$. Moreover, due to the definition of $q_z$, we also have
\begin{equation*}
\begin{split}
\sum_{z=1}^Z\sum_{t=q_z}^{q_{z+1}-1}G\|\u_{q_z}-\u_t\|_2\leq \sum_{z=1}^Z\sum_{t=q_z+1}^{q_{z+1}-1}(K-1)G\|\u_{t}-\u_{t-1}\|_2\leq (K-1)GP_T.
\end{split}
\end{equation*}
Finally, we can complete this proof by simply combining these two upper bounds.
\subsection{Proof of Lemma \ref{block-gradient}}
This lemma can be proved by simply following the proof of Lemma 5 in \citet{Garber19}. Here, we include the detailed proof for completeness. For brevity, let $\mathcal{T}_z=\{(z-1)K+1,\dots,zK\}$ and $\nabla_i=\frac{n}{\delta} f_i({\x}_z^\eta+\delta\mathbf{s}_i) \mathbf{s}_i$ for any $i\in\mathcal{T}_z$. It is easy to verify that
\begin{align*}
\E\left[\|\g_z^{\eta}\|_2^2\right]=&\E\left[\left\|\sum_{i\in\mathcal{T}_z}\nabla_i\right\|_2^2\right]=\E\left[\sum_{i\in\mathcal{T}_z}\left\|\nabla_i\right\|_2^2+\sum_{i,j\in\mathcal{T}_z,i\neq j}\langle\nabla_i,\nabla_j\rangle\right]\\
\leq&|\mathcal{T}_z|\left(\frac{nM}{\delta}\right)^2+\E\left[\sum_{i,j\in\mathcal{T}_z,i\neq j}\langle\E[\nabla_i|\x_z^\eta],\E[\nabla_j|\x_z^\eta]\rangle\right]\\
\leq&|\mathcal{T}_z|\left(\frac{nM}{\delta}\right)^2+\E\left[\sum_{i,j\in\mathcal{T}_z,i\neq j}\|\E[\nabla_i|\x_z^\eta]\|_2\|\E[\nabla_j|\x_z^\eta]\|_2\right]\\
\leq&|\mathcal{T}_z|\left(\frac{nM}{\delta}\right)^2+(|\mathcal{T}_z|^2-|\mathcal{T}_z|)G^2= K\left(\frac{nM}{\delta}\right)^2+(K^2-K)G^2
\end{align*}
where the first inequality is due to Assumption \ref{assum-b1}, and the third inequality is due to Assumption \ref{assum1}, Lemma \ref{estimator-lem1}, and the third property in Lemma \ref{estimator-lem2}.

\subsection{Proof of Corollary \ref{cor2-bandit:label}}
\label{sub:proof_of_corollary_ref_cor2_bandit_label}
Before applying Theorem \ref{thm2-bandit}, we need to analyze $\sum_{z=1}^Zm_z$ and $\sum_{z=1}^Z\|{\u}_{q_z}-{\u}_{q_{c_z}}\|_2$ more carefully. Let $\hat{d}_1,\dots,\hat{d}_z$ denote the block-level delays, i.e., the block index $z$ is included in the set $\mathcal{A}_{z+\hat{d}_z-1}$, and let $\hat{d}=\max\{\hat{d}_1,\dots,\hat{d}_z\}$. We first notice that for any $z\in[Z]$, the loss values $f_{1}(\hat{\x}_{1}^\eta),\dots,f_{zK}(\hat{\x}_{zK}^\eta)$ must be available at the end of round $zK+d-1$. 
% and also at the end of block 
% \[
%     \left\lfloor\frac{zK+d_{zK}-1}{K}\right\rfloor.
% \]
Therefore, according to the block index of round $zK+d-1$, it is easy to verify that
\begin{equation}
\label{block_max_d}
\hat{d}_z=\left\lfloor\frac{zK+d-1}{K}\right\rfloor-z+1\leq  \frac{d}{K}+1.
\end{equation}
Similar to \eqref{thm01-eq6pre}, we can use \eqref{block_max_d} to derive the following bound
\begin{equation}
\label{block-level-delay-bound}
\sum_{z=1}^Zm_z\leq\sum_{z=1}^Z\left(\hat{d}_z-1\right)\leq \frac{dZ}{K}= \frac{dT}{K^2}.
%\leq dT.
\end{equation}
Then, to further bound the term $\sum_{z=1}^Z\|{\u}_{q_z}-{\u}_{q_{c_z}}\|_2$, we need to bound the gap between $z$ and $c_z$. First, due to the fact that $\g_{c_z}^\eta$ is used to update $\y_z^\eta$ at the end of block $c_z+\hat{d}_{c_z}-1$, it is not hard to verify that
\begin{equation}
\label{order1-block}
z\leq c_{z}+\hat{d}_{c_z}-1\leq c_z+\hat{d}-1
\end{equation}
for any $z\in[Z]$. Second, we also notice that there are at most $z-1$ cumulative estimated gradients, which are available and used before block $c_z+\hat{d}_{c_z}-1$. Since $\g_{1}^\eta,\dots,\g_{z}^\eta$ must have been available at the end of block $z+\hat{d}-1$, we have $c_z+\hat{d}_{c_z}-2< z+\hat{d}-1$, which implies that
\begin{equation}
\label{order2-block}
c_z\leq z+\hat{d}-\hat{d}_{c_z}\leq z+\hat{d}-1.
\end{equation}
If $z\in[Z]$ and $c_z\leq z$, according to \eqref{order1-block}, we have
\begin{equation}
\label{case1-block}
\begin{split}
\|{\u}_{q_z}-{\u}_{q_{c_z}}\|_2\leq \sum_{k={c_z}}^{z-1}\|\u_{q_{k+1}}-\u_{q_{k}}\|_2\leq \sum_{k={c_z}}^{\min\{c_z+\hat{d}-2,Z-1\}}\|\u_{q_{k+1}}-\u_{q_{k}}\|_2.
\end{split}
\end{equation}
Otherwise, if $z\in[Z]$ and $c_z> z$, according to \eqref{order2-block}, we have
\begin{equation}
\label{case2-block}
\begin{split}
\|{\u}_{q_z}-{\u}_{q_{c_z}}\|_2\leq \sum_{k=z}^{c_z-1}\|\u_{q_{k+1}}-\u_{q_{k}}\|_2\leq \sum_{k=z}^{\min\{z+\hat{d}-2,Z-1\}}\|\u_{q_{k+1}}-\u_{q_{k}}\|_2.
\end{split}
\end{equation}
Combining the fact that $c_1,\dots,c_Z$ is a permutation of $1,\dots,Z$ with (\ref{case1-block}) and (\ref{case2-block}), we have
\begin{align*}
\sum_{z=1}^Z\|{\u}_{q_z}-{\u}_{q_{c_z}}\|_2\leq&2\sum_{z=1}^Z\sum_{k=z}^{\min\{z+\hat{d}-2,Z-1\}}\|\u_{q_{k+1}}-\u_{q_{k}}\|_2\\
\leq& 2\sum_{k=1}^{\hat{d}-1}\sum_{z=1}^{Z-1}\|\u_{q_{z+1}}-\u_{q_{z}}\|_2.
\end{align*}
By noticing that $\|{\u}_{q_z}-{\u}_{q_{c_z}}\|_2\leq 2R$ under Assumption \ref{assum-b2}, and combining the above inequality with \eqref{convert-from-block-path} and \eqref{block_max_d}, we have
\begin{equation}
\label{cor2-bandit-pre-fina}
    \sum_{z=1}^Z\|{\u}_{q_z}-{\u}_{q_{c_z}}\|_2\leq\min\left\{\frac{2dP_T}{K},2RZ\right\}\leq 2\sqrt{\frac{dRZP_T}{K}}=\frac{2\sqrt{dRTP_T}}{K}.
\end{equation}
% \begin{equation*}
% \begin{split}
% \mathbb{E}\left[\sum_{t=1}^Tf_t(\hat{\x}_t^\eta)-\sum_{t=1}^Tf_t(\u_t)\right]\leq& \frac{R^2+R{P}_T}{\eta}+\frac{\eta\gamma Z}{2}+\sqrt{\gamma}\eta KG\sum_{z=1}^Zm_z+\sum_{z=1}^ZKG\|{\u}_{q_z}-{\u}_{q_{c_z}}\|_2\\
% &+3\delta GT+\frac{\delta GRT}{r}+G\sqrt{2(K-1)RTP_T}.
% \end{split}
% \end{equation*}

Now, we are ready to derive the final bound from Theorem \ref{thm2-bandit}. For brevity, let $B=G^2+M^2$. Combining Theorem \ref{thm2-bandit} with \eqref{cor2-bandit-pre-fina}, $\delta=\sqrt{n}/{T^{1/4}}$, and $K=n\sqrt{T}$, we have
\begin{equation*}
\begin{split}
\mathbb{E}\left[\sum_{t=1}^Tf_t(\hat{\x}_t^\eta)-\sum_{t=1}^Tf_t(\u_t)\right]\leq& \frac{R^2+R{P}_T}{\eta}+\frac{\eta\gamma \sqrt{T}}{2n}+\sqrt{\gamma}\eta n\sqrt{T}G\sum_{z=1}^Zm_z\\
&+2G\sqrt{dTRP_T}+\left(3G+\frac{GR}{r}+G\sqrt{2RP_T}\right)\sqrt{n}T^{3/4}.
% \leq& \frac{R^2+R{P}_T}{\eta}+\frac{\eta\gamma Bn^3T^{3/2}}{2}+\eta n^2\sqrt{B}TG\sum_{z=1}^Zm_z\\
% &+2G\sqrt{dTRP_T}+\left(3G+\frac{GR}{r}+G\sqrt{2RP_T}\right)\sqrt{n}T^{3/4}.
\end{split}
\end{equation*}
Due to $\gamma=K({nM}/{\delta})^2+(K^2-K)G^2\leq n^2BT$ and the value of $\eta$, it is easy to verify that
\begin{equation*}
\begin{split}
&\frac{R^2+R{P}_T}{\eta}+\frac{\eta\gamma \sqrt{T}}{2n}+\sqrt{\gamma}\eta n\sqrt{T}G\sum_{z=1}^Zm_z\\
\leq&(R+{P}_T)\sqrt{B}\left(\sqrt{n}T^{3/4}+n\sqrt{T\sum_{z=1}^Zm_z}\right)+\frac{R\sqrt{nB}T^{3/4}}{2}+nRG\sqrt{T\sum_{z=1}^Zm_z}\\
\leq &(R+{P}_T)\sqrt{B}(\sqrt{n}T^{3/4}+\sqrt{dT})+\frac{R\sqrt{nB}T^{3/4}}{2}+RG\sqrt{dT}
\end{split}
\end{equation*}
where the last inequality is due to \eqref{block-level-delay-bound} and $K=n\sqrt{T}$.

% If $\sqrt{n}T^{3/4}\geq n\sqrt{T\beta^\prime}$, we have $\eta=R/(\sqrt{nB}T^{3/4})$, and it is easy to verify that
% \begin{equation*}
% \begin{split}
% \frac{R^2+R{P}_T}{\eta}+\frac{\eta nBT^{3/2}}{2}+\eta n^2TG\beta^\prime\sqrt{B}\leq \sqrt{n}T^{3/4}\left((3R+P_T)\sqrt{B}\right).
% \end{split}
% \end{equation*}
% Otherwise, if $\sqrt{n}T^{3/4}< n\sqrt{T\beta^\prime}$, we have $\eta=R/(n\sqrt{T\beta^\prime B})$, and it is easy to verify that
% \begin{equation*}
% \begin{split}
% \frac{R^2+R{P}_T}{\eta}+\frac{\eta nBT^{3/2}}{2}+\eta n^2TG\beta^\prime\sqrt{B}\leq& n\sqrt{T\beta^\prime}\left((3R+P_T)\sqrt{B}\right)\\
% \leq &\sqrt{dT}\left((3R+P_T)\sqrt{B}\right)
% \end{split}
% \end{equation*}
% where the last inequality is due to \eqref{block-level-delay-bound}, $Z=T/K$, and $K=n\sqrt{T}$.

Finally, combining the above two inequalities, we have
\begin{equation*}
   \E\left[\sum_{t=1}^Tf_t(\hat{\x}_t^\eta)-\sum_{t=1}^Tf_t(\u_t)\right]=O\left((\sqrt{n}T^{3/4}+\sqrt{dT})(P_T+1)\right).
\end{equation*}

\subsection{Proof of Theorems \ref{thm3-bandit} and \ref{thm4-bandit}}
Following the proof of Theorems \ref{thm1-bandit} and \ref{thm2-bandit}, let $\hat{f}_{t,\delta}(\x)$ denote the $\delta$-smoothed version of $f_t(\x)$, let $q_z=(z-1)K+1$, and let $\hat{\u}_t=(1-\delta/r)\mathbf{u}_t$ denote the shrunk comparator. We first notice that \eqref{thm2-bandit-eq1} and \eqref{thm2-bandit-eq2} in the proof of Theorems \ref{thm1-bandit} and \ref{thm2-bandit} also hold when $\hat{\x}_t^\eta$ and $\x_z^\eta$ are replaced with $\hat{\x}_t$ and $\x_z$ in Algorithm \ref{Mild-BGD}, respectively. Therefore, following \eqref{thm2-bandit-eq1} and \eqref{thm2-bandit-eq2} and combining Lemma \ref{lem1-thm2-bandit}, it is not hard to verify that
\begin{equation}
\label{thm3-bandit-eq1}
\begin{split}
&\sum_{t=1}^Tf_t(\hat{\x}_t)-\sum_{t=1}^Tf_t(\u_t)\\
% \leq &\sum_{z=1}^Z\sum_{t=q_z}^{q_{z+1}-1}(\hat{f}_{t,\delta}({\x}_z)-\hat{f}_{t,\delta}(\hat{\u}_{q_z}))+3\delta GT+\frac{\delta GRT}{r}+\sum_{z=1}^Z\sum_{t=q_z}^{q_{z+1}-1}G\|\u_{q_z}-\u_t\|_2\\
\leq &\sum_{z=1}^Z\sum_{t=q_z}^{q_{z+1}-1}\langle\nabla\hat{f}_{t,\delta}({\x}_z),{\x}_z-\hat{\u}_{q_z}\rangle+3\delta GT+\frac{\delta GRT}{r}+G\sqrt{2(K-1)RTP_T}.
\end{split}
\end{equation}
% where the last inequality is due to Lemma \ref{lem1-thm2-bandit}.
For brevity, let $\nabla_z=\sum_{t=q_z}^{q_{z+1}-1}\nabla\hat{f}_{t,\delta}({\x}_z)$. From \eqref{thm3-bandit-eq1}, for any $\eta\in\H$, it is easy to verify that
\begin{equation}
\label{thm3-bandit-eq2}
\begin{split}
\sum_{t=1}^Tf_t(\hat{\x}_t)-\sum_{t=1}^Tf_t(\u_t)\leq &{\sum_{z=1}^Z\langle\nabla_z,{\x}_z-{\x}_z^\eta\rangle}+{\sum_{z=1}^Z\langle\nabla_z,{\x}_z^\eta-\hat{\u}_{q_z}\rangle}\\
&+3\delta GT+\frac{\delta GRT}{r}+G\sqrt{2(K-1)RTP_T}.
\end{split}
\end{equation}
For the first term in the right side of \eqref{thm3-bandit-eq2}, we can derive the following lemma.
\begin{lem}
\label{thm3-lem1}
Let $m_z=z-1-\sum_{i=1}^{z-1}|\A_i|$ and $\nabla_z=\sum_{t=q_z}^{q_{z+1}-1}\nabla\hat{f}_{t,\delta}({\x}_z)$, where $q_z=(z-1)K+1$. Under Assumptions \ref{assum1} and \ref{assum-b2}, for any $\eta\in\H$, Algorithm \ref{Mild-BGD} ensures that
\begin{equation}
\label{lem10-eq-dis}
\begin{split}
\mathbb{E}\left[\sum_{z=1}^Z\langle\nabla_z,{\x}_z-{\x}_z^\eta\rangle\right]
\leq\frac{1}{\alpha}\ln\frac{1}{w_1^\eta}+\alpha\sqrt{\gamma}KGR^2\sum_{z=1}^Z m_{z}+\alpha \gamma ZR^2
\end{split}
\end{equation}
where $\gamma=K({nM}/{\delta})^2+(K^2-K)G^2$.
\end{lem}

To proceed the analysis, as discussed in Remark \ref{rem5:label}, we virtually introduce some additional blocks $Z+1,\dots,\lceil (T+d-1)/K\rceil$ after the final block $Z$ in Algorithm \ref{Mild-BGD}, which ensures that all the queried loss values are fully exploited to generate $\y_1^\eta,\dots,\y_{Z+1}^\eta,\forall\eta\in\H$. Moreover, we still denote the block index of the $\tau$-th utilized cumulative estimated gradients by $c_\tau$, i.e., setting $c_\tau=k$ at the beginning of step $15$ in Algorithm \ref{Mild-BGD}. Then, it is easy to verify that \eqref{thm2-bandit-eq-final-pre} in the proof of Theorems \ref{thm1-bandit} and \ref{thm2-bandit} also holds when $\sum_{t=q_z}^{q_{z+1}-1} \nabla \hat{f}_{t,\delta}({\x}_z^\eta)$ is replaced with $\nabla_z$, i.e.,
\begin{equation}
\label{thm3-bandit-eq3-pre}
\begin{split}
\mathbb{E}\left[{\sum_{z=1}^Z\langle\nabla_z,{\x}_z^\eta-\hat{\u}_{q_z}\rangle}\right]\leq& \frac{R^2+R\hat{P}_Z}{\eta}+\frac{\eta \gamma Z}{2}+\sqrt{\gamma}\eta KG\sum_{z=1}^Zm_z+\sum_{z=1}^ZKG\|\hat{\u}_{q_z}-\hat{\u}_{q_{c_z}}\|_2\\
\leq& \frac{R^2+R{P}_T}{\eta}+\frac{\eta \gamma Z}{2}+\sqrt{\gamma}\eta KG\sum_{z=1}^Zm_z+\sum_{z=1}^ZKG\|{\u}_{q_z}-{\u}_{q_{c_z}}\|_2
\end{split}
\end{equation}
where $\hat{P}_Z=\sum_{z=2}^Z\|\hat{\u}_{q_z}-\hat{\u}_{q_{z-1}}\|_2$, and the last inequality is due to \eqref{convert-path} and \eqref{convert-from-block-path}.

In the following, we derive the dynamic regret bounds in Theorems \ref{thm3-bandit} and \ref{thm4-bandit}, respectively. Note that if only considering a fixed path-length $P_T$, one can improve the dynamic regret bound in Corollary \ref{cor2-bandit:label} with the following learning rate
\begin{equation}
\label{ideal-rate-bandit}
\eta_\ast=\frac{\sqrt{R(2R+P_T)}}{\max\left\{\sqrt{n}T^{3/4},n\sqrt{T\beta^\prime}\right\}\sqrt{G^2+M^2}}.
\end{equation}
Due to Assumption \ref{assum-b2}, it is easy to verify that $0\leq P_T\leq 2(T-1)R$, and thus the set $\H=\{\eta_i=2^{i}c |i=1,\dots,N\}$ in Theorem \ref{thm3-bandit} ensures that
\begin{equation*}
\eta_1=\frac{\sqrt{2}R}{\max\left\{\sqrt{n}T^{3/4},n\sqrt{T\beta^\prime}\right\}\sqrt{G^2+M^2}}\leq\eta_\ast\leq \frac{\sqrt{2T}R}{\max\left\{\sqrt{n}T^{3/4},n\sqrt{T\beta^\prime}\right\}\sqrt{G^2+M^2}}\leq\eta_{|\H|}
\end{equation*}
because of $N=\lceil(1/2)\log_2 T \rceil+1$ and $c= R/(\sqrt{2(G^2+M^2)}\max\left\{\sqrt{n}T^{3/4},n\sqrt{T\beta^\prime}\right\})$. 

Therefore, for any possible value of $P_T$, there must exist a learning rate $\eta_k\in\H$ such that \eqref{eq3-thm1} holds, where $k=\lfloor\log_2\sqrt{(P_T+2R)/(2R)}\rfloor+1$.
%  Combining \eqref{thm3-bandit-eq3-pre} with \eqref{convert-path}, \eqref{convert-from-block-path}, and \eqref{cor2-bandit-pre-fina}, we have
% \begin{equation}
% \label{thm3-bandit-eq3}
% \begin{split}
% \mathbb{E}\left[{\sum_{z=1}^Z\langle\nabla_z,{\x}_z^\eta-\hat{\u}_{q_z}\rangle}\right]\leq \frac{R^2+RP_T}{\eta}+\frac{\eta \gamma Z}{2}+\sqrt{\gamma}\eta KG\sum_{z=1}^Zm_z+2G\sqrt{{dTRP_T}}.
% \end{split}
% \end{equation} 
Combining \eqref{thm3-bandit-eq2} with \eqref{lem10-eq-dis}, \eqref{thm3-bandit-eq3-pre}, \eqref{cor2-bandit-pre-fina}, and \eqref{eq3-thm1}, it is not hard to verify that
\begin{equation*}
\begin{split}
&\E\left[\sum_{t=1}^Tf_t(\hat{\x}_t)-\sum_{t=1}^Tf_t(\u_t)\right]\\
\leq &\frac{1}{\alpha}\ln\frac{1}{w_1^{\eta_k}}+\alpha\sqrt{\gamma}KGR^2\sum_{z=1}^Z m_{z}+\alpha \gamma ZR^2+3\delta GT+\frac{\delta GRT}{r}+G\sqrt{2(K-1)RTP_T}\\
&+\frac{2(R^2+RP_T)}{\eta_\ast}+\frac{\eta_\ast \gamma Z}{2}+\sqrt{\gamma}\eta_\ast KG\sum_{z=1}^Zm_z+2G\sqrt{{dTRP_T}}.
\end{split}
\end{equation*}
For brevity, let $B=G^2+M^2$. By substituting \eqref{ideal-rate-bandit}, $\delta=\sqrt{n}/{T^{1/4}}$, $K=n\sqrt{T}$, $\alpha= \sqrt{2}c/R^2$, and $(1/w_1^{\eta_k})\leq(k+1)^2$ into the above inequality, we can complete the proof of Theorem \ref{thm3-bandit} as below
\begin{equation*}
% \label{thm3-bandit-eq4}
\begin{split}
&\E\left[\sum_{t=1}^Tf_t(\hat{\x}_t)-\sum_{t=1}^Tf_t(\u_t)\right]\\
\leq &2R\ln(k+1)\sqrt{B}\left(\sqrt{n}T^{3/4}+n\sqrt{T\beta^\prime}\right)+nGR\sqrt{T\beta^\prime}+\left(\sqrt{B}R+3G+\frac{GR}{r}\right)\sqrt{n}T^{3/4}\\
&+G\sqrt{2nRP_T}T^{3/4}+2\sqrt{BR(2R+P_T)}\left(\sqrt{n}T^{3/4}+n\sqrt{T\beta^\prime}\right)+\frac{\sqrt{nBR(2R+P_T)}T^{3/4}}{2}\\
&+nG\sqrt{R(2R+P_T)T\beta^\prime}+2G\sqrt{{dTRP_T}}\\
=&O\left((\sqrt{n}T^{3/4}+\sqrt{dT})\sqrt{P_T+1}\right)
\end{split}
\end{equation*}
where $\beta^\prime=\sum_{z=1}^Z m_{z}$ and the equality is due to $n\sqrt{T\beta^\prime}\leq \sqrt{dT}$ derived from \eqref{block-level-delay-bound} and $K=n\sqrt{T}$.
 % \[c= \frac{R}{\sqrt{2(G^2+M^2)}\max\left\{\sqrt{n}T^{3/4},n\sqrt{T\beta^\prime}\right\}},\beta^\prime=\sum_{z=1}^Zm_z,~\text{and}~\lceil(1/2)\log_2 T \rceil+1.\]
% \[\H=\left\{\eta_i=\left.\frac{2^{i-1}c}{\max\left\{\sqrt{n}T^{3/4},n\sqrt{T\beta^\prime}\right\}}\right|i=1,\dots,N\right\}, \alpha=\frac{1/\left(R\sqrt{G^2+M^2}\right)}{\max\left\{\sqrt{n}T^{3/4},n\sqrt{T\beta^\prime}\right\}}\]

Next, to prove Theorem \ref{thm4-bandit}, we 
define the following learning rate
\begin{equation}
\label{ideal-rate-bandit2}
\eta_\ast=\frac{\sqrt{R(2R+P_T)}}{M\max\left\{\sqrt{n}T^{3/4},(n\beta T)^{1/3}\right\}}
\end{equation}
which can improve the dynamic regret bound in Corollary \ref{cor1-bandit:label} for a specific path-length $P_T$. It is also easy to verify that $\eta_1\leq \eta_\ast\leq \eta_{|\H|}$ for the set $\H=\{\eta_i=2^{i}c |i=1,\dots,N\}$ in Theorem \ref{thm4-bandit}, where $c= R/(\sqrt{2}M\max\left\{\sqrt{n}T^{3/4},(n\beta T)^{1/3}\right\})$. Therefore, for any possible value of $P_T$, there must also exist a learning rate $\eta_k\in\H$ such that \eqref{eq3-thm1} holds, where $k=\lfloor\log_2\sqrt{(P_T+2R)/(2R)}\rfloor+1$. Moreover, we notice that in the special case with $K=1$ and Assumption \ref{assum4}, the last term in the right side of \eqref{thm3-bandit-eq3-pre} actually vanishes. 
% \begin{equation}
% \label{thm3-bandit-eq3-post}
% \begin{split}
% \mathbb{E}\left[{\sum_{z=1}^Z\langle\nabla_z,{\x}_z^\eta-\hat{\u}_{q_z}\rangle}\right]\leq \frac{R^2+RP_T}{\eta}+\frac{\eta \gamma Z}{2}+\sqrt{\gamma}\eta KG\sum_{z=1}^Zm_z.
% \end{split}
% \end{equation}
As a result, under Assumption \ref{assum4}, combining \eqref{thm3-bandit-eq2} with \eqref{lem10-eq-dis}, \eqref{thm3-bandit-eq3-pre}, $K=1$, and \eqref{eq3-thm1}, we have
\begin{equation*}
% \label{thm4-bandit-eq4}
\begin{split}
\E\left[\sum_{t=1}^Tf_t(\hat{\x}_t)-\sum_{t=1}^Tf_t(\u_t)\right]\leq &\frac{1}{\alpha}\ln\frac{1}{w_1^{\eta_k}}+\frac{\alpha nMGR^2}{\delta}\sum_{t=1}^T m_{t}+\alpha \frac{n^2M^2R^2T}{\delta^2}+3\delta GT\\
&+\frac{\delta GRT}{r}+\frac{2(R^2+RP_T)}{\eta_\ast}+\frac{\eta_\ast n^2M^2 T}{2\delta^2}+\frac{nM\eta_\ast G}{\delta}\sum_{t=1}^Tm_t.
\end{split}
\end{equation*}
By substituting \eqref{ideal-rate-bandit2}, $\delta=\max\left\{\sqrt{n}/{T^{1/4}},(n\beta)^{1/3}/{T^{2/3}}\right\}$, $\alpha= \sqrt{2}c/R^2$, and $(1/w_1^{\eta_k})\leq(k+1)^2$ into the above inequality, we can complete the proof of Theorem \ref{thm4-bandit} as below
\begin{equation*}
% \label{thm3-bandit-eq4}
\begin{split}
&\E\left[\sum_{t=1}^Tf_t(\hat{\x}_t)-\sum_{t=1}^Tf_t(\u_t)\right]\\
\leq &2RM\ln(k+1)\left(\sqrt{n}T^{3/4}+(n\beta T)^{1/3}\right)+GR(n\beta T)^{1/3}+MR\sqrt{n}T^{3/4}\\
&+\left(3G+\frac{GR}{r}+2\sqrt{R(2R+P_T)}M\right)\left(\sqrt{n}T^{3/4}+(n\beta T)^{1/3}\right)\\
&+\frac{\sqrt{nR(2R+P_T)}MT^{3/4}}{2}+\sqrt{R(2R+P_T)}G(nT\beta)^{1/3}\\
=&O\left((\sqrt{n}T^{3/4}+(n\bar{d})^{1/3}T^{2/3})\sqrt{P_T+1}\right)
\end{split}
\end{equation*}
where $\beta=\sum_{t=1}^Tm_t$ and the equality is due to $(n\beta T)^{1/3}\leq (n\bar{d})^{1/3}T^{2/3}$ derived from \eqref{thm01-eq6pre}.

\subsection{Proof of Lemma \ref{thm3-lem1}}
This proof is similar to the proof of Lemma \ref{lem1}, but requires some specific modifications due to the blocking update mechanism and the bandit feedback. Specifically, we first introduce some notations: $L_z^\eta=\sum_{i=1}^z\sum_{k\in\A_i}\ell_k(\x_k^\eta), 
\c_z=(L_z^\eta)_{\eta\in\H}\in\mathbb{R}^{|\H|}, \w_{z}=(w_z^\eta)_{\eta\in\H}\in\mathbb{R}^{|\H|}$, $\tilde{L}_z^\eta=\sum_{i=1}^z\ell_i(\x_i^\eta)$, and $\tilde{\c}_z=(\tilde{L}_z^\eta)_{\eta\in\H}\in\mathbb{R}^{|\H|}$. Following \eqref{lem1-eq1} in the proof of Lemma \ref{lem1}, it is easy to verify that
% \begin{align*}
% &L_z^\eta=\sum_{i=1}^z\sum_{k\in\A_i}\ell_k(\x_k^\eta), \tilde{L}_z^\eta=\sum_{i=1}^z\ell_i(\x_i^\eta),\\
% &\c_z=(L_z^\eta)_{\eta\in\H}\in\mathbb{R}^{|\H|}, \tilde{\c}_z=(\tilde{L}_z^\eta)_{\eta\in\H}\in\mathbb{R}^{|\H|}, \w_{z}=(w_z^\eta)_{\eta\in\H}\in\mathbb{R}^{|\H|}.
% \end{align*}
% According to Algorithm \ref{Mild-BGD}, for any $z\geq1$, it is easy to verify that
% \begin{equation*}
% %\label{lem1-eq1}
% w_{z+1}^\eta=\frac{w_z^\eta e^{-\alpha\sum_{k\in\mathcal{A}_z}\ell_{k}(\x_k^\eta)}}{\sum_{\mu\in\H}w_z^{\mu}e^{-\alpha\sum_{k\in\mathcal{A}_z}\ell_{k}(\x_k^\mu)}}=\frac{w_1^\eta e^{-\alpha L_z^\eta}}{\sum_{\mu\in\H}w_1^{\mu}e^{-\alpha L_z^\mu}}.
% \end{equation*}
% Combining with the above notations, we have
\begin{equation*}
%\label{lem1-eq1}
\w_{z+1}=\argmin_{\w\in\Delta}\left\langle-\frac{1}{\alpha}\ln(\w_1)+\c_z,\w\right\rangle+\frac{1}{\alpha}\mathcal{R}(\w)
\end{equation*}
where $\Delta=\left\{\w\succeq\ze|\langle\w,\mathbf{1}\rangle=1\right\}$ and $\mathcal{R}(\w)=\sum_{i}w_i\ln w_i$. Moreover, let $\tilde{\w}_{1}=(\tilde{w}_{1}^\eta)_{\eta\in\H}=\w_{1}$ and 
\begin{equation}
\label{thm3-lem1-eq0}
\tilde{\w}_{z+1}=(\tilde{w}_{z+1}^\eta)_{\eta\in\H}=\argmin_{\w\in\Delta}\left\langle-\frac{1}{\alpha}\ln(\w_1)+\tilde{\c}_z,\w\right\rangle+\frac{1}{\alpha}\mathcal{R}(\w)
\end{equation}
which results in an ideal weighted average of these experts, i.e., $\tilde{\x}_z=\sum_{\eta\in\H}\tilde{w}_z^\eta\x_z^\eta$. We notice that
\begin{equation}
\label{thm3-lem1-eq1}
\begin{split}
\E\left[\sum_{z=1}^Z\langle\nabla_z,{\x}_z-{\x}_z^\eta\rangle\right]=&\E\left[\sum_{z=1}^Z\langle\nabla_z,{\x}_z-\tilde{\x}_z\rangle+\sum_{z=1}^Z\langle\nabla_z,\tilde{\x}_z-{\x}_z^\eta\rangle\right]\\
\leq &\E\left[\sum_{z=1}^ZKG\|{\x}_z-\tilde{\x}_z\|_2+\sum_{z=1}^Z\langle\nabla_z,\tilde{\x}_z-{\x}_z^\eta\rangle\right]\\
= &\E\left[\sum_{z=1}^ZKG\|{\x}_z-\tilde{\x}_z\|_2\right]+\E\left[\sum_{z=1}^Z\langle\g_z,\tilde{\x}_z-{\x}_z^\eta\rangle\right]
\end{split}
\end{equation}
where the inequality is due to the third property of Lemma \ref{estimator-lem2} and the definition of $\nabla_z$, and the last equality is due to $\g_z=\sum_{t=q_z}^{q_{z+1}-1}\frac{n}{\delta} f_t(\x_z+\delta\s_t) \mathbf{s}_t$ and Lemma \ref{estimator-lem1}.

Let $\U_z=[z]\setminus\cup_{i\in[z]}\A_i$.  For any $z>1$, combining Lemma \ref{stability} with the fact that $\mathcal{R}(\w)$ is $1$-strongly convex with respect to $\|\cdot\|_1$, we have
\begin{equation}
\label{thm3-lem1-eq2}
\begin{split}
\left\|\tilde{\x}_z-\x_z\right\|_2=&\left\|\sum_{\eta\in\H}(\tilde{w}_z^\eta-w_z^\eta)\x_z^\eta\right\|_2\leq\sum_{\eta\in\H}|\tilde{w}_z^\eta-w_z^\eta|\left\|\x_z^\eta\right\|_2\leq R\|\tilde{\w}_z-\w_z\|_1\\
\leq&\alpha R\|\tilde{\c}_{z-1}-\c_{z-1}\|_\infty\leq\alpha R\max_{\eta\in\H}\left|\sum_{k\in\U_{z-1}}\ell_k(\x_k^\eta)\right|\leq \alpha R^2\sum_{k\in\U_{z-1}}\|\g_k\|_2
\end{split}
\end{equation}
where the second inequality is due to Assumption \ref{assum-b2}, and the last inequality is due to 
\begin{equation}
\label{thm3-lem1-eq3}
|\ell_k(\x_k^\eta)|=|\langle\g_k,\x_k^\eta\rangle|\leq \|\g_k\|_2\|\x_k^\eta\|_2\leq \|\g_k\|_2R,\forall k\in[Z],\eta\in\H.
\end{equation}
Note that the upper bound provided by Lemma \ref{block-gradient} also holds for $\g_z$ in Algorithm \ref{Mild-BGD}. Therefore, combining \eqref{thm3-lem1-eq2} with Lemma \ref{block-gradient} and Jensen's inequality, we have
\begin{equation}
\label{thm3-lem1-eq4}
\begin{split}
\E\left[\left\|\tilde{\x}_z-\x_z\right\|_2\right]\leq \alpha |\U_{z-1}|\sqrt{\gamma}R^2=\alpha\sqrt{\gamma}R^2m_{z-1}
\end{split}
\end{equation}
where $\gamma=K({nM}/{\delta})^2+(K^2-K)G^2$, and the equality is due to the definition of $\U_{z-1}$.

Then, we proceed to analyze the last term in the right side of \eqref{thm3-lem1-eq1}. 
% Although the expected result is similar to \eqref{lem1-eq10} in the proof of Lemma \ref{lem1}, we cannot directly follow the corresponding analysis, because the Hoeffding's inequality \citep{hoffeding} in Lemma \ref{hoffeding} does not support us to exploit the expected upper bound of $\g_z$. To address this limitation, we introduce the following lemma.
Let $\mathbf{d}_z=(\ell_z(\x_z^\eta))_{\eta\in\H}\in\mathbb{R}^{|\H|}$ and $h_z(\x)=\langle\x,\mathbf{d}_z\rangle$ for any $z\in[Z]$. Additionally, let $h_0(\x)=(\mathcal{R}(\w)-\langle\ln(\w_1),\w\rangle)/{\alpha}$ and $\Delta^\prime=\left\{\mathbf{e}_1,\dots,\mathbf{e}_{|\H|}\right\}\subset\Delta$, where $\mathbf{e}_i\in\mathbb{R}^{|\H|}$ and its $i$-th element equals to $1$. Combining with the previous definitions, it is not hard to verify that
\begin{equation}
\label{thm3-lem1-eq5}
\begin{split}
\sum_{z=1}^Z\langle\g_z,\tilde{\x}_z-{\x}_z^\eta\rangle=&\sum_{z=1}^Z\left(\langle\tilde{\w}_{z},\mathbf{d}_z\rangle-\ell_z(\x_z^\eta)\right)\leq \sum_{z=1}^Zh_z(\tilde{\w}_{z})-\sum_{z=1}^Zh_z(\mathbf{e}_{\eta})\\
\leq&\sum_{z=1}^Z\left(h_z(\tilde{\w}_{z+1})-h_z(\mathbf{e}_{\eta})\right)+\sum_{z=1}^Z\left(h_z(\tilde{\w}_{z})-h_z(\tilde{\w}_{z+1})\right).
\end{split}
\end{equation}
Moreover, combining with \eqref{thm3-lem1-eq0} and $\tilde{\w}_1=\w_1$, we also have $\tilde{\w}_{z+1}=\argmin_{\w\in\Delta}\sum_{i=0}^{z}h_i(\w)$ for any $z\geq 0$. Therefore, we can apply Lemma \ref{lem-ftl} to derive that
\begin{equation}
\label{thm3-lem1-eq6}
\begin{split}
\sum_{z=1}^Z\left(h_z(\tilde{\w}_{z+1})-h_z(\mathbf{e}_{\eta})\right)\leq h_0(\mathbf{e}_{\eta})-h_0(\tilde{\w}_{1})=\frac{\langle\ln(\mathbf{e}_{\eta})-\ln(\w_1),\mathbf{e}_{\eta}\rangle}{\alpha}=\frac{1}{\alpha}\ln\frac{1}{w_1^\eta}.
\end{split}
\end{equation}
For the last term in the right side of \eqref{thm3-lem1-eq5}, we have
\begin{equation}
\label{thm3-lem1-eq7}
\begin{split}
&h_z(\tilde{\w}_{z})-h_z(\tilde{\w}_{z+1})=\langle\tilde{\w}_{z}-\tilde{\w}_{z+1},\mathbf{d}_z\rangle\leq\|\tilde{\w}_{z}-\tilde{\w}_{z+1}\|_1\|\mathbf{d}_z\|_{\infty}\\
\leq&\alpha \|\tilde{\c}_{z-1}-\tilde{\c}_{z}\|_\infty\|\mathbf{d}_z\|_{\infty}=\alpha \|\mathbf{d}_z\|_{\infty}^2=\alpha \left(\max_{\eta\in\H}\left|\ell_z(\x_z^\eta)\right|\right)^2\leq \alpha R^2\|\g_z\|_2^2
\end{split}
\end{equation}
where the first inequality is due to Hölder's inequality, the second inequality is due to Lemma \ref{stability} and \eqref{thm3-lem1-eq0}, and the last inequality is due to \eqref{thm3-lem1-eq3}. By substituting \eqref{thm3-lem1-eq6} and \eqref{thm3-lem1-eq7} into \eqref{thm3-lem1-eq5}, we have
\begin{equation}
\label{thm3-lem1-eq8}
\begin{split}
\sum_{z=1}^Z\langle\g_z,\tilde{\x}_z-{\x}_z^\eta\rangle\leq \frac{1}{\alpha}\ln\frac{1}{w_1^\eta}+\alpha R^2\|\g_z\|_2^2.
\end{split}
\end{equation}
Finally, by substituting \eqref{thm3-lem1-eq4} and \eqref{thm3-lem1-eq8} into \eqref{thm3-lem1-eq1}, we have 
\begin{equation*}
\begin{split}
\E\left[\sum_{z=1}^Z\langle\nabla_z,{\x}_z-{\x}_z^\eta\rangle\right]\leq& \alpha\sqrt{\gamma}KGR^2\sum_{z=1}^Z m_{z}+\frac{1}{\alpha}\ln\frac{1}{w_1^\eta}+\sum_{z=1}^Z\E\left[\alpha R^2\|\g_z\|_2^2\right]\\
\leq &\alpha\sqrt{\gamma}KGR^2\sum_{z=1}^Z m_{z}+\frac{1}{\alpha}\ln\frac{1}{w_1^\eta}+\alpha \gamma ZR^2
\end{split}
\end{equation*}
where the last inequality is due to Lemma \ref{block-gradient}.

\section{Conclusion}
In this paper, we study the dynamic regret of OCO with arbitrary delays, and consider both the full-information and bandit settings. First, for the full-information setting, we propose a novel algorithm called Mild-OGD, whose dynamic regret can be automatically bounded by $O(\sqrt{\bar{d}T(P_T+1)})$ under the in-order assumption and $O(\sqrt{dT(P_T+1)})$ in the worst case. A matching lower bound has also been provided to show the optimality of Mild-OGD in the worst case. Second, we propose a variant of Mild-OGD to handle the bandit setting, namely Mild-BGD. It achieves a dynamic regret bound of $O((\sqrt{n}T^{3/4}+(n\bar{d})^{1/3}T^{2/3})\sqrt{P_T+1})$ under the in-order assumption, and a dynamic regret bound of $O((\sqrt{n}T^{3/4}+\sqrt{dT})\sqrt{P_T+1})$ in the worst case. These bounds can match the existing dynamic regret bound in the non-delayed bandit setting under a relatively large amount of delay.
%, i.e., $\bar{d}=O(\sqrt{n}T^{1/4})$ and $d=O(n\sqrt{T})$ respectively.
\appendix
\section{Mild-OGD with the Doubling Trick}
\label{appendixA}
As discussed in Remark \ref{rem4}, our Mild-OGD needs to set $\alpha$ and $\eta_1,\dots,\eta_N$
% \begin{equation}
% \label{LRs}
% \alpha=\frac{1}{GD\sqrt{\sum_{t=1}^Tm_t}}\text{~and~}\eta_i=\frac{2^{i-1}D}{G\sqrt{\sum_{t=1}^Tm_t}}
% \end{equation}
based on on the following value \[
\sum_{t=1}^T(m_t+1)=\sum_{t=1}^T\left(t-\sum_{i=1}^{t-1}|\F_i|\right).\]
However, it may be not available beforehand. Fortunately, the doubling trick \citep{P_book_2006} provides a way to adaptively estimate this value. Specifically, it will divide the total $T$ rounds into several epochs, and run a new instance of Mild-OGD per epoch. Let $s_v$ and $s_{v+1}-1$ respectively denote the start round and the end round of the $v$-th epoch. In this way, to tune the parameters for the $v$-th epoch, we only need to know the following value
\[
\sum_{t=s_v}^{s_{v+1}-1}\left(t+1-s_v-\sum_{i=s_v}^{t-1}|\F_i^{s_v}|\right)
\]
where $\F_i^{s_v}=\{k\in[s_v,i]|k+d_k-1=i\}$.

According to the doubling trick, we can estimate this value to be $2^v$ at the start round $s_v$ of the $v$-th epoch. Then, for any $t>s_v$, we first judge whether the estimate is still valid, i.e.,
\[
\sum_{j=s_v}^{t}\left(j+1-s_v-\sum_{i=s_v}^{j-1}|\F_i^{s_v}|\right)\leq 2^v
\]
where the left side can be calculated at the beginning of round $t$. If the answer is positive, the round $t$ is still assigned to the $v$-th epoch, and the instance of Mild-OGD keeps running. Otherwise, the round $t$ is set as the start round of the $(v+1)$-th epoch, and a new instance of Mild-OGD is activated. Note that in the start round of the $(v+1)$-th epoch, the new estimate must be valid, since $t=s_{v+1}$ and
\[
\sum_{j=s_{v+1}}^{t}\left(j+1-s_{v+1}-\sum_{i=s_{v+1}}^{j-1}|\F_i^{s_{v+1}}|\right)=1\leq 2^{v+1}.
\]
Moreover, it is natural to set $s_1=1$. 

Based on the above discussions, the detailed procedure of Mild-OGD with the doubling trick is summarized in Algorithm \ref{Ader-Meta-DT}, where the meta-algorithm and expert-algorithm are integrated into one for brevity.
\begin{remark}
\emph{
To facilitate presentation, in step 2 of Algorithm \ref{Ader-Meta-DT}, each $\eta_i$ in $\H$ only contains the constant part that does not depend on the value of $\sum_{t=1}^T(m_t+1)$. The \emph{if} condition in step $5$ of Algorithm \ref{Ader-Meta-DT} determines whether to start a new epoch. Moreover, as shown in step $6$ of Algorithm \ref{Ader-Meta-DT}, in the start of each epoch, we need to reinitialize the weight and the decision of each expert $E^\eta$. For the update of weights and decisions, we notice that in step $9$ of Algorithm \ref{Ader-Meta-DT} only $\{\nabla f_k(\x_k)|k\in\F_t^{s_v}\}$ is received, instead of $\{\nabla f_k(\x_k)|k\in\F_t\}$, because we do not need to utilize gradients queried before the current epoch $v$. In step $10$ of Algorithm \ref{Ader-Meta-DT}, we update the weight during each epoch $v$ by using $\alpha_v=1/(GR2^{v/2})$ computed based on the estimation of $\sum_{t=1}^T(m_t+1)$, i.e.,
\begin{equation}
\label{weight-double}
    w_{t+1}^{\eta}=\frac{w_t^{\eta} e^{-\alpha_v\sum_{k\in\F_t^{s_v}}\ell_{k}(\x_k^{\eta})}}{\sum_{\mu\in\H}w_t^{\mu}e^{-\alpha_v\sum_{k\in\F_t^{s_v}}\ell_{k}(\x_k^\mu)}}
\end{equation}
where $\ell_k(\x)=\langle\nabla f_k(\x_k),\x\rangle$. Meanwhile, in step $12$ of Algorithm \ref{Ader-Meta-DT}, each expert $E^\eta$ updates its decision with the learning rate derived by combining the constant $\eta\in\H$ with the estimation of $\sum_{t=1}^T(m_t+1)$, i.e.,
\begin{equation}
\label{decision-double}
\y_{\tau+1}^\eta=\argmin_{\x\in\K}\left\|\x-\left(\y_{\tau}^\eta-\frac{\eta }{2^{v/2}}\nabla f_k(\x_k)\right)\right\|_2^2.
\end{equation}
}
\end{remark}

We have the following theorem, which can recover the dynamic regret bound in Theorem \ref{thm2} up to a constant factor.
\begin{algorithm}[t]
\caption{Mild-OGD with the Doubling Trick}
\label{Ader-Meta-DT}
\begin{algorithmic}[1]
\STATE \textbf{Initialization:} set $v=1$, $s_v=1$, and $\tau=1$
\STATE Let $N=\lceil(1/2)\log_2T\rceil+1$ and $\H=\left\{\eta_i=\left.R2^{i}/(\sqrt{2}G)\right|i=1,\dots,N\right\}$
\STATE Activate experts $\left\{E^{\eta}|\eta\in\H\right\}$ by initializing $\y_1^\eta=\ze,\forall\eta\in\H$, and $w_{1}^{\eta_i}=\frac{|\H|+1}{i(i+1)|\H|},\forall i\in[N]$
\FOR{$t=1,\dots,T$}
\IF{$\sum_{j=s_v}^t\left(j+1-s_v-\sum_{i=s_v}^{j-1}|\F_i^{s_v}|\right)>2^{v}$}
\STATE Set $v=v+1$, $s_v=t$, $\tau=1$, $\y_1^\eta=\ze,\forall\eta\in\H$, and $w_{t}^{\eta_i}=\frac{|\H|+1}{i(i+1)|\H|},\forall i\in[N]$
\ENDIF
\STATE Play the decision $\x_t=\sum_{\eta\in\H}w_{t}^\eta\x_t^\eta$, where $\x_t^\eta=\y_{\tau}^\eta$ for each expert $E^\eta$ 
\STATE Query $\nabla f_t(\x_t)$, and receive $\{\nabla f_k(\x_k)|k\in\F_t^{s_v}\}$ 
%and send $\{\nabla f_k(\x_k)|k\in\F_t^{s_v}\}$ to each expert $E^\eta$, where $\F_t^{s_v}=\{k\in[s_v,t]|k+d_k-1=t\}$
\STATE Update the weight of each expert $E^\eta$ as in \eqref{weight-double}, where $\alpha_v=1/(GR2^{v/2})$
% where $\ell_k(\x)=\langle\nabla f_k(\x_k),\x\rangle$ and $\alpha_v=\frac{1}{GR2^{v/2}}$
\FOR{$k\in\F_t$ (in the ascending order)}
\STATE Update the decision of each expert $E^\eta$ as in \eqref{decision-double}, and set $\tau=\tau+1$
\ENDFOR
\ENDFOR
\end{algorithmic}
\end{algorithm}
\begin{theorem}
\label{thm4}
Under Assumptions \ref{assum1} and \ref{assum2}, for any comparator sequence $\u_1,\dots,\u_T\in\K$, Algorithm \ref{Ader-Meta-DT} ensures
\[
\sum_{t=1}^Tf_t(\x_t)-\sum_{t=1}^Tf_t(\u_t)\leq\frac{2\left(\left(2\ln\left(k+1\right)+1\right)GR+3G\sqrt{R(2R+P_T)}\right)\sqrt{\bar{d}T}}{\sqrt{2}-1}+C
\]
where $C$ is defined in (\ref{bigC}) and $k=\lfloor\log_2\sqrt{(P_T+2R)/2R}\rfloor+1$.
% \[
% %\label{bigC}
% C=\left\{
% \begin{aligned}
% 0,&~\text{if Assumption \ref{assum4} also holds;}\\
% G\sqrt{2dTDP_T},&~\text{otherwise.}
% \end{aligned}
% \right.
% \]
\end{theorem}
\begin{proof}
For any $s_v$ and $j\geq s_v$, we notice that the value of $j-s_v-\sum_{i=s_v}^{j-1}|\F_i^{s_v}|$ counts the number of gradients that have been queried over interval $[s_v,j-1]$, but still not arrive at the end of round $j-1$. Moreover, the gradient $\nabla f_j(\x_j)$ will only be counted as an unreceived gradient in $d_j-1$ rounds. Therefore, for any $s_v\leq t\leq T$, it is easy to verify that
\begin{align*}
\sum_{j=s_v}^{t}\left(j+1-s_v-\sum_{i=s_v}^{j-1}|\F_i^{s_v}|\right)\leq\sum_{j=s_v}^td_j\leq\sum_{j=1}^Td_j=\bar{d}T.
\end{align*}
For brevity, let $V$ denote the final $v$ of Algorithm \ref{Ader-Meta-DT}, and let $S=\bar{d}T,s_{V+1}=T+1$. It is easy to verify that
\begin{equation}
\label{eq0-DT}
V\leq1+\log_2 S.
\end{equation}
Now, we consider the dynamic regret of Algorithm \ref{Ader-Meta-DT} over the interval $[s_v,s_{v+1}-1]$ for each $v\in[V]$. Let $P_{s_v}^{s_{v+1}}=\sum_{t=s_{v}+1}^{s_{v+1}-1}\|\u_t-\u_{t-1}\|_2$ and $\eta_\ast^v=\sqrt{R(2R+P_{s_v}^{s_{v+1}})}/G$. From Assumption \ref{assum2}, we have 
\begin{equation*}
\eta_1=\frac{\sqrt{2}R}{G}\leq\eta_\ast^v\leq \sqrt{\frac{2R^2(s_{v+1}-s_{v})}{G^2}}\leq\frac{\sqrt{2T}R}{G}\leq\eta_{|\H|}.
\end{equation*}
Therefore, for any possible value of $P_{s_v}^{s_{v+1}}$, there must exist a constant $\eta_{k_v}\in\H$ such that
\begin{equation}
\label{eq3-thm4}
\eta_{k_v}\leq\eta_\ast^v\leq2\eta_{k_v}
\end{equation}
where $k_v=\lfloor\log_2\sqrt{(2R+P_{s_v}^{s_{v+1}})/(2R)}\rfloor+1\leq k$.

Note that each expert $E^\eta$ over the interval $[s_v,s_{v+1}-1]$ actually runs Algorithm \ref{Ader-Expert} with the learning rate $\frac{\eta}{2^{v/2}}$ to handle the surrogate losses $\ell_{s_v}(\x),\dots,\ell_{s_{v+1}-1}(\x)$, where each gradient $\nabla \ell_t(\x_t^\eta)=\nabla f_t(\x_t)$ is delayed to the end of round $t+d_t-1$ for $t\in[s_v,s_{v+1}-1]$. Therefore, combining Theorem \ref{thm1} with Lemma \ref{lem1_DOGD_main}, under Assumptions \ref{assum1} and \ref{assum2}, we have
\begin{equation}
\label{eq4-thm4-pre}
\begin{split}
&\sum_{t=s_v}^{s_{v+1}-1}\left(\ell_t(\x_t^{\eta_{k_v}})-\ell_t(\u_t)\right)\\
\leq&
\frac{2^{v/2}\left(R^2+RP_{s_v}^{s_{v+1}}\right)}{\eta_{k_v}}+\frac{\eta_{k_v}G^2}{2^{v/2}}\sum_{j=s_v}^{s_{v+1}-1}\left(j+1-s_v-\sum_{i=s_v}^{j-1}|\F_i^{s_v}|\right)+C_v
\end{split}
\end{equation}
where $C_v$ is defined as
\begin{equation*}
C_v=\left\{
\begin{aligned}
&0,~\text{if Assumption \ref{assum4} also holds;}\\
&\min\left\{2(s_{v+1}-s_v)GR,2dGP_{s_v}^{s_{v+1}}\right\},~\text{otherwise.}
\end{aligned}
\right.
\end{equation*}
Moreover, according to Algorithm \ref{Ader-Meta-DT}, we have
\begin{equation}
\label{DT-condition}
\sum_{j=s_v}^{s_{v+1}-1}\left(j+1-s_v-\sum_{i=s_v}^{j-1}|\F_i^{s_v}|\right)\leq 2^v.
\end{equation}
By substituting \eqref{DT-condition} into \eqref{eq4-thm4-pre}, we have
% \begin{equation}
% \label{eq2-DT}
% \begin{split}
% \sum_{t=s_v}^{s_{v+1}-1}f_t(\x_t)-\sum_{t=s_v}^{s_{v+1}-1}f_t(\u_t)
% \leq&\frac{D^2+D\sum_{t=s_v+1}^{s_{v+1}-1}\|\u_t-\u_{t-1}\|_2}{\eta_v}+{\eta_v G^2}2^v+C_v\\
% =&G2^{v/2}\left(2D+\sum_{t=s_v+1}^{s_{v+1}-1}\|\u_t-\u_{t-1}\|_2\right)+C_v\\
% \leq&G2^{v/2}\left(2D+P_T\right)+C_v.
% \end{split}
% \end{equation}
\begin{equation}
\label{eq4-thm4}
\begin{split}
\sum_{t=s_v}^{s_{v+1}-1}\left(\ell_t(\x_t^{\eta_{k_v}})-\ell_t(\u_t)\right)
\leq&
\frac{2^{v/2}(R^2+RP_{s_v}^{s_{v+1}})}{\eta_{k_v}}+\eta_{k_v}G^22^{v/2}+C_v\\
\leq&
3G\sqrt{2^{v}R(2R+P_{s_v}^{s_{v+1}})}+C_v\\
\leq&3G\sqrt{2^{v}R(2R+P_T)}+C_v
\end{split}
\end{equation}
where the second inequality is due to (\ref{eq3-thm4}) and the definition of $\eta_\ast^v$.

Next, it is also easy to verify that Algorithm \ref{Ader-Meta-DT} actually starts or restarts Algorithm \ref{Ader-Meta} with the learning rate of $\alpha_v$ at round $s_v$, which ends at round $s_{v+1}-1$. By using Lemma \ref{lem1} with $(1/w_{s_v}^{\eta_{k_v}})\leq(k_v+1)^2$, (\ref{DT-condition}), and $\alpha_v=1/(GR2^{v/2})$, under Assumptions \ref{assum1} and \ref{assum2},
%\[w_1^{\eta_k}=\frac{|\H|+1}{k(k+1)|\H|}\geq\frac{1}{k(k+1)}\geq\frac{1}{(k+1)^2}\]
we have
\begin{equation}
\label{eq5-thm4}
\begin{split}
\sum_{t=s_v}^{s_{v+1}-1}\left(\ell_t\left(\x_t\right)-\ell_t(\x_t^{\eta_{k_v}})\right)\leq&\frac{2}{\alpha_v}\ln(k_v+1)+\alpha_v 2^vG^2R^2\\
\leq&\left(2\ln\left(k+1\right)+1\right)2^{v/2}GR.
\end{split}
\end{equation}
Combining (\ref{eq4-thm4}) and (\ref{eq5-thm4}), we have
\begin{equation}
\label{eq6-thm4}
\begin{split}
&\sum_{t=s_v}^{s_{v+1}-1}f_t(\x_t)-\sum_{t=s_v}^{s_{v+1}-1}f_t(\u_t)\leq\sum_{t=s_v}^{s_{v+1}-1}\ell_t(\x_t)-\sum_{t=s_v}^{s_{v+1}-1}\ell_t(\u_t)\\
=&\sum_{t=s_v}^{s_{v+1}-1}\ell_t(\x_t)-\sum_{t=s_v}^{s_{v+1}-1}\ell_t(\x_t^{\eta_{k_v}})+\sum_{t=s_v}^{s_{v+1}-1}\ell_t(\x_t^{\eta_{k_v}})-\sum_{t=s_v}^{s_{v+1}-1}\ell_t(\u_t)\\
\leq&\left(2\ln\left(k+1\right)+1\right)2^{v/2}GR+3G\sqrt{2^{v}R(2R+P_T)}+C_v.
\end{split}
\end{equation}
According to (\ref{eq0-DT}) and \eqref{eq6-thm4}, we further have
\begin{equation*}
% \label{eq7-thm4}
\begin{split}
&\sum_{t=1}^Tf_t(\x_t)-\sum_{t=1}^Tf_t(\u_t)=\sum_{v=1}^V\left(\sum_{t=s_v}^{s_{v+1}-1}f_t(\x_t)-\sum_{t=s_v}^{s_{v+1}-1}f_t(\u_t)\right)\\
\leq&\left(\left(2\ln\left(k+1\right)+1\right)GR+3G\sqrt{R(2R+P_T)}\right)\sum_{v=1}^V2^{v/2}+\sum_{v=1}^VC_v\\
=&\left(\left(2\ln\left(k+1\right)+1\right)GR+3G\sqrt{R(2R+P_T)}\right)\frac{\sqrt{2}(2^{V/2}-1)}{\sqrt{2}-1}+\sum_{v=1}^VC_v\\
\leq&\frac{2\left(\left(2\ln\left(k+1\right)+1\right)GR+3G\sqrt{R(2R+P_T)}\right)\sqrt{S}}{\sqrt{2}-1}+\sum_{v=1}^VC_v.
%\leq&\frac{G\left(2D+P_T\right)\sqrt{2S}}{\sqrt{2}-1}+C
\end{split}
\end{equation*}
Finally, this proof is completed by combining the above inequality with $S=\bar{d}T$ and $\sum_{v=1}^VC_v\leq C$, where the latter is derived from
\begin{align*}
\sum_{v=1}^V\min\left\{2(s_{v+1}-s_v)GR,2dGP_{s_v}^{s_{v+1}}\right\}\leq&\min\left\{\sum_{v=1}^V2(s_{v+1}-s_v)GR,\sum_{v=1}^V2dGP_{s_v}^{s_{v+1}}\right\}\\
\leq&\min\left\{2TGR,2dGP_T\right\}.
\end{align*}
\end{proof}
\section{Additional Results of Algorithm \ref{BDBGD} for a Specific Path-length}
\label{appendixB}
Corollaries \ref{cor1-bandit:label} and \ref{cor2-bandit:label} have provided dynamic regret bounds for Algorithm \ref{BDBGD} with respect to any comparator sequence. Interestingly, if only focusing on a specific $P_T$, these bounds actually can be further improved by exploiting $P_T$ to tune the parameters $\delta$ and $\eta$. Specifically, we can derive the following corollaries from Theorems \ref{thm1-bandit} and \ref{thm2-bandit}.
\begin{cor}
\label{cor3-bandit:label}
Let $c= R^2+RP_T$, and $\beta=\sum_{t=1}^Tm_t$, where $m_t=t-1-\sum_{i=1}^{t-1}|\F_i|$.~Under Assumptions \ref{assum1}, \ref{assum-b1}, \ref{assum-b2}, and \ref{assum4}, for a specific sequence of comparators $\u_1,\dots,\u_T\in\K$, Algorithm \ref{BDBGD} with $\eta=\min\left\{c^{3/4}/(M\sqrt{n}T^{3/4}),c^{2/3}/(M(n\beta T)^{1/3})\right\}$, $\delta=\max\left\{\sqrt{n}c^{1/4}/{T^{1/4}},(nc\beta)^{1/3}/{T^{2/3}}\right\}\in(0,r)$, and $K=1$ ensures
\begin{equation*}
   \E\left[\sum_{t=1}^Tf_t(\hat{\x}_t^\eta)-\sum_{t=1}^Tf_t(\u_t)\right]=O\left(\sqrt{n}T^{3/4}(P_T+1)^{1/4}+(n\bar{d})^{1/3}T^{2/3}(P_T+1)^{1/3}\right).
   % \\
   % \leq &\left(\sqrt{n}T^{3/4}+(n\bar{d})^{1/3}T^{2/3}\right)\left((2R+P_T)M+GR+3G+\frac{GR}{r}\right)+C.
\end{equation*}
\end{cor}
\begin{proof}
Due to the values of $\eta$, $\delta$, $\beta$, and $c$, it is easy to verify that
\begin{equation*}
% \label{cor1-bandit-eq1-rw}
\begin{split}
&\frac{R^2+RP_T}{\eta}+\frac{\eta Tn^2M^2}{2\delta^2}+\frac{\eta nGM}{\delta}\sum_{t=1}^Tm_t\\
\leq&M\left(\sqrt{n}c^{1/4}T^{3/4}+(nc\beta T)^{1/3}\right)+\frac{\sqrt{n}c^{1/4}T^{3/4}M}{2}+\left(nc\beta T\right)^{1/3}G.
\end{split}
\end{equation*}
Moreover, due to the value of $\delta$ and Lemma \ref{lem1_DOGD_main}, under Assumption \ref{assum4}, we have
\begin{equation*}
\begin{split}
\sum_{t=1}^TG\left\|{\u}_t-{\u}_{c_t}\right\|_2+
3\delta GT+\frac{\delta GRT}{r}\leq&\left(3G+\frac{GR}{r}\right)\left(\sqrt{n}c^{1/4}T^{3/4}+\left(nc\beta T\right)^{1/3}\right).
\end{split}
\end{equation*}
Finally, we can complete this proof by combining Theorem \ref{thm1-bandit} with the above two inequalities and $(nc\beta T)^{1/3}\leq (nc\bar{d})^{1/3}T^{2/3}$ derived from \eqref{thm01-eq6pre}.

% , we have
% \[
%    \E\left[\sum_{t=1}^Tf_t(\hat{\x}_t^\eta)-\sum_{t=1}^Tf_t(\u_t)\right]=O\left((\sqrt{n}T^{3/4}+(n\bar{d})^{1/3}T^{2/3})(P_T+1)+C\right).
% \]

% We first consider the case with $\beta\leq \sqrt{n}T^{5/4}c^{-1/4}$, in which $\delta=\sqrt{n}c^{1/4}/{T^{1/4}}$ and $\eta=c^{3/4}/(M\sqrt{n}T^{3/4})$. By substituting these parameters into Theorem \ref{thm1-bandit} and combining Assumption \ref{assum4}, it is easy to verify that
% \begin{equation}
% \label{cor3-bandit-eq1}
% \begin{split}
% \mathbb{E}\left[\sum_{t=1}^Tf_t(\hat{\x}_t^\eta)-\sum_{t=1}^Tf_t(\u_t)\right]\leq \sqrt{n}T^{3/4}(R^2+RP_T)^{1/4}\left(2M+G+3G+\frac{GR}{r}\right).
% \end{split}
% \end{equation}
% Otherwise, we have $\beta> \sqrt{n}T^{5/4}c^{-1/4}$, $\delta=(nc\beta)^{1/3}/{T^{2/3}}$, and $\eta=c^{2/3}/(M(n\beta T)^{1/3})$.~By substituting these parameters into Theorem \ref{thm1-bandit} and combining Assumption \ref{assum4} and \eqref{thm01-eq6pre}, it is easy to verify that
% \begin{equation}
% \label{cor3-bandit-eq2}
% \begin{split}
% \mathbb{E}\left[\sum_{t=1}^Tf_t(\hat{\x}_t^\eta)-\sum_{t=1}^Tf_t(\u_t)\right]\leq (n\bar{d})^{1/3}T^{2/3}(R^2+RP_T)^{1/3}\left(2M+G+3G+\frac{GR}{r}\right).
% \end{split}
% \end{equation}
% Finally, we can complete this proof by combining \eqref{cor3-bandit-eq1} and \eqref{cor3-bandit-eq2}.

\end{proof}

\begin{cor}
\label{cor4-bandit:label}
Let $c= R^2+RP_T$, and $\beta^\prime=\sum_{z=1}^Zm_z$, where $m_z=z-1-\sum_{i=1}^{z-1}|\mathcal{A}_i|$. Under Assumptions \ref{assum1}, \ref{assum-b1}, and \ref{assum-b2}, for a specific sequence of comparators $\u_1,\dots,\u_T\in\K$, Algorithm \ref{BDBGD} with $\delta=\sqrt{n}c^{1/4}/{T^{1/4}}\in(0,r)$, $\eta=\min\left\{c^{3/4}/(\sqrt{G^2+M^2}\sqrt{n}T^{3/4}),c/(\sqrt{G^2+M^2}n\sqrt{T\beta^\prime})\right\}$, and $K=n\sqrt{T/c}$ ensures
\[
    \E\left[\sum_{t=1}^Tf_t(\hat{\x}_t^\eta)-\sum_{t=1}^Tf_t(\u_t)\right]=O\left(\sqrt{n}T^{3/4}(P_T+1)^{1/4}+\sqrt{dT(P_T+1)}\right).
\]
\end{cor}
\begin{proof}
For brevity, let $B=G^2+M^2$. Combining Theorem \ref{thm2-bandit} with \eqref{cor2-bandit-pre-fina}, $\delta=\sqrt{n}c^{1/4}/{T^{1/4}}$, $K=n\sqrt{T/c}$, and the values of $c$ and $\beta^\prime$, we have
\begin{equation*}
\begin{split}
\mathbb{E}\left[\sum_{t=1}^Tf_t(\hat{\x}_t^\eta)-\sum_{t=1}^Tf_t(\u_t)\right]
\leq& \frac{c}{\eta}+\frac{\eta nBT^{3/2}}{2\sqrt{c}}+\frac{\eta n^2TG\beta^\prime\sqrt{B}}{c}+2G\sqrt{dcT}\\
&+\left(3G+\frac{GR}{r}\right)\sqrt{n}c^{1/4}T^{3/4}+G\sqrt{2n}c^{1/4}T^{3/4}.
\end{split}
\end{equation*}
Moreover, due to the value of $\eta$, it is not hard to verify that
\begin{equation*}
\begin{split}
&\frac{c}{\eta}+\frac{\eta nBT^{3/2}}{2\sqrt{c}}+\frac{\eta n^2TG\beta^\prime\sqrt{B}}{c}\\
\leq&\sqrt{B}\left(c^{1/4}\sqrt{n}T^{3/4}+n\sqrt{T\beta^\prime}\right)+\frac{c^{1/4}T^{3/4}\sqrt{nB}}{2}+nG\sqrt{T\beta^\prime}\\
\leq&\sqrt{B}\left(c^{1/4}\sqrt{n}T^{3/4}+\sqrt{dcT}\right)+\frac{c^{1/4}T^{3/4}\sqrt{nB}}{2}+G\sqrt{dcT}
%=O\left(c^{1/4}T^{3/4}+\sqrt{dcT}\right)
\end{split}
\end{equation*}
where the last equality is due to $n\sqrt{T\beta^\prime}\leq \sqrt{dcT}$ derived from \eqref{block-level-delay-bound} and $K=n\sqrt{T/c}$.

Finally, it is easy to complete this proof by combining the above two inequalities and $c= R^2+RP_T$.

% If $\beta^\prime\leq \sqrt{cT}/n$, we have $\eta=c^{3/4}/(\sqrt{nB}T^{3/4})$, and it is easy to verify that
% \begin{equation*}
% \begin{split}
% \frac{R^2+R{P}_T}{\eta}+\frac{\eta nBT^{3/2}}{2\sqrt{c}}+\frac{\eta n^2TG\beta^\prime\sqrt{B}}{c}\leq \sqrt{n}T^{3/4}(R^2+RP_T)^{1/4}(2\sqrt{B}+G).
% \end{split}
% \end{equation*}
% Otherwise, we have $\beta^\prime> \sqrt{cT}/n$ and $\eta=c/(n\sqrt{BT\beta^\prime})$, and it is easy to verify that
% \begin{equation*}
% \begin{split}
% \frac{R^2+R{P}_T}{\eta}+\frac{\eta nBT^{3/2}}{2\sqrt{c}}+\frac{\eta n^2TG\beta^\prime\sqrt{B}}{c}\leq& n\sqrt{T\beta^\prime}(2\sqrt{B}+G)\\
% \leq &\sqrt{dT(R^2+RP_T)}(2\sqrt{B}+G)
% \end{split}
% \end{equation*}
% where the last inequality is due to \eqref{block-level-delay-bound}, $Z=T/K$, and $K=n\sqrt{T/c}$. Finally, this proof can be completed by combining the above three inequalities.
\end{proof}
\begin{remark}
\emph{
For a specific $P_T$, the two bounds in Corollaries \ref{cor3-bandit:label} and \ref{cor4-bandit:label} actually are even tighter than those bounds in Theorems \ref{thm4-bandit} and \ref{thm3-bandit}. Thus, a more ambitious goal is to extended these improved results to any comparator sequence. Note that our Mild-OGD and Mild-BGD have shown the power of the two-level framework for adaptively tuning the parameter $\eta$ according to $P_T$. At first glance, it seems that this goal can be achieved by using the two-level framework to adaptively tune both the parameters $\eta$ and $\delta$. However, we want to emphasize that this solution is impractical, because the bandit setting does not allow us to run multiple experts with different values of $\delta$. This limitation explains why we only achieve the results in Theorems \ref{thm3-bandit} and \ref{thm4-bandit}, and implies that more advanced techniques may be required, which will be investigated in the future.
}
\end{remark}

% \vskip 0.2in
\bibliography{ref}

\end{document}